\documentclass{article}
\usepackage{iclr2027_conference,times}

\usepackage{amsmath,amssymb,amsthm,mathtools,bm}
\usepackage{booktabs,tabularx,multirow,array}
\usepackage{graphicx,subcaption}
\usepackage{comment}
\usepackage{placeins}
\usepackage{float}
\usepackage{needspace}
\usepackage{algorithm,algpseudocode}
\usepackage{enumitem}
\usepackage{xcolor}
\usepackage{microtype}
\usepackage{tikz}
\usetikzlibrary{arrows.meta,positioning,fit}
\usepackage{hyperref}
\hypersetup{colorlinks=true,citecolor=blue!55!black,linkcolor=blue!55!black,urlcolor=blue!55!black}
\usepackage{url}
\usepackage{xspace}
\usepackage{kotex}

\newcommand{\pa}{\operatorname{Pa}}
\newcommand{\src}{\operatorname{Src}}
\newcommand{\nd}{\operatorname{Nd}}

\newcommand{\Var}{\operatorname{Var}}
\newcommand{\eps}{\varepsilon}
\newcommand{\E}{\mathbb E}
\newcommand{\Pp}{\mathbb P}

\newtheorem{theorem}{Theorem}
\newtheorem{lemma}{Lemma}
\newtheorem{proposition}{Proposition}
\newtheorem{corollary}{Corollary}
\newtheorem{assumption}{Assumption}
\theoremstyle{definition}
\newtheorem{definition}{Definition}
\theoremstyle{remark}

\setlist[itemize]{leftmargin=1.4em,itemsep=0.1em,topsep=0.25em}
\setlist[enumerate]{leftmargin=1.55em,itemsep=0.1em,topsep=0.25em}
\renewcommand{\arraystretch}{1.08}
\title{Local Fisher Information Enables Sparse Causal Discovery}

\author{%
    \parbox{\dimexpr\textwidth-2\tabcolsep\relax}{\centering
    Byeongguk Kang\textsuperscript{1}\quad
    Donghyeon Lee\textsuperscript{2}\quad
    Euijong Song\textsuperscript{2}\quad
    Gunwoong Park\textsuperscript{1,2,3}\\[0.5em]
    {\normalfont\small
    \textsuperscript{1}\,Interdisciplinary Program in Artificial Intelligence\\
    \textsuperscript{2}\,Department of Statistics\\
    \textsuperscript{3}\,Institute for Data Innovation in Science\\
    Seoul National University, Korea\\[0.25em]
    \texttt{gw.park23@gmail.com}}}%
}

\iclrfinalcopy
\begin{document}\maketitle
\fancyhead{}
\raggedbottom

\begin{abstract}
Sparse causal discovery calls for methods that exploit graph structure without estimating high-dimensional densities. We introduce Fisher Information Completion Search (FiCS), a source-first algorithm for additive noise models that uses one local Fisher score for both ordering and parent selection. Under regularity and nonconstant-parent conditions, we prove that a node's local Fisher information equals the noise Fisher information exactly when the conditioning set contains all parents, provided that it contains no descendants. This Fisher parent completion identifies the parent set as the unique minimal Fisher completion. With a maximum conditioning set size $q$ at least the maximum indegree $d$, population FiCS queries marginals of at most $q+1$ variables and recovers the true directed acyclic graph under a positive ordering margin. Bounded conditioning also has a population advantage: reducing $q$ toward $d$ cannot decrease, and can strictly increase, the ordering margin. A growing non-Gaussian family separates local Fisher selection from conditional-variance and leaf-first Fisher ordering. For the regularized kernel Stein estimator, we establish high-dimensional DAG consistency under
\(q\{1+\log(p/q)\}+\log p=o(n)\), uniform Fisher separation, local approximation, and compatible ridge and parent penalty parameters. Experiments show the strongest gains when $n$ is small relative to $p$, quantify the effect of the conditioning size, and demonstrate competitive reference-graph recovery on three real-data benchmarks.
\end{abstract}

\section{Introduction}

Graph sparsity is central to statistically efficient high-dimensional directed acyclic graph (DAG) learning~\citep{kalisch2007estimating,nandy2018high,aragam2019globally}. For identifiable subclasses of linear additive noise models (ANMs), an indegree bound reduces structure recovery to local regressions or subset comparisons. This yields minimax-optimal dependence on indegree and ambient dimension for equal-variance Gaussian models~\citep{gao2022optimal} and, under separate assumptions, sub-Gaussian non-Gaussian models~\citep{oh2025optimal}. Can sparsity similarly reduce distribution-sensitive recovery of both the order and parent sets in nonlinear ANMs to bounded-dimensional estimation?

Source-first search is naturally aligned with sparsity: a node becomes eligible once its parents have been selected. Conditional-variance criteria extend this strategy beyond linear-Gaussian models~\citep{park2020variance}. Prominent density-derivative methods instead exploit distributional shape through leaf-first queries on the joint distribution of the remaining variables~\citep{rolland2022score,montagna2023das,montagna2023nogam,xu2024caps,chen2026fico}. Graph sparsity alone does not make these ordering queries low-dimensional. We resolve this sparsity--query mismatch through Fisher parent completion, which characterizes the parent set as the minimal Fisher completion for source-first ordering and parent recovery.

Our key result is \emph{Fisher parent completion}: among nondescendant conditioning sets, local Fisher information attains the noise Fisher information if and only if all parents are included. Conditioning on the parents fixes the structural signal; omitting a parent yields a positive parent-omission margin. This yields Fisher Information Completion Search (FiCS), which uses the same local score for source-first ordering and size-penalized parent selection. Node ordering additionally requires a positive ordering margin because noise Fisher information may differ across nodes.

\needspace{4\baselineskip}
Our contributions are threefold.
\begin{itemize}[leftmargin=1.5em,itemsep=0.35em]
\item \textbf{Fisher parent completion.}
We characterize the parent set as the unique minimal Fisher completion, yielding FiCS, which uses one local score for both source-first ordering and parent recovery from marginals of dimension at most $q+1$ when $q\ge d$ and the ordering margin is positive.

\item \textbf{Two benefits of bounded conditioning.} Restricting $q$ limits estimation dimension and, once $q\ge d$, reducing $q$ cannot decrease and can strictly increase the population ordering margin. A growing non-Gaussian family satisfies uniform Fisher separation while the CaPS leaf rule and conditional-variance ordering make incorrect first population selections.

\item \textbf{High-dimensional consistency.}
We establish DAG consistency using a uniform local score error bound and derive the corresponding error bound for the regularized kernel Stein estimator, with complexity
\(q\{1+\log(p/q)\}+\log p\).
Consistency holds under uniform Fisher separation, local approximation, and compatible ridge and parent penalty parameters.
\end{itemize}

\section{Related Work}

\paragraph{Source-first recovery from conditional uncertainty.}
In linear equal-variance models, \citet{chen2019equal} introduced indegree-bounded conditional-variance ordering. \citet[Algorithm~1, LearnDAG]{gao2022optimal} combines this ordering with parent recovery and establishes minimax-optimal sample complexity. Conditional-variance conditions also cover nonlinear ANMs with heterogeneous noise variances~\citep{park2020variance}. For jointly Gaussian variables, FiCS ordering reduces to bounded-subset conditional-variance ordering; Appendix~\ref{app:gaussian-specialization} gives the corresponding score identity and estimation bound. Fisher parent completion extends this local approach to distribution-sensitive recovery in nonlinear ANMs. Unlike CAM's additive regression model~\citep{buhlmann2014cam}, it permits structural functions that are not additive across individual parents.

\paragraph{Leaf-first recovery from density derivatives.}
SCORE, DAS, NoGAM, CaPS, and FICO infer leaf-first orderings from density scores, Hessians, or Fisher information~\citep{rolland2022score,montagna2023das,montagna2023nogam,xu2024caps,chen2026fico}. Under Stein integration conditions, the expected-curvature and Fisher statistics used by CaPS and FICO give equivalent population leaf rankings. Their queries use the remaining-variable distribution, whereas FiCS conditions on selected predecessors to identify completed parent mechanisms. Section~\ref{sec:ordering-regions} compares the resulting ordering rules; existing comparisons of sufficient identifiability conditions address a different question~\citep{park2026comparison}.

\paragraph{Other uses of locality.}
AdaScore extends score-based discovery across linear, nonlinear, and latent-variable models; HiTOC localizes leaf search to Markov-blanket subsets; SSTS reuses score geometry through Schur-complement updates~\citep{montagna2025adascore,sun2026hitoc,wu2026ssts}. FiCS instead bounds each queried marginal by the subset size and uses the same local score for ordering and parent selection. Section~\ref{sec:guarantees} analyzes regularized kernel Stein estimation uniformly over those local queries.

\section{ANM Identifiability via Fisher Parent Completion}
\label{sec:fisher_parent_completion}

\emph{Fisher parent completion} characterizes when a local conditioning set contains all parents. We use this principle to recover an ordering and parent sets, establish the population advantage of locality, and compare the resulting ordering regimes.

\subsection{Fisher Parent Completion in Additive Noise Models}

Let $G=(V,E)$ be a directed acyclic graph on $V=[p]:=\{1,\ldots,p\}$, let $\pa(i)$ denote the parents of node $i$, and let $d=\max_i|\pa(i)|$ be the maximum indegree. Let $\nd(i)$ denote the nondescendants of $i$, excluding $i$. We consider the additive noise model over $G$:
\begin{equation}
    \label{eq:anm}
    X_i=f_i(X_{\pa(i)})+\eps_i,\qquad \text{independent }  \eps_1, \ldots, \eps_p.
\end{equation}
The structural function $f_i$ may be nonlinear and need not be additive across its parent arguments.
We write $\E$, $\Var$, and $\Pp$ for expectation, variance, and probability, respectively.

For the Fisher parent completion result below, we assume that each noise density is everywhere positive and $C^1$, with finite location Fisher information, and that each $f_i$ is continuous and nonconstant in every parent coordinate. The remaining score-projection and integration-by-parts conditions are stated in Appendix~\ref{app:regularity}.

Let $s_{\varepsilon_i}(e):=\partial_e \log p_{\varepsilon_i}(e)$ and
$I_i^\varepsilon:=\mathbb{E}[s_{\varepsilon_i}(\varepsilon_i)^2]$ denote the
noise score and noise Fisher information, respectively. For $i\notin T$, let
$p_{i,T}$ denote the joint density of $(X_i,X_T)$, and define
\begin{equation}
I_i(T)
:=
\mathbb{E}\!\left[
\left\{\partial_{x_i}\log p(X_i\mid X_T)\right\}^2
\right],
\qquad
J_i(T)
:=
-I_i(T)
=
\mathbb{E}\!\left[
\partial_{x_i}^2 \log p_{i,T}(X_i,X_T)
\right].
\label{eq:local-fisher-score}
\end{equation}
We call $I_i(T)$ the local Fisher information and $J_i(T)$ the local Fisher
score. A nondescendant set $T$ satisfying $I_i(T)=I_i^\varepsilon$ is called a
Fisher completion for node $i$.

\begin{theorem}[Fisher parent completion]
\label{thm:parent-completion} 
Consider an ANM in Equation~\eqref{eq:anm}. Under the stated regularity and
nonconstant-parent conditions, for every node $i$ and every
$T\subseteq \operatorname{Nd}(i)$,
\begin{equation}
\Gamma_i(T)
:=
I_i^\varepsilon-I_i(T)
\ge 0,
\qquad
\Gamma_i(T)=0
\iff
\operatorname{Pa}(i)\subseteq T.
\label{eq:fisher-parent-completion}
\end{equation}
Consequently, $\operatorname{Pa}(i)$ is the unique minimal Fisher completion.
\end{theorem}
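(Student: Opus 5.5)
The plan is to express the conditional score of $X_i$ given a nondescendant set $T$ as a projection of the noise score, and then read off both claims from the Pythagorean identity for $L^2$ projections.

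\textbf{Step 1: the completed case.} Let $U=\pa(i)\cup T$. Since $T\subseteq\nd(i)$ and $\eps_i$ is independent of all nondescendants of $i$, we have $p(x_i\mid x_U)=p_{\eps_i}(x_i-f_i(x_{\pa(i)}))$. Hence
\[
\partial_{x_i}\log p(X_i\mid X_U)=s_{\eps_i}(\eps_i).
\]
If $\pa(i)\subseteq T$, then $U=T$ and $I_i(T)=I_i^\eps$, so $\Gamma_i(T)=0$.

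\textbf{Step 2: the projection identity.} For general $T$, write
\[
p(x_i\mid x_T)=\int p_{\eps_i}\bigl(x_i-f_i(x_{\pa(i)})\bigr)\,p\bigl(x_{\pa(i)\setminus T}\mid x_T\bigr)\,dx_{\pa(i)\setminus T}.
\]
Here the conditional law of the omitted parents given $X_T$ does not involve $x_i$, because $\eps_i$ is independent of $X_{\nd(i)}$. Differentiating under the integral, which is licensed by the score-projection and integration-by-parts conditions of Appendix~\ref{app:regularity}, gives
\[
\partial_{x_i}\log p(X_i\mid X_T)=\E\bigl[s_{\eps_i}(\eps_i)\mid X_i,X_T\bigr],
\]
the usual "score of a convolution is the conditional expectation of the score" identity. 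Write $S=s_{\eps_i}(\eps_i)$ and $\hat S=\E[S\mid X_i,X_T]$. Then
\[
\Gamma_i(T)=\E[S^2]-\E[\hat S^2]=\E\bigl[(S-\hat S)^2\bigr]\ge 0 .
\]

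\textbf{Step 3: strict positivity when a parent is omitted.} This is the main obstacle. If $\Gamma_i(T)=0$, then $S$ is $\sigma(X_i,X_T)$-measurable, i.e.
\[
s_{\eps_i}\bigl(X_i-f_i(X_{\pa(i)})\bigr)=g(X_i,X_T)\quad\text{a.s.}
\]
for some $g$. Take $j\in\pa(i)\setminus T$. The joint density of $(\eps_i,X_{\nd(i)})$ is positive (all noises are positive and the map to $X$ is triangular), so for fixed $(x_i,x_T)$ the omitted parents range over a set of positive conditional mass. By the nonconstant-parent condition, $e=x_i-f_i(x_{\pa(i)})$ then ranges over a nondegenerate interval, and continuity of $s_{\eps_i}$ and $f_i$ would force $s_{\eps_i}$ to be constant on that interval. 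Letting $x_i$ vary over $\mathbb R$ propagates this, so $s_{\eps_i}$ is constant on $\mathbb R$. But a constant score $c$ means $p_{\eps_i}\propto e^{ce}$, which is not integrable; this is the contradiction.

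The delicate points are the following:
\begin{itemize}
\item turning "a.s. equal" into a pointwise statement on open sets via continuity and positivity;
\item ensuring the omitted-parent values genuinely vary given $X_T$, which needs a positive-density or support assumption from the appendix;
\item handling the case where $f_i$ is nonconstant in $x_j$ only on part of its domain.
\end{itemize}
I would first try to write $S-\hat S$ as a nonzero function on a positive-measure set and conclude directly.

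\textbf{Step 4: uniqueness of the minimal completion.} By the equivalence, Fisher completions are exactly the nondescendant supersets of $\pa(i)$. Since $\pa(i)\subseteq\nd(i)$, the set $\pa(i)$ is itself a completion, and it is contained in every other completion. It is therefore the unique minimal one.
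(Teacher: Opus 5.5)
Your proposal is correct. Steps 1, 2 and 4 match the paper. There, the conditional score is written as the posterior mean of $s_{\varepsilon_i}$ under the mixture $p(x\mid z)=\int p_{\varepsilon_i}(x-m)\,\mu_z(dm)$, which gives $\Gamma_i(T)=\E[\Var\{s_{\varepsilon_i}(\varepsilon_i)\mid X_i,X_T\}]$.

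The genuine difference is Step 3. The paper splits the equality case into two independent claims.
\begin{enumerate}
\item $\Gamma_i(T)=0$ if and only if $M=f_i(X_{\pa(i)})$ is $\sigma(X_T)$-measurable. If $\mu_z$ is not a point mass, one finds $m\neq m'$ with $s_{\varepsilon_i}(x-m)=s_{\varepsilon_i}(x-m')$ for a.e.\ $x$. Then $p_{\varepsilon_i}(y+t)=p_{\varepsilon_i}(y)$ with $t=m-m'\neq0$, so the density would be periodic and hence nonintegrable.
\item A separate functional argument, using Fubini, the positive joint density and continuity of $f_i$, shows that $M$ is a function of $X_T$ if and only if $\pa(i)\subseteq T$.
\end{enumerate}

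You merge these two claims. Full support of the omitted parents plus continuity of $f_i$ makes $M$ sweep a nondegenerate interval given $(X_i,X_T)$. Hence $s_{\varepsilon_i}$, which is continuous because $p_{\varepsilon_i}$ is positive and $C^1$, is constant on intervals. Propagating this over $\mathbb R$ contradicts the integrability of $e^{ce}$.

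What each route buys:
\begin{itemize}
\item Your route is shorter and avoids regular conditional distributions and the periodicity argument.
\item However, it already uses continuity of $f_i$ and connectedness of the image of $f_i(x_A,\cdot)$ at the Fisher-information stage.
\item The paper's periodicity argument needs only that $\mu_z$ is not a point mass. Its intermediate characterization ($\Gamma_i(T)=0$ iff the structural shift is $X_T$-measurable) therefore holds without any continuity of $f_i$, and it isolates exactly where the nonconstant-parent condition enters.
\end{itemize}

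The three delicate points you list are all routine to close.
\begin{itemize}
\item The joint density $\prod_k p_{\varepsilon_k}(x_k-f_k(x_{\pa(k)}))$ is everywhere positive. So almost-sure equality holds Lebesgue-a.e., and Fubini plus continuity in $x_B$ turns it into pointwise constancy in $x_B$.
\item The set of $x_A$ for which $f_i(x_A,\cdot)$ is nonconstant is open by continuity and nonempty by the nonconstant-parent condition. It therefore has positive measure, so a Fubini-good $x_A$ exists in it.
\item For that $x_A$ and a.e.\ $x_i$, the intervals $x_i-f_i(x_A,\mathbb R^{|B|})$ all have the same positive length. Overlapping intervals from a dense set of good $x_i$ chain the constants together across all of $\mathbb R$.
\end{itemize}
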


Fisher parent completion has a simple projection interpretation:
\[
\Gamma_i(T)
=
\mathbb{E}\!\left[
\operatorname{Var}\!\left\{
s_{\varepsilon_i}(\varepsilon_i)\mid X_i,X_T
\right\}
\right]
=
\min_g
\mathbb{E}\!\left[
\left\{
s_{\varepsilon_i}(\varepsilon_i)-g(X_i,X_T)
\right\}^2
\right].
\]
Thus, $\Gamma_i(T)$ is the minimum prediction error of the noise score from
$(X_i,X_T)$. If $T$ contains all parents, then the noise score $s_{\varepsilon_i}(\varepsilon_i)$ is exactly recovered
by $(X_i,X_T)$ and the error is zero; if a parent is omitted, the error is
strictly positive. Figure~\ref{fig:fisher-parent-completion} illustrates this distinction through the conditional density of $X_i$ given $X_T$.

\begin{figure}[!ht]
\centering
\includegraphics[width=0.48\linewidth]{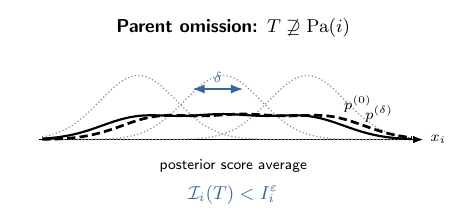}
\hfill
\includegraphics[width=0.48\linewidth]{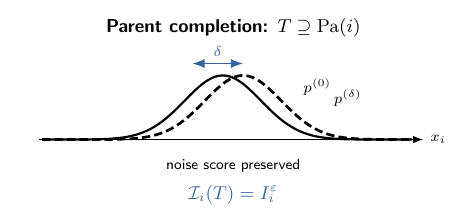}
\caption{
Fisher parent completion.
Black curves show the conditional density of $X_i$ given $X_T$, while gray
curves show translated noise densities $p_{\varepsilon_i}(x_i-m)$ for possible
values of $m=f_i(X_{\mathrm{Pa}(i)})$.
Parent omission averages over multiple translations, giving
$I_i(T)<I_i^\varepsilon$ (left); parent completion leaves a single translation,
giving $I_i(T)=I_i^\varepsilon$ (right).
Solid and dashed black curves denote the conditional density before and after
a small location shift $\delta$.
}
\label{fig:fisher-parent-completion}
\end{figure}

\subsection{Identifiability via Local Fisher Information}
\label{sec:forward-identification}

A topological ordering is a bijection $\pi:V\to[p]$ such that $\pi(i)<\pi(j)$ for every edge $(i,j)\in E$, where $\pi(i)$ denotes the position of node $i$. A set $S\subseteq V$ is \emph{ancestral} if it contains every ancestor of each of its nodes. For an ancestral set $S$ of already selected variables, write $\src(S):=\{i\notin S:\pa(i)\subseteq S\}$ for the current sources.

For an ancestral set $S$ and maximum conditioning set size $q$, let $q_S:=\min\{q,|S|\}$ and define the best local Fisher score
\begin{equation} \label{eq:order-score}
    Q_i^{(q)}(S):=\min_{\substack{T\subseteq S,~|T|=q_S}}J_i(T).
\end{equation}
By monotonicity of local Fisher information under additional conditioning, the minimum over subsets of cardinality $q_S$ in Equation~\eqref{eq:order-score} equals the minimum of $J_i(T)$ over all $T\subseteq S$ with $|T|\le q$. We call the rule that selects a minimizer of $Q_i^{(q)}(S)$ over $i\notin S$ \emph{local Fisher selection}.

Whenever a nonsource remains, define the ordering margin
\begin{equation}
g_q(S):=
\min_{i\notin S\cup\src(S)}Q_i^{(q)}(S)
-
\min_{s\in\src(S)}Q_s^{(q)}(S).
\label{eq:stagewise-gap}
\end{equation}
Differences in noise Fisher information across nodes mean that attainability of a Fisher completion alone does not rank the remaining candidates. The margin $g_q(S)$ measures whether the best attainable score among current sources is separated from that among nonsources.

\begin{theorem}[Sparse population DAG recovery]
\label{thm:population-recovery}
Consider an ANM in Equation~\eqref{eq:anm} satisfying the conditions of Theorem~\ref{thm:parent-completion}, and suppose that $q\ge d$. Then, at every ancestral set $S$, the following statements hold.
\begin{enumerate}[label=(\alph*),leftmargin=1.55em,itemsep=.35em]
    \item \textbf{Ordering.} For every $i\notin S$, we have that $Q_i^{(q)}(S)=-I_i^\eps+\min_{\substack{T\subseteq S,\,|T|=q_S}}\Gamma_i(T)$, and that
    \begin{equation}
        i\in\src(S)\quad\Longleftrightarrow\quad Q_i^{(q)}(S)=-I_i^\eps.
        \label{eq:source-certificate}
    \end{equation}
    If a nonsource remains and $g_q(S)>0$, every minimizer of $Q_i^{(q)}(S)$ over $i\notin S$ is a current source.
    \item \textbf{Parent recovery.} For every $i\in\src(S)$ and every $T\subseteq S$ satisfying $|T|\le q$,
    \begin{equation}
        T=\pa(i)\quad\Longleftrightarrow\quad J_i(T)=Q_i^{(q)}(S)\quad \text{and}\quad J_i(U)>Q_i^{(q)}(S)\ \ \text{for every}\ \ U\subsetneq T.
        \label{eq:parent-certificate}
    \end{equation}
\end{enumerate}
Consequently, if $g_q(S)>0$ at every ancestral set $S$ for which a nonsource remains, repeated application of \textup{(a)} and \textup{(b)} recovers the true DAG using marginals involving at most $q+1$ variables.
\end{theorem}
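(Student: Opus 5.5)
The proof reduces everything to Theorem~\ref{thm:parent-completion}. The one structural fact needed is this: if $S$ is ancestral and $i\notin S$, then $S\subseteq\nd(i)$. Indeed, if some $j\in S$ were a descendant of $i$, ancestrality would force $i\in S$, a contradiction. So every $T\subseteq S$ is a nondescendant set for $i$, and Theorem~\ref{thm:parent-completion} applies to all conditioning sets appearing in $Q_i^{(q)}(S)$.

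\textbf{Ordering.} For $T\subseteq S$ we have $J_i(T)=-I_i(T)=-I_i^\eps+\Gamma_i(T)$. Minimizing over $|T|=q_S$ gives the displayed formula, since $I_i^\eps$ does not depend on $T$. Because $\Gamma_i\ge 0$, we get $Q_i^{(q)}(S)=-I_i^\eps$ exactly when some $T\subseteq S$ with $|T|=q_S$ has $\Gamma_i(T)=0$, that is, when $\pa(i)\subseteq T$ for such a $T$.

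If $i\in\src(S)$, then $\pa(i)\subseteq S$ and $|\pa(i)|\le d\le q$, and also $|\pa(i)|\le|S|$, so $|\pa(i)|\le q_S$. We can therefore pad $\pa(i)$ with elements of $S$ up to size $q_S$ to obtain such a $T$. The minimum is attained because there are finitely many subsets. Conversely, if $i$ is not a source, some parent lies outside $S$, so no $T\subseteq S$ contains $\pa(i)$. Then every $\Gamma_i(T)>0$, and by finiteness the minimum is also strictly positive. This proves \eqref{eq:source-certificate}.

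The minimizer claim follows directly from the definition of $g_q(S)$. If $g_q(S)>0$, every nonsource has $Q$-value strictly above the smallest source $Q$-value, so no nonsource can be a minimizer over $i\notin S$.

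\textbf{Parent recovery.} Fix $i\in\src(S)$. By (a), $Q_i^{(q)}(S)=-I_i^\eps$, so the condition $J_i(T)=Q_i^{(q)}(S)$ is equivalent to $\Gamma_i(T)=0$, which is equivalent to $\pa(i)\subseteq T$. The condition $J_i(U)>Q$ is likewise equivalent to $\pa(i)\not\subseteq U$.

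For ($\Rightarrow$), take $T=\pa(i)$, which satisfies $|T|\le q$. Then $\Gamma_i(T)=0$, and every proper subset $U$ misses a parent, so $J_i(U)>Q$.

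For ($\Leftarrow$), the first condition gives $\pa(i)\subseteq T$. If the inclusion were strict, then $U=\pa(i)\subsetneq T$ would satisfy $J_i(U)=Q$, contradicting the second condition. Hence $T=\pa(i)$.

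The edge case $\pa(i)=\emptyset$ works out: $T=\emptyset$ has no proper subsets, and $J_i(\emptyset)=-I_i^\eps$ by Theorem~\ref{thm:parent-completion}. The monotonicity remark after Equation~\eqref{eq:order-score} is not even needed here, because the certificate refers to $Q$ only through its value $-I_i^\eps$.

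\textbf{Consequence.} We argue by induction starting from $S=\emptyset$, which is ancestral. While a nonsource remains, (a) guarantees that the selected node is a source. When no nonsource remains, every remaining node is a source, so any choice is a source.

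Adding a source $i$ to $S$ keeps $S$ ancestral, because $\pa(i)\subseteq S$ and $S$ was already closed under ancestors. At each step, (b) identifies $\pa(i)$ exactly among subsets of $S$ of size at most $q$, and this search is finite. After $p$ steps the order is topological and every parent set is correct, so the true DAG is recovered.

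Each $J_i(T)$ with $|T|\le q$ depends only on the joint density of $(X_i,X_T)$, which involves at most $q+1$ variables. All the probabilistic content sits in Theorem~\ref{thm:parent-completion}. The one point that needs care is the strict positivity of $\min_T\Gamma_i(T)$ for nonsources, which holds because the minimum is over a finite family of subsets.
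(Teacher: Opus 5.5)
Your proof is correct and follows the paper's argument essentially step for step. You reduce to Theorem~\ref{thm:parent-completion} via $J_i(T)=-I_i^\eps+\Gamma_i(T)$ on subsets of the ancestral set, pad $\pa(i)$ up to size $q_S$ for sources, use the finiteness of the subset collection for nonsources, prove the parent certificate by the same strict-inclusion contradiction, and run the same source-first recursion; your explicit checks that ancestrality gives $S\subseteq\nd(i)$ and that $|\pa(i)|\le q_S$ simply spell out details the paper states without proof.
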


A transparent sufficient case is equal noise Fisher information across nodes; the same conclusion holds when the noise Fisher information does not increase along directed paths (Corollary~\ref{cor:equal-fisher-recovery}).

\subsection{Population Advantages of Locality}
\label{sec:locality-advantages}

Once $q\ge d$, increasing $q$ leaves current-source scores unchanged but can lower nonsource scores by admitting more proxies for unselected parents.

\begin{proposition}[Population advantage of locality]
\label{prop:locality-benefit}
For $d\le q_1\le q_2$ and every ancestral set $S$,
\begin{equation}
\begin{aligned}
Q_s^{(q_1)}(S)&=Q_s^{(q_2)}(S)=-I_s^\eps &&\text{for every }s\in\src(S),\\
Q_i^{(q_1)}(S)&\ge Q_i^{(q_2)}(S) &&\text{for every }i\notin S\cup\src(S).
\end{aligned}
\label{eq:score-monotonicity}
\end{equation}
Consequently, $g_{q_1}(S)\ge g_{q_2}(S)$ whenever a nonsource remains, and this inequality can be strict.
\end{proposition}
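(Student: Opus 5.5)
The plan is to reduce the first two displays to Theorem~\ref{thm:population-recovery}(a) and to the monotonicity of $Q$ in the admissible family of conditioning sets. Then we deduce the margin comparison and give an explicit example for strictness.

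\textbf{Source scores.} Fix an ancestral set $S$ and $s\in\src(S)$. Since $q_1\ge d$ and $q_2\ge d$, Theorem~\ref{thm:population-recovery}(a) applies at each of these levels. It gives $Q_s^{(q_1)}(S)=Q_s^{(q_2)}(S)=-I_s^\eps$ via the source certificate~\eqref{eq:source-certificate}. Directly: $\pa(s)\subseteq S$ and $|\pa(s)|\le d\le q_k$, so some $T\subseteq S$ of size $q_{k,S}$ contains $\pa(s)$ (pad with further elements of $S$). Theorem~\ref{thm:parent-completion} then gives $\Gamma_s(T)=0$, and $\Gamma_s\ge0$ everywhere on subsets of $S\subseteq\nd(s)$.

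\textbf{Nonsource scores.} For $i\notin S\cup\src(S)$, I use the remark after~\eqref{eq:order-score}: $Q_i^{(q)}(S)=\min\{J_i(T):T\subseteq S,\ |T|\le q\}$. This follows because $I_i(T)$ is nondecreasing in $T$, which is the conditioning monotonicity $\Gamma_i(T)=\E[\Var\{s_{\eps_i}(\eps_i)\mid X_i,X_T\}]$ being nonincreasing as the $\sigma$-field grows. Note that $S$ is ancestral and $i\notin S$, so $S\subseteq\nd(i)$ and the projection formula applies. The feasible family for $q_1$ is contained in that for $q_2$, so the minimum is weakly smaller for $q_2$, giving $Q_i^{(q_1)}(S)\ge Q_i^{(q_2)}(S)$. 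Subtracting the equal source minima in~\eqref{eq:stagewise-gap} yields $g_{q_1}(S)\ge g_{q_2}(S)$.

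\textbf{Strictness (main obstacle).} I need a concrete ANM in which an extra conditioning variable strictly increases the nonsource's local information. My first attempt is the chain $X_1\to X_2\to X_3$ together with an extra child $X_1\to X_4$, taking $d=1$, $q_1=1$, $q_2=2$, and $S=\{1,4\}$. Then $\src(S)=\{2\}$ and the nonsource is $3$, which is fine. However, $X_2\perp X_4\mid X_1$, so $X_4$ adds nothing about $X_3$, and I need a proxy variable that is not an ancestor of $3$ via the parent.

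A better choice is a collider structure $X_1\to X_2\leftarrow X_3$ with $X_2\to X_4$, taking $S=\{1,3\}$ and the nonsource $4$, whose parent $2$ is unselected. Here $p(x_4\mid x_1,x_3)$ uses both proxies of $X_2$, while each single proxy uses less. The strict inequality $I_4(\{1,3\})>\max\{I_4(\{1\}),I_4(\{3\})\}$ must be verified, and the cleanest route is the jointly Gaussian linear case. There, Appendix~\ref{app:gaussian-specialization} gives $I_i(T)=1/\Var(X_i\mid X_T)$, and $\Var(X_4\mid X_1,X_3)<\Var(X_4\mid X_1)$ whenever both coefficients into $X_2$ are nonzero. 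This gives $Q_4^{(1)}(S)>Q_4^{(2)}(S)$, and the min over the single nonsource makes $g_1(S)>g_2(S)$ with $d=1$... except that node $2$ has indegree $2$, so $d=2$. I would fix this by replacing the collider with two independent proxies of a source-parent instead: $X_2$ with children $X_1$ and $X_3$, $X_2\to X_4$, and $S=\{1,3\}$. This $S$ is not ancestral, though, so the final design must ensure $S$ is ancestral and $d\le q_1$. The construction I would therefore settle on is $X_0\to X_1$, $X_0\to X_3$, $X_0\to X_2\to X_4$ in a Gaussian linear model with $d=1$, $q_1=1$, $q_2=2$, and ancestral $S=\{0,1,3\}$. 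The subsets of size $2$ versus $1$ then change the nonsource $X_4$'s conditional variance only if they carry extra information about $X_2$, which fails given $X_0$. This again shows that checking strictness carefully, by picking a graph where two selected variables jointly but not singly screen the unselected parent, is the delicate step. I would compute Gaussian conditional variances explicitly for the candidate to confirm the strict gap.
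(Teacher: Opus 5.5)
\textbf{Gap: the strictness claim is not established.} Your handling of the source scores and of the nonsource inequality is correct. It matches the paper in substance: the paper gets monotonicity from the score projection $s_{i,T}=\E[s_{i,U}\mid X_i,X_T]$, while you use $\Gamma_i(T)=\E[\Var\{s_{\eps_i}(\eps_i)\mid X_i,X_T\}]$, which is fine because $S\subseteq\nd(i)$. The problem is the final clause, ``this inequality can be strict'': you never produce a model with $g_{q_1}(S)>g_{q_2}(S)$. Your last candidate fails, as you yourself observe. Since $X_4$ is conditionally independent of $(X_1,X_3)$ given $X_0$, we have $J_4(\{0\})=J_4(\{0,1\})=J_4(\{0,3\})=J_4(\{0,1,3\})$. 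Monotonicity then gives $Q_4^{(1)}(S)=Q_4^{(2)}(S)=J_4(\{0\})$, so $g_1(S)=g_2(S)$.

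\textbf{Why your search fails, and a fix.} Your search space is empty: no example with $d=1$ exists. With maximum indegree one, the skeleton is a forest of out-trees. For a nonsource $i$, there are two cases. If $S$ contains no ancestor of $i$, then $X_i$ is independent of $X_S$. Otherwise, let $a$ be the deepest selected ancestor of $i$. Every skeleton path from some $b\in S\setminus\{a\}$ to $i$ passes through $a$ as a non-collider, so $a$ d-separates $i$ from $S\setminus\{a\}$. Hence $J_i(T)=J_i(\{a\})$ for every $T\subseteq S$ containing $a$, and monotonicity gives $Q_i^{(q)}(S)=J_i(\{a\})$ for every $q\ge 1$. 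Strictness therefore requires $d\ge 2$ with $q_1=d$, plus a nonsource whose unselected parent depends on more than $q_1$ selected variables. Your collider idea works if you keep $d=2$ rather than trying to reduce it. Take unit-variance Gaussian roots $X_1,X_3,X_5$, set $X_2=X_1+X_3+\eps_2$ and $X_4=X_2+X_5+\eps_4$ with unit noise, and let $S=\{1,3,5\}$, $q_1=2$, $q_2=3$. Node $2$ is the only source, with $Q_2^{(2)}(S)=Q_2^{(3)}(S)=-1$. Every pair leaves $\Var(X_4\mid X_T)=3$, while the full triple leaves $2$, so $g_2(S)=2/3>1/2=g_3(S)$. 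The paper uses a similar $d=2$ Gaussian construction: roots $X_1,X_2,X_3$, $X_4=3X_1+3X_2+\eps_4$, and $X_5=\tfrac12X_4+2X_3+\eps_5$, with noise variances $(1,1,1,2,1/2)$. That example also shows the margin's sign can flip, since $g_2(S)=5/26>0>-1/2=g_3(S)$.
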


Appendix~\ref{app:subset-population} gives the proof and an example in which $q=d$ yields a positive ordering margin, whereas conditioning on the entire selected set yields a negative margin.

\subsection{Comparison with Other Ordering Criteria}
\label{sec:ordering-regions}

We compare local Fisher selection with the CaPS leaf rule and conditional-variance ordering on a growing non-Gaussian family with uniform Fisher separation.

\begin{definition}[Uniform Fisher separation]
\label{def:uniform-fisher-separation}
A family of ANMs, each equipped with a subset size $q$, is \emph{uniformly Fisher-separated} if there exist constants $\Delta_{\rm ord},\Delta_{\rm par}>0$, common to the entire family, such that, for every model in the family, $g_q(S)\ge\Delta_{\rm ord}$ for every ancestral set $S$ with $V\setminus(S\cup\src(S))\neq\varnothing$, and $\Gamma_i(T)\ge\Delta_{\rm par}$ for every nonroot node $i$ and every parent-omitting set $T\subseteq\nd(i)$ with $|T|\le q$.
Here, $\Delta_{\rm ord}$ and $\Delta_{\rm par}$ are uniform lower bounds for the ordering and parent-omission margins, respectively.
\end{definition}

\begin{proposition}[Uniform Fisher separation in non-Gaussian models]
\label{prop:uniform-nongaussian}
There exists a family of non-Gaussian chain ANMs $\{\mathcal M_p\}_{p\in2\mathbb N}$ with maximum indegree one that is uniformly Fisher-separated at $q=1$. Consequently, Theorem~\ref{thm:population-recovery} guarantees population DAG recovery for every $\mathcal M_p$ using marginals of dimension at most two. In contrast, on every $\mathcal M_p$, the CaPS leaf-selection rule and the conditional-variance ordering rule in \citet[Algorithm~1]{gao2022optimal} choose a nonleaf and a nonsource, respectively, at their first population steps.
\end{proposition}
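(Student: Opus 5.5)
We construct the family explicitly as a chain indexed by even $p$, built from repeated two-node blocks, so that every quantity in Definition~\ref{def:uniform-fisher-separation} reduces to a finite list of bivariate Fisher computations that do not depend on $p$. Concretely, we take the chain $1\to 2\to\cdots\to p$ with
\[
X_{k+1}=f(X_k)+\eps_{k+1},
\]
where $f$ is bounded, smooth, and strictly monotone, for instance $f(x)=a\tanh(x)$. The noises alternate between two fixed non-Gaussian laws: a smoothed Laplace or logistic law $L$ with Fisher information $I_L$, and a Gaussian-scale-mixture law $M$ with Fisher information $I_M$. The mixture $M$ is chosen with large variance but comparatively large Fisher information, i.e.\ heavy tails with a sharp center. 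Odd nodes receive one law and even nodes the other; this is why $p\in2\mathbb N$. The root's law is chosen so that the conditional-variance and CaPS comparisons below come out wrong.

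\textbf{Uniform separation at $q=1$.} Every ancestral set is a prefix $S=\{1,\ldots,k\}$, with unique source $k+1$, so $Q_{k+1}^{(1)}(S)=-I^\eps_{k+1}$ by Theorem~\ref{thm:population-recovery}(a). Any nonsource $j>k+1$ has best score
\[
-I^\eps_j+\min_{t\le k}\Gamma_j(\{t\}),
\]
and for $q=1$ this is computed from the bivariate law of $(X_j,X_t)$. Because $f$ is bounded, conditioning on a nonparent $X_t$ leaves $f(X_{j-1})$ with conditional variance bounded below by a constant depending only on the two noise laws and $f$. Hence $\Gamma_j(\{t\})\ge c>0$ uniformly, by continuity and compactness of the relevant family of conditional laws. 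The same bound gives $\Delta_{\rm par}=c$, since the parent-omitting sets with $|T|\le1$ are $\varnothing$ or a single nonparent ancestor.

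For the ordering margin we also need
\[
-I^\eps_j+c>-I^\eps_{k+1}+\Delta_{\rm ord}.
\]
This holds if $|I_L-I_M|<c$, or if we orient the alternation so that the Fisher information does not increase along the chain, as in Corollary~\ref{cor:equal-fisher-recovery}. The simplest choice makes $I_L=I_M$ by rescaling while the variances differ. Since the chain is stationary in structure along blocks, all constants are independent of $p$. Population recovery then follows from Theorem~\ref{thm:population-recovery} with marginals of dimension two.

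\textbf{Failure of competitors.} The conditional-variance rule first picks $\arg\min_i \Var(X_i)$. We make the root's noise the high-variance mixture $M$ and give some downstream node the low-variance noise with equal Fisher information. Boundedness of $f$ then keeps that node's total variance small, so the rule picks a nonsource.

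The CaPS leaf rule uses the statistic $\E[\partial_{x_i}^2\log p(X)]$ over the full joint and selects the extremal node as a leaf. For the terminal node $p$ this equals $-I^\eps_p$. For an interior node $k$ it equals
\[
-I_k^\eps-\E\bigl[f'(X_k)^2\, I\text{-type term of }\eps_{k+1}\bigr]
\]
plus a curvature term. Choosing $a$ large enough that the $f'$-weighted child term makes an interior node's statistic cross the terminal node's value forces a nonleaf selection. The block periodicity makes this comparison identical for every $p$.

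\textbf{Main obstacle.} The main difficulty is the simultaneous tuning. We need $|I_L-I_M|$ small relative to $c$ for Fisher separation, opposite variance orderings to defeat conditional variance, and a CaPS inversion. These pull $a$ and the scales in different directions: a larger $a$ strengthens the CaPS inversion but may change $c$. We would first fix equal Fisher informations exactly through scaling, which decouples the ordering margin from noise heterogeneity so that $\Delta_{\rm ord}=c$. We would then verify the lower bound on $c$ numerically-rigorously for one bounded $f$, via the projection formula for $\Gamma$ and a Cramér–Rao-type lower bound on the residual variance of the score.
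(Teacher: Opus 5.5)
Your CaPS step has a genuine gap. Your resolution of the ``main obstacle'', making $I_L=I_M$ exactly, does not just make that step delicate; it makes it impossible. In your chain, for $k<p$,
\[
\partial_k^2\log p(X)=s_{\eps_k}'(\eps_k)-f''(X_k)\,s_{\eps_{k+1}}(\eps_{k+1})+f'(X_k)^2\,s_{\eps_{k+1}}'(\eps_{k+1}).
\]
Since $\eps_{k+1}$ is independent of $X_k$, $\E[s_{\eps_{k+1}}(\eps_{k+1})]=0$ and $\E[s_{\eps_{k+1}}'(\eps_{k+1})]=-I_{k+1}^\eps$, this gives $\E[\partial_k^2\log p(X)]=-I_k^\eps-\E[f'(X_k)^2]\,I_{k+1}^\eps$. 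The ``curvature term'' you mention has mean zero. The terminal node gives $-I_p^\eps$.

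The CaPS leaf rule picks the node with the largest expected curvature, equivalently the smallest joint Fisher information. This is the orientation under which CaPS is consistent on equal-noise models, and it is the one the paper uses. The child term therefore only pushes interior nodes further from selection, so enlarging $a$ makes the true leaf more clearly selected, not less. With equal noise Fisher information at every node, including your root, every interior node has joint Fisher information strictly above $I_p^\eps$. CaPS then picks the terminal node at its first population step, and the ``in contrast'' claim fails for your family. Your separation and conditional-variance arguments are workable under equal Fisher information, but that design is exactly what rules out the CaPS failure.

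What is missing is the quantitative trade-off around which the paper's construction is built. CaPS fails only if some nonleaf $k$ satisfies $I_k^\eps+\E[f'(X_k)^2]I_{k+1}^\eps<I_p^\eps$. So noise Fisher information must be heterogeneous, and some link must be weak in CaPS's sense. The FiCS ordering margin then needs the following: when a low-information node is the current source, the next higher-information nonsource must have a parent-omission margin exceeding the Fisher-information gap, and that gap itself exceeds the CaPS child term. So one link must add little to its parent's joint Fisher information yet cost more than the gap when omitted from the child's conditioning set.

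Heavy tails are what make this possible. In the linear chain with Gaussian parent noise of Fisher information $2/3$ and unit Gaussian child noise, the deficit is $\tfrac32a^2/(1+\tfrac32a^2)$. This is below $1/3$ exactly when $a^2<1/3$, which is exactly when CaPS fails.

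The paper instead uses $t_3$ noise (variance $3$, Fisher information $2/3$) at odd nodes, $N(0,1)$ at even nodes, and $a_j^2\in[13/40,33/100]$ on edges into even nodes. Odd nonleaves then have joint Fisher information at most $2/3+33/100<1$, so CaPS fails. Meanwhile, conditional Jensen over the $\chi^2_3$ scale mixture, plus an explicit Cauchy--Schwarz moment bound, gives an even-node deficit above $1/3+10^{-3}$, so $\Delta_{\rm ord}=10^{-3}$. Stam's inequality gives $\Gamma_j\ge 2/75$ at odd nodes. Your scale mixture $M$ is the right kind of ingredient, but you need to deploy it for this trade-off with explicit constants, not only to win the variance comparison.

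Separately, make your compactness argument for $\Delta_{\rm par}$ precise. One way: use $\Gamma_j(\{t\})\ge\Gamma_j(\{j-2\})$, which follows from monotonicity plus the Markov property. Then note that the omitted shift given $X_{j-2}$ depends only on $f(X_{j-2})\in[-a,a]$, a compact parameter range.
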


Appendix~\ref{app:uniform-separation-proof} constructs the family and verifies both margin lower bounds. The conditional-variance comparison evaluates the ordering criterion beyond its equal-variance model class.

\section{Fisher Information Completion Search (FiCS)}
\label{sec:fics}

Suppose that we observe $n$ independent copies $X^{(1)},\ldots,X^{(n)}$ of $X$. FiCS operationalizes the Fisher parent completion principle of Section~\ref{sec:fisher_parent_completion} by replacing each population score $J_i(T)$ with an estimate $\widehat J_i(T)$. We first construct the regularized kernel Stein estimator, then use it for ordering estimation and parent selection, and finally give the computational cost.

\subsection{Local Fisher Score Estimation}
\label{sec:fics-estimation}

We estimate local Fisher scores directly from observations using regularized kernel Stein equations~\citep{zhou2020nonparametric}, extending the density-derivative estimation used by SCORE and CaPS~\citep{rolland2022score,xu2024caps}. The recovery guarantee applies to any estimator with sufficiently small uniform error over the local scores. The theory below analyzes this particular construction.

For $W:=\{i\}\cup T$, let $p_W$ be the density of $X_W$, and define
$s_{W,i}:=\partial_i\log p_W$ and
$r_{W,i}:=(\partial_i^2 p_W)/p_W$.
The factorization $p_W(x_i,x_T)=p(x_i\mid x_T)p_T(x_T)$ yields
\begin{equation}
J_i(T)
=
\E\!\left[\partial_i^2\log p_W(X_W)\right]
=
\E\!\left[r_{W,i}(X_W)-s_{W,i}(X_W)^2\right].
\label{eq:fics-score-identity}
\end{equation}
We estimate the two derivative ratios with separate Stein equations. For observation $a\in[n]$, write $z_a=X_W^{(a)}$. Given a kernel $k_W$, define
\begin{equation}
(K_W)_{ab}:=k_W(z_a,z_b),
\qquad
(\mathbf D_{W,i}^{(m)})_b
:=
\sum_{a=1}^n\partial_i^m k_W(z_a,z_b),
\qquad m\in\{1,2\},
\label{eq:fics-kernel-objects}
\end{equation}
where the derivatives act on the target coordinate of the first kernel argument. The regularized kernel Stein estimator is given by
\begin{equation}
\begin{aligned}
\widehat{\mathbf s}_{W,i}
&:=-(K_W+\eta_W I_n)^{-1}\mathbf D_{W,i}^{(1)},
&
\widehat{\mathbf r}_{W,i}
&:=(K_W+\eta_W I_n)^{-1}\mathbf D_{W,i}^{(2)},\\
\widehat J_i(T)
&:=\frac{1}{n}\mathbf 1^\top\widehat{\mathbf r}_{W,i}
-\frac{1}{n}\|\widehat{\mathbf s}_{W,i}\|_2^2.
\end{aligned}
\label{eq:fics-estimator}
\end{equation}
Here $I_n$ is the $n\times n$ identity matrix, $\mathbf 1$ is the vector of $n$ ones, and $\eta_W>0$ is the matrix ridge that regularizes the matrix inversion. The two Stein systems share the same kernel matrix and can be solved together. Appendix~\ref{app:stein-consistency} derives Equation~\eqref{eq:fics-estimator}, while Appendix~\ref{sec:implementation} specifies the kernels, bandwidths, and numerical safeguards used in the experiments.

\subsection{FiCS Algorithm}

\paragraph{Ordering estimation.}
For the current selected set $S$ and $q_S$ defined in Section~\ref{sec:forward-identification}, FiCS selects the next node by empirical local Fisher selection:
\begin{equation}
\widehat i
\in
\operatorname*{arg\,min}_{i\in V\setminus S}
\;
\min_{\substack{T\subseteq S,~|T|=q_S}}
\widehat J_i(T).
\label{eq:fics-order}
\end{equation}

At the population level, monotonicity of local Fisher information makes minimization over $|T|=q_S$ equivalent to minimization over $|T|\le q$. Estimated scores need not preserve this monotonicity, so Equation~\eqref{eq:fics-order} uses the same cardinality $q_S$ for every candidate. The guarantees in Section~\ref{sec:guarantees} require $q\ge d$ and positive lower bounds on the ordering and parent-omission margins; when $d$ is unknown, $q$ specifies the conditioning-size bound.

\paragraph{Parent selection.}
Given an estimated ordering $\widehat\pi$, let $\widehat S_i:=\{j\in V:\widehat\pi(j)<\widehat\pi(i)\}$ denote the predecessors of node $i$. FiCS estimates its minimal Fisher completion by
\begin{equation}
\widehat P_i
\in
\operatorname*{arg\,min}_{\substack{T\subseteq\widehat S_i,~|T|\le q}}
\left\{
\widehat J_i(T)+\lambda|T|
\right\},
\qquad \lambda>0.
\label{eq:fics-parent}
\end{equation}

Parent selection compares different subset sizes. Given a correct topological order, every parent-set superset has the same population score, so the parent penalty discards redundant predecessors; a parent-omitting set, by contrast, has a positive parent-omission margin. Section~\ref{sec:guarantees} specifies conditions under which $\lambda$ separates these two cases. Restricting parent candidates to ordered predecessors makes the output acyclic, and Algorithm~\ref{alg:fics} summarizes both stages.

\begin{algorithm}[H]
\caption{Fisher Information Completion Search (FiCS)}
\label{alg:fics}
\small

\noindent\textbf{Input:}
Data $\mathbf X\in\mathbb{R}^{n\times p}$, conditioning size $q$, parent penalty $\lambda$, and local scores $\widehat J_i(T)$.

\par\smallskip
\noindent
\begin{minipage}[t]{0.46\linewidth}
\vspace{0pt}
\textbf{Stage 1: Ordering estimation}
\begin{algorithmic}
\State $S\gets\varnothing$
\While{$|S|<p$}
    \State $q_S\gets\min\{q,|S|\}$
    \State $\widehat i\gets$ solution of Eq.~\eqref{eq:fics-order}
    \State $\widehat\pi(\widehat i)\gets |S|+1$
    \State $S\gets S\cup\{\widehat i\}$
\EndWhile
\end{algorithmic}
\end{minipage}
\hfill
\begin{minipage}[t]{0.51\linewidth}
\vspace{0pt}
\textbf{Stage 2: Parent selection}
\begin{algorithmic}
\State $\widehat E\gets\varnothing$
\For{$i\in V$}
    \State $\widehat S_i\gets\{j:\widehat\pi(j)<\widehat\pi(i)\}$
    \State $\widehat P_i\gets$
    solution of Eq.~\eqref{eq:fics-parent}
    \State $\widehat E\gets\widehat E\cup
    \bigl(\widehat P_i
    \times\{i\}\bigr)$
\EndFor
\end{algorithmic}
\end{minipage}

\par\smallskip
\noindent\textbf{Output:}
Ordering $\widehat\pi$ and DAG $\widehat G=(V,\widehat E)$.
\end{algorithm}

Population local Fisher scores are coordinate-scale dependent: replacing $X_i$ by $a_iX_i$, with $a_i\ne0$, multiplies $J_i(T)$ by $a_i^{-2}$. The reported FiCS fits mean-center variables without variance standardization; synthetic and real-data preprocessing are specified in Appendices~\ref{app:synthetic-generation} and~\ref{app:sachs}, respectively.

\subsection{Computational Complexity}
\label{sec:fics-computation}

Each queried marginal has at most $q+1$ variables, while the size of the local query class is
\begin{equation}
N_{p,q}
:=
p\sum_{r=0}^{q}\binom{p-1}{r}
\le p^{q+1}=
O(p^{q+1}).
\label{eq:fics-query-count}
\end{equation}

When several local queries share the same marginal $W$, their Stein systems use the same kernel matrix, so one factorization serves all target coordinates. Thus, at most $\sum_{k=1}^{q+1}\binom{p}{k}$ distinct factorizations are required. Appendix~\ref{sec:implementation} describes the incremental update used to avoid repeated subset comparisons.

A dense factorization costs $O(n^3)$ time and $O(n^2)$ memory. Locality bounds the estimation dimension; exhaustive subset search remains combinatorial in $q$.

\section{Statistical Guarantees of FiCS}
\label{sec:guarantees}

FiCS recovers the DAG when the uniform local-score error is sufficiently small relative to the ordering and parent-omission margins and the parent penalty. We first derive this recovery condition, then bound the uniform local-score error of the regularized kernel Stein estimator and combine the two results. Throughout, $p$, $d$, $q$, the parent penalty $\lambda$, and the ridge parameter $\rho$ may depend on $n$, with this dependence suppressed.

Define the local query class $\mathcal Q_{p,q}$, its size $N_{p,q}$, and the logarithmic complexity term $L$ by
\[
\begin{aligned}
\mathcal Q_{p,q}
&:=
\left\{(i,T):i\in V,\;T\subseteq V\setminus\{i\},\;|T|\le q\right\},
&
N_{p,q}
&:=
|\mathcal Q_{p,q}|,\\
L
&:=
q\left\{1+\log\!\left(\frac{p}{q}\right)\right\}+\log p,
&
\log N_{p,q}
&=
O(L),
\end{aligned}
\]
where $q\{1+\log(p/q)\}:=0$ when $q=0$; the last relation follows from the standard binomial-sum bound applied to Equation~\eqref{eq:fics-query-count}.

\subsection{DAG Recovery under a Uniform Local Score Error Bound}

For $\epsilon>0$, define the uniform score-error event
\begin{equation}
\mathcal E(\epsilon)
:=
\left\{
\max_{(i,T)\in\mathcal Q_{p,q}}
\left|\widehat J_i(T)-J_i(T)\right|
\le\epsilon
\right\}.
\label{eq:uniform}
\end{equation}

\begin{theorem}[DAG recovery from a uniform local score error bound]
\label{thm:finite}
Consider an ANM satisfying the conditions of Theorem~\ref{thm:parent-completion}, and let $d\le q\le p-1$. Suppose $\Delta_{\rm ord},\Delta_{\rm par}>0$ are lower bounds on its ordering and parent-omission margins, respectively. On $\mathcal E(\epsilon)$, Algorithm~\ref{alg:fics} returns a topological ordering and the true DAG whenever
\begin{equation}
2\epsilon
<
\min\left\{
\Delta_{\rm ord},
\lambda,
\Delta_{\rm par}-d\lambda
\right\}.
\label{eq:main-finite}
\end{equation}
\end{theorem}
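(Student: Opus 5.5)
The argument is deterministic on $\mathcal E(\epsilon)$. I would treat the two stages of Algorithm~\ref{alg:fics} separately, using Theorem~\ref{thm:population-recovery} for the population facts and paying at most $2\epsilon$ whenever two estimated scores are compared.

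\textbf{Ordering stage: induction.} The invariant is that after each step the selected set $S$ is ancestral. The base case $S=\varnothing$ is trivially ancestral. For the inductive step, fix an ancestral $S$ with $|S|<p$. If no nonsource remains, every $i\notin S$ is a current source, so any choice keeps $S\cup\{\widehat i\}$ ancestral.

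Otherwise, write $\widehat Q_i(S):=\min_{T\subseteq S,|T|=q_S}\widehat J_i(T)$. Every $(i,T)$ with $|T|=q_S\le q$ lies in $\mathcal Q_{p,q}$. A minimum of functions each perturbed by at most $\epsilon$ is perturbed by at most $\epsilon$, so $|\widehat Q_i(S)-Q_i^{(q)}(S)|\le\epsilon$ for all $i\notin S$. Let $s^\ast$ attain $\min_{s\in\src(S)}Q_s^{(q)}(S)$. Then for any nonsource $j$,
\[
\widehat Q_j(S)\ge Q_j^{(q)}(S)-\epsilon\ge Q_{s^\ast}^{(q)}(S)+g_q(S)-\epsilon\ge \widehat Q_{s^\ast}(S)+\Delta_{\rm ord}-2\epsilon>\widehat Q_{s^\ast}(S).
\]
Hence no nonsource attains the empirical minimum in Equation~\eqref{eq:fics-order}, and $\widehat i\in\src(S)$. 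Since all parents of $\widehat i$ are in $S$, the set $S\cup\{\widehat i\}$ is ancestral. Each prefix $\widehat S_i$ is therefore ancestral, and $\widehat\pi$ is a topological ordering.

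\textbf{Parent stage.} Fix $i$. Because $\widehat\pi$ is topological, $\pa(i)\subseteq\widehat S_i\subseteq\nd(i)$, and $|\pa(i)|\le d\le q$, so $\pa(i)$ is feasible in Equation~\eqref{eq:fics-parent}. By Theorem~\ref{thm:parent-completion}, $J_i(T)=-I_i^\eps$ for every feasible $T\supseteq\pa(i)$, and $J_i(T)\ge -I_i^\eps+\Delta_{\rm par}$ for every parent-omitting feasible $T$. The second bound is the parent-omission margin; it applies to root nodes vacuously, since every set contains the empty parent set. I compare the penalized objective at $\pa(i)$ with each competitor $T\ne\pa(i)$.
\begin{itemize}
\item \textbf{Strict superset} $T\supsetneq\pa(i)$. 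The two population scores are equal, so the objective difference is at least $\lambda(|T|-|\pa(i)|)-2\epsilon\ge\lambda-2\epsilon>0$.
\item \textbf{Parent-omitting} $T$. The difference is at least $\Delta_{\rm par}-\lambda|\pa(i)|-2\epsilon\ge\Delta_{\rm par}-d\lambda-2\epsilon>0$. Here I use $|T|\ge0$ and $|\pa(i)|\le d$.
\end{itemize}
Thus $\pa(i)$ is the unique minimizer, so $\widehat P_i=\pa(i)$ and $\widehat G=G$.

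\textbf{Main obstacle.} The only delicate point is the ordering stage. One must check that the empirical argmin over candidates is controlled by the population margin even though the estimated scores need not be monotone in $T$. This is why the $\epsilon$-perturbation is applied to the fixed-cardinality minimum $Q_i^{(q)}(S)$, which matches the population definition exactly, rather than to a minimum over all $|T|\le q$. A second subtlety is ensuring the margin bound is invoked only at ancestral $S$; the induction invariant provides this.
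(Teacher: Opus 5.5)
Your proposal is correct and follows essentially the same route as the paper's proof. The ordering step uses an $\epsilon$-perturbation of the fixed-cardinality minimum $\widehat Q_i^{(q)}(S)$ together with the ordering margin to force a current source at every ancestral stage. The parent step compares the penalized objective at $\pa(i)$ against strict supersets (gap at least $\lambda-2\epsilon$) and parent-omitting sets (gap at least $\Delta_{\rm par}-d\lambda-2\epsilon$). Your treatment of the no-nonsource case, of root nodes, and of the fact that every competitor is either a strict superset or parent-omitting only spells out steps the paper leaves implicit.
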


The three terms in Equation~\eqref{eq:main-finite} preserve source selection, exclude redundant parent supersets, and exclude parent-omitting sets, respectively.

\subsection{Uniform Stein Estimation}

The following condition allows $p>n$ while requiring the logarithmic complexity of the local query class to remain smaller than the sample size.

\begin{assumption}[High-dimensional setting]
\label{ass:high-dimensional}
The maximum indegree and subset size satisfy
\[
\max\{1,d\}\le q\le p-1,
\qquad
L=o(n).
\]
\end{assumption}

For $W=\{i\}\cup T$, let $P_W$ be the law of $X_W$ and let $\mathcal H_W$ be the reproducing kernel Hilbert space (RKHS) of $k_W$. Define the integral operator $\mathsf T_W:L^2(P_W)\to L^2(P_W)$ and the uniform approximation error by
\begin{equation}
\begin{aligned}
(\mathsf T_Wh)(x)
&:=
\E\{k_W(x,X_W)h(X_W)\},\\
A(\rho)
&:=
\sup_{n\ge1}
\max_{(i,T)\in\mathcal Q_{p,q}}
\max_{u\in\{s,r\}}
\left\|
\rho(\mathsf T_W+\rho I)^{-1}u_{W,i}
\right\|_{L^2(P_W)}.
\end{aligned}
\label{eq:l2-approximation}
\end{equation}

\begin{assumption}[Uniform local \(L^2\) Stein regularity]
\label{ass:stein-main}
Uniformly over the model sequence and all $(i,T)\in\mathcal Q_{p,q}$, each $k_W$ is deterministic, positive semidefinite, and $C^4$, with
\[
\sup_x
\left\|
\partial_i^m k_W(x,\cdot)
\right\|_{\mathcal H_W}
\le\kappa_m,
\qquad
m=0,1,2.
\]
The two Stein targets have $L^2(P_W)$ norms bounded by a common constant $M<\infty$, the corresponding first- and second-order integration-by-parts identities hold, and $A(\rho)\to0$ as $\rho\downarrow0$.
\end{assumption}

Assumption~\ref{ass:stein-main} extends regularized kernel Stein estimation~\citep[Section~3.1 and Appendix~B]{zhou2020nonparametric} and $L^2$ Tikhonov analysis~\citep[Sections~2 and~6]{smale2007learning} uniformly over the local queries; it is an estimation condition, not a consequence of bounded indegree. Appendix~\ref{app:stein-assumptions} gives its operator formulation. Uniform Fisher separation is imposed separately through Definition~\ref{def:uniform-fisher-separation}; Propositions~\ref{prop:uniform-nongaussian} and~\ref{prop:gaussian-rbf-example} provide non-Gaussian and Gaussian examples.

For $\delta\in(0,1)$, define
\[
\ell_\delta
:=
\log\!\left(\frac{16N_{p,q}}{\delta}\right),
\qquad
b_\delta
:=
C_B\left\{
\sqrt{\frac{\ell_\delta}{n}}
+
\frac{\ell_\delta}{n}
\right\},
\qquad
e_\delta(\rho)
:=
A(\rho)+\frac{b_\delta}{\rho},
\]
where $C_B$ depends only on the uniform bounds in Assumption~\ref{ass:stein-main}.

\begin{theorem}[Uniform local score error bound]
\label{thm:stein-consistency}
Under Assumption~\ref{ass:stein-main}, set the matrix ridge in Equation~\eqref{eq:fics-estimator} to $\eta_W=n\rho$ for every queried $W$. For every $\delta\in(0,1)$ and every deterministic $\rho\in[b_\delta,1]$, with probability at least $1-\delta$,
\begin{equation}
\max_{(i,T)\in\mathcal Q_{p,q}}
\left|
\widehat J_i(T)-J_i(T)
\right|
\le
C\left\{
e_\delta(\rho)+e_\delta(\rho)^2
\right\},
\label{eq:stein-uniform-rate}
\end{equation}
where $C$ is independent of $n,p,q,\delta,$ and $\rho$.
\end{theorem}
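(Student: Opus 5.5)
Fix one query $(i,T)\in\mathcal Q_{p,q}$ with $W=\{i\}\cup T$, prove a high-probability bound for that query, and finish with a union bound over the $N_{p,q}$ queries. This union bound is the source of the $\log(16N_{p,q}/\delta)$ in $\ell_\delta$: four events per query, times two targets $u\in\{s,r\}$, with room to spare.

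\textbf{Operator form of the estimator.} Let $\widehat{\mathsf T}_W$ denote the empirical integral operator on $\mathcal H_W$. The Stein identities (integration by parts in Assumption~\ref{ass:stein-main}) give $\mathsf T_W s_{W,i}=-\E[\partial_i k_W(X_W,\cdot)]=:-\xi^{(1)}$ and $\mathsf T_W r_{W,i}=\E[\partial_i^2 k_W(X_W,\cdot)]=:\xi^{(2)}$. The vectors $\widehat{\mathbf s}_{W,i}$ and $\widehat{\mathbf r}_{W,i}$ with ridge $\eta_W=n\rho$ are then the sample evaluations of the functions
\[
\widehat u=(\widehat{\mathsf T}_W+\rho I)^{-1}\widehat\xi^{(m)},
\]
where $\widehat\xi^{(m)}$ is the empirical mean embedding (representer theorem / push-through identity). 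The population counterpart is $u_\rho=(\mathsf T_W+\rho I)^{-1}\mathsf T_W u$, so $u-u_\rho=\rho(\mathsf T_W+\rho I)^{-1}u$. Its $L^2(P_W)$ norm is at most $A(\rho)$, which handles the bias.

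\textbf{Variance term.} Write
\[
\widehat u-u_\rho=(\widehat{\mathsf T}_W+\rho)^{-1}\bigl[(\widehat\xi-\xi)-(\widehat{\mathsf T}_W-\mathsf T_W)u_\rho\bigr].
\]
Bernstein's inequality in Hilbert space applies to $\widehat\xi-\xi$ (summands bounded by $\kappa_m$) and to $\widehat{\mathsf T}_W-\mathsf T_W$ in Hilbert--Schmidt norm (summands bounded by $\kappa_0^2$). Each deviation is then at most a constant times $\sqrt{\ell_\delta/n}+\ell_\delta/n$ with probability $1-\delta/(4N)$ or so. Using $\|(\widehat{\mathsf T}+\rho)^{-1}\|\le\rho^{-1}$, and $\|u_\rho\|_{\mathcal H}$ controlled through $\|\mathsf T^{1/2}(\mathsf T+\rho)^{-1}\|\le \rho^{-1/2}$ together with $M$, the deviation is at most $b_\delta/\rho$ in RKHS norm. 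The hypothesis $\rho\ge b_\delta$ makes the cross terms like $b_\delta^2/\rho^2$ at most $b_\delta/\rho$, which is the reason for that condition. Converting RKHS norm to $L^2(P_W)$ norm uses $\|h\|_{L^2}\le\kappa_0\|h\|_{\mathcal H}$. This yields $\|\widehat u-u\|_{L^2(P_W)}\lesssim e_\delta(\rho)$.

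\textbf{From function error to score error.} Expand
\[
\widehat J_i(T)-J_i(T)=\bigl(\widehat P_n\widehat r-Pr\bigr)-\bigl(\widehat P_n\widehat s^2-Ps^2\bigr).
\]
For the linear term, use $\widehat P_n\widehat r-Pr=P(\widehat r-r)+(\widehat P_n-P)\widehat r$. The first piece is at most $\|\widehat r-r\|_{L^2}$. For the second, $\widehat r$ lies in an RKHS ball, so $(\widehat P_n-P)\widehat r=\langle \widehat\mu-\mu,\widehat r\rangle_{\mathcal H}$ with mean-embedding error of order $b_\delta$; this is again at most $e_\delta$. For the quadratic term, use $\widehat s^2-s^2=(\widehat s-s)^2+2s(\widehat s-s)$ and Cauchy--Schwarz with $\|s\|_{L^2}\le M$. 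This gives $2M e_\delta+e_\delta^2$, which accounts for the $e_\delta^2$ in \eqref{eq:stein-uniform-rate}. The empirical-versus-population discrepancy of $\widehat s^2$ is handled as in the linear term through a bound on the empirical $L^2$ norm via $\widehat{\mathsf T}$.

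\textbf{Main obstacle.} The difficulty is the data dependence of $\widehat u$ in the empirical averages $\widehat P_n\widehat u$ and $\widehat P_n\widehat u^2$. I would attack it by working entirely with operator identities, namely
\[
\widehat P_n \widehat u^2=\langle\widehat u,\widehat{\mathsf T}\widehat u\rangle_{\mathcal H},
\]
and comparing with $\langle u_\rho,\mathsf T u_\rho\rangle$ using the same two concentration events. This avoids any empirical-process argument and keeps constants independent of $n,p,q,\delta,\rho$.
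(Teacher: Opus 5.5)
Your outline follows the paper's route: bias through $A(\rho)$, the resolvent identity for $\widehat u-u_\rho$, Hilbert-space Bernstein with a union bound, and propagation through $\langle\widehat\mu_W,\widehat r\rangle_{\mathcal H_W}$ and $\langle\widehat s,\widehat L_W\widehat s\rangle_{\mathcal H_W}$. However, the variance step has a gap that changes the rate.

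\textbf{The gap.} Your claim that $\widehat u-u_\rho$ is at most $b_\delta/\rho$ in RKHS norm does not follow from the bounds you list. Write $L_W,\widehat L_W$ for your $\mathsf T_W,\widehat{\mathsf T}_W$ acting on $\mathcal H_W$. With $\|(\widehat L_W+\rho I)^{-1}\|_{\mathrm{op}}\le\rho^{-1}$ and $\|u_\rho\|_{\mathcal H_W}\le M/(2\sqrt\rho)$, the term $(\widehat L_W+\rho I)^{-1}(L_W-\widehat L_W)u_\rho$ is only bounded by $Mb_\delta/(2\rho^{3/2})$. Under the $L^2$ assumption $\|u_\rho\|_{\mathcal H_W}$ need not stay bounded, because the targets need not lie in $\mathcal H_W$ (Proposition~\ref{prop:gaussian-rbf-example}). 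Converting with $\|h\|_{L^2(P_W)}\le\kappa_0\|h\|_{\mathcal H_W}$ then gives an $L^2$ error of order $A(\rho)+b_\delta/\rho^{3/2}$. This cannot be absorbed into \eqref{eq:stein-uniform-rate}. Since $A(\rho)\le M$, the right-hand side of \eqref{eq:stein-uniform-rate} is bounded by a constant for all $\rho\in[b_\delta,1]$. By contrast, $b_\delta/\rho^{3/2}=b_\delta^{-1/2}$ at $\rho=b_\delta$.

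\textbf{The missing idea.} Apply the inclusion $S_W:\mathcal H_W\to L^2(P_W)$ before taking norms, and use $\rho\ge b_\delta$ through an operator ordering. On the concentration event $\|\widehat L_W-L_W\|_{\mathrm{op}}\le b_\delta\le\rho$, so $L_W\preceq A_W:=\widehat L_W+\rho I$. Hence $A_W^{-1}L_WA_W^{-1}\preceq A_W^{-1}\preceq\rho^{-1}I$, that is, $\|S_WA_W^{-1}\|_{\mathrm{op}}\le\rho^{-1/2}$. Applied to your resolvent identity this gives
\[
\|S_W(\widehat u-u_\rho)\|_{L^2(P_W)}\le\rho^{-1/2}\Bigl(b_\delta+\frac{Mb_\delta}{2\sqrt\rho}\Bigr)\le C\,\frac{b_\delta}{\rho},
\]
which recovers exactly the half power of $\rho$ that the $\kappa_0$ conversion discards.

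This, not controlling cross terms like $b_\delta^2/\rho^2$, is the real role of the hypothesis $\rho\ge b_\delta$. The same hypothesis turns the crude RKHS bound $b_\delta/\rho+Mb_\delta/(2\rho^{3/2})$ into $\|\widehat u\|_{\mathcal H_W}\le C\rho^{-1/2}$. That is what your propagation step needs to make $b_\delta\|\widehat r\|_{\mathcal H_W}$ and $b_\delta\|\widehat s\|_{\mathcal H_W}^2$ of order $b_\delta/\rho$. With this repair, the rest of your argument goes through as in the paper.

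Also keep $\mathsf T_W$ on $L^2(P_W)$ separate from $L_W=S_W^*S_W$ on $\mathcal H_W$, and define $u_\rho:=\sigma_u(L_W+\rho I)^{-1}\zeta^u_{W,i}$. The intertwining identity $S_Wu_\rho=\mathsf T_W(\mathsf T_W+\rho I)^{-1}u_{W,i}$ then gives your bias bound without presuming $u_{W,i}\in\mathcal H_W$.
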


The bound jointly controls the two Stein targets and their same-sample evaluation in the local Fisher functional. Its dependence on $p$ and $q$ enters through $\log N_{p,q}=O(L)$.
\subsection{High-Dimensional Consistency}

Assumption~\ref{ass:high-dimensional} gives $b:=\sqrt{L/n}+L/n\to0$. Choose a deterministic sequence $\rho\downarrow0$ such that $b/\rho\to0$, use the matrix ridge $\eta=n\rho$, and define
\[
e:=A(\rho)+\frac{b}{\rho},\qquad R:=e+e^2,\qquad m:=\min\left\{\Delta_{\rm ord},\lambda,\Delta_{\rm par}-d\lambda\right\}.
\]

\begin{corollary}[High-dimensional DAG consistency]
\label{cor:stein-dag-consistency}
Under Assumptions~\ref{ass:high-dimensional} and~\ref{ass:stein-main}, suppose that the ANM sequence is uniformly Fisher-separated and that every model satisfies the conditions of Theorem~\ref{thm:parent-completion}. If $m>0$ eventually and $R=o(m)$, then the output $\widehat G$ of Algorithm~\ref{alg:fics} satisfies $\Pp(\widehat G=G)\to1$.
\end{corollary}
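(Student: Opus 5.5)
The plan is to combine Theorem~\ref{thm:stein-consistency} with Theorem~\ref{thm:finite}. The idea is to choose $\delta=\delta_n\to0$ slowly enough that the uniform score error from the Stein estimator stays below $m/2$ with probability at least $1-\delta_n$.

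First, on the deterministic side, I would fix the margins. Uniform Fisher separation (Definition~\ref{def:uniform-fisher-separation}) gives constants $\Delta_{\rm ord},\Delta_{\rm par}>0$ that lower-bound the ordering margin $g_q(S)$ and the parent-omission margin $\Gamma_i(T)$ for every model in the sequence at the chosen $q$. Assumption~\ref{ass:high-dimensional} gives $d\le q\le p-1$. Hence, for each $n$, Theorem~\ref{thm:finite} applies: on $\mathcal E(\epsilon)$, FiCS returns $G$ whenever $2\epsilon<m$. Since $m>0$ eventually, it suffices to show that $\Pp\{\mathcal E(\epsilon_n)\}\to1$ for some $\epsilon_n<m/2$.

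Second, I would choose $\delta$ and check the regime of Theorem~\ref{thm:stein-consistency}. I would take $\delta_n:=\max\{1/n,\,\exp(-L)\}$, or any choice with $\delta_n\to0$ and $\log(1/\delta_n)=O(L+\log n)$. Then $\ell_{\delta_n}=\log(16N_{p,q})+\log(1/\delta_n)=O(L)$, using $\log N_{p,q}=O(L)$. Here $L\ge\log p\ge\log 2$, and if necessary I would use the fact that $\log n$ can be absorbed by replacing $\delta_n$ with $e^{-L}$ alone when $L\to\infty$, or with a slowly vanishing constant otherwise. This gives $b_{\delta_n}\le C'(\sqrt{L/n}+L/n)=C'b$. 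Because $b/\rho\to0$ and $\rho\le1$ eventually, we get $\rho\ge b_{\delta_n}$ for large $n$, so $\rho\in[b_{\delta_n},1]$. Theorem~\ref{thm:stein-consistency} then yields, with probability at least $1-\delta_n$, a uniform error of at most $C\{e_{\delta_n}(\rho)+e_{\delta_n}(\rho)^2\}$. Moreover $e_{\delta_n}(\rho)\le A(\rho)+C'b/\rho\le\max\{1,C'\}e$, so the bound is at most $C''R$ with $C''$ a fixed constant.

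Third, I would conclude. Since $R=o(m)$, we have $C''R<m/2$ for large $n$. So $\mathcal E(C''R)$ holds with probability at least $1-\delta_n\to1$, and on that event Theorem~\ref{thm:finite} gives $\widehat G=G$.

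The main obstacle is the bookkeeping in step two. The $\delta$-dependent quantity $b_{\delta_n}$ must be dominated by a constant multiple of $b$ while $\delta_n\to0$. This is delicate when $L$ stays bounded, since then $\log(1/\delta_n)$ must grow but remain $o(n)$ and be comparable to $b$'s scaling. If the $O(L)$ comparison fails, I would instead take $\delta_n\to0$ slowly enough that $\ell_{\delta_n}/n\to0$ and $\sqrt{\ell_{\delta_n}/n}/\rho\to0$, which is possible because $b/\rho\to0$ leaves room, and I would define $e$ through $b_{\delta_n}$ directly.
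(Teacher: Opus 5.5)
Your overall plan matches the paper's: apply Theorem~\ref{thm:stein-consistency} at some confidence level, then Theorem~\ref{thm:finite} with $\epsilon$ equal to the realized uniform error. When $L\to\infty$, your choice $\delta_n=e^{-L}$ works: $\ell_{\delta_n}=O(L)$, $b_{\delta_n}\le C'b$, and the error is at most $C''R<m/2$ with probability $1-\delta_n\to1$.

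The gap is in the regime you flag yourself. Assumption~\ref{ass:high-dimensional} allows $L$ to stay bounded (for instance, $p$ and $q$ fixed) or to be bounded along a subsequence. In that case no sequence $\delta_n\to0$ can satisfy $b_{\delta_n}\le C'b$. Indeed, $b_{\delta_n}\ge C_B\sqrt{\log(1/\delta_n)/n}$, whereas $b\le 2\sqrt{L/n}$ once $L\le n$, so $b_{\delta_n}/b\to\infty$. Your fallback does not repair this. It only arranges $e_{\delta_n}(\rho)\to0$, and it redefines $e$ through $b_{\delta_n}$, so the hypothesis $R=o(m)$ no longer controls the error. Since $m\le\lambda$ and $\lambda$ is allowed to tend to zero, $e_{\delta_n}(\rho)\to0$ does not yield $C\{e_{\delta_n}(\rho)+e_{\delta_n}(\rho)^2\}<m/2$. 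In addition, the dichotomy ``$L\to\infty$ or $L$ bounded'' does not cover arbitrary sequences.

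The missing observation is that $\delta$ need not vanish with $n$. The paper fixes $\delta\in(0,1)$. Because $L\ge\log 2$, one has $\ell_\delta\le C_\delta L$, hence $b_\delta\le C_\delta b$ and $b_\delta\le\rho$ eventually. Theorem~\ref{thm:stein-consistency} then gives $\Pp(U>M_\delta R)\le\delta$ for large $n$, where $U$ is the uniform score error. Since $M_\delta R<m/2$ eventually, $\limsup_n\Pp(\widehat G\neq G)\le\delta$ for every $\delta$. This is the paper's step $U=O_{\Pp}(R)=o_{\Pp}(m)$.

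If you want a single vanishing sequence, calibrate it to $m/R$ rather than to $b$ alone. Take $\kappa_n:=\sqrt{m/R}\to\infty$ and $\log(1/\delta_n):=\kappa_nL$. Then:
\begin{itemize}
\item $b_{\delta_n}\le C\kappa_n b$, so $e_{\delta_n}(\rho)\le C\kappa_n e$;
\item the error bound is at most $C(\sqrt{mR}+me)=o(m)$, using $e\le R\to0$ (which holds because $m\le\Delta_{\rm ord}$);
\item $b_{\delta_n}\le\rho$ eventually, and $\delta_n\le 2^{-\kappa_n}\to0$, however $L$ behaves.
\end{itemize}
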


Theorem~\ref{thm:stein-consistency} gives uniform local score error $O_{\Pp}(R)$, so $R=o(m)$ allows Theorem~\ref{thm:finite} to apply with probability tending to one. A sufficient parent penalty condition is $R=o(\lambda)$ and $d\lambda\le(1-c_0)\Delta_{\rm par}$ for some fixed $c_0\in(0,1)$. Thus $q$ controls $L$, whereas $d$ enters the parent penalty constraint; when $d$ grows, feasibility requires $dR\to0$.

Proofs appear in Appendices~\ref{app:finite-recovery} and~\ref{app:stein-consistency}. Appendix~\ref{app:rkhs-source-result} gives explicit rates and balanced ridge tuning under a stronger RKHS source condition. Empirical bandwidth and synthetic calibration rules are specified in Appendices~\ref{sec:implementation} and~\ref{app:configurations}, respectively.

\section{Numerical Experiments}
\label{sec:experiments}

We test two practical consequences of bounded conditioning: end-to-end recovery when $p$ is large relative to $n$, and sensitivity to the conditioning size. We report directed-edge F1 and ordering accuracy $A_{\mathrm{top}}$, the fraction of true edges consistent with the estimated order.

We generate 30 Erd\H{o}s--R\'enyi (ER3) DAGs for each $p\in\{50,100,200\}$, with expected mean total degree three before capping the indegree at two and randomly permuting variable labels. Each nonroot mechanism is linear with probability $0.5$ and otherwise a Gaussian-process function of its parents; errors are independent $N(0,1)$. We use nested samples with $n\in\{50,100,200,500,1000\}$. We compare FiCS ($q=2$) with CaPS~\citep{xu2024caps}, SCORE~\citep{rolland2022score}, NoGAM~\citep{montagna2023nogam}, and nonlinear DAGMA~\citep{bello2022dagma}. Each method uses one configuration selected on independent calibration DAGs and fixed across test settings. Appendices~\ref{app:exp-details}--\ref{app:additional-experiments} provide implementation details, experimental designs, and additional results. CaPS can fail numerically during GAM pruning. Figures~\ref{fig:exp-main-synthetic} and~\ref{fig:exp-conditioning} include only valid DAG estimates; Appendix~\ref{app:exp-computation} reports the failure counts.

\subsection{Recovery on Random DAGs}

\begin{figure}[H]
\centering
\includegraphics[width=\linewidth]{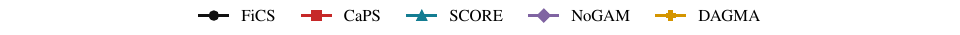}\par
\includegraphics[width=0.32\linewidth]{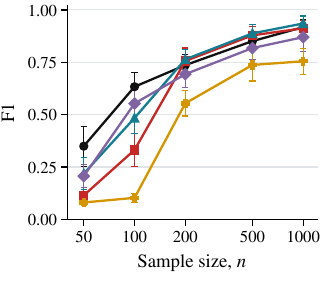}\hfill
\includegraphics[width=0.32\linewidth]{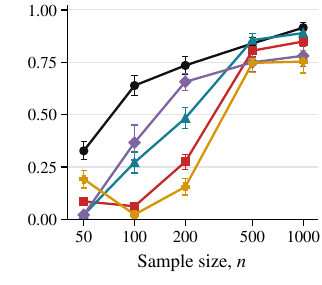}\hfill
\includegraphics[width=0.32\linewidth]{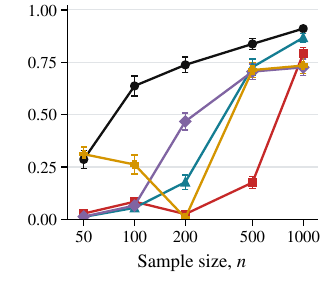}
\caption{Mean graph-recovery F1 versus sample size for $p=50$, $100$, and $200$ from left to right; error bars indicate one standard deviation.}
\label{fig:exp-main-synthetic}
\end{figure}

Figure~\ref{fig:exp-main-synthetic} shows FiCS's largest relative gains in the considered small-sample, high-dimensional settings. At $p=200$ and $n=100$, FiCS attains mean F1 $0.637$, compared with $0.262$ for DAGMA, $0.055$ for SCORE, and $0.066$ for NoGAM. Hence, FiCS recovers substantial graph structure even when $p>n$, supporting the benefit of bounding the dimension of local conditioning. Across the three dimensions, the performance gap generally narrows as $n$ increases, suggesting that FiCS's main empirical advantage in these experiments lies in sample-efficient recovery when data are limited rather than a performance advantage that persists across sample sizes.

\subsection{Sensitivity to the Conditioning Size}

This ablation examines the effect of setting the conditioning size for ordering below, at, or above the maximum indegree. 
At $n=100$ and $p\in\{50,100,150,200\}$, we vary only FiCS's conditioning size for ordering, using $q\in\{1,2,3\}$ or the entire selected set, while fixing the parent-selection subset size at two and all other components. Because the design caps indegree at two, $q=1$ can omit a parent at two-parent nodes and therefore falls outside the uniform DAG recovery guarantee, whereas $q\ge 2$ satisfies the indegree condition. 
CaPS is shown as an external reference.

\begin{figure}[H]
\centering
\begin{minipage}[c]{0.375\linewidth}
\centering
\includegraphics[width=\linewidth]{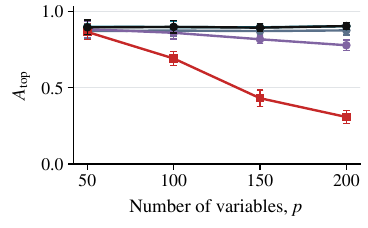}
\end{minipage}\hfill
\begin{minipage}[c]{0.375\linewidth}
\centering
\includegraphics[width=\linewidth]{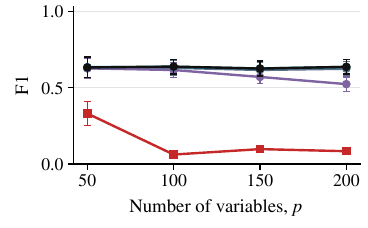}
\end{minipage}\hfill
\begin{minipage}[c]{0.20\linewidth}
\centering
\includegraphics[width=\linewidth]{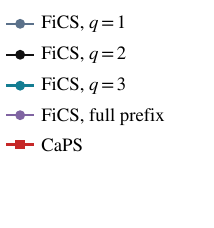}
\end{minipage}
\caption{Mean ordering accuracy (left) and graph-recovery F1 (right) versus dimension at $n=100$.}
\label{fig:exp-conditioning}
\end{figure}

Figure~\ref{fig:exp-conditioning} reveals asymmetric effects of the conditioning size. At $p=200$, $q=1$ attains $A_{\mathrm{top}}=0.876$ and F1 $0.625$, only modestly below $0.904$ and $0.637$ for $q=2$. This small empirical loss is consistent with $q=1$ remaining adequate for roots and one-parent nodes, but it does not restore the DAG recovery guarantee. In contrast, conditioning on the entire selected set yields lower values of $0.779$ and $0.524$, while $q=3$ performs similarly to $q=2$. 
The performance difference widens over the displayed dimensions, consistent with the increasing estimation burden of conditioning on the entire selected set. Because its dimension grows with $p$ whereas fixed-$q$ conditioning remains bounded, this trend suggests a larger disadvantage at still higher dimensions, although those settings are not simulated.
Proposition~\ref{prop:locality-benefit} shows that locality can also improve the population ordering margin for $q\ge d$.

Appendices~\ref{app:exp-graph-results}--\ref{app:linear-fraction} assess sensitivity to graph structure, nonlinear mechanisms, noise distributions, and the linear-mechanism proportion. Appendix~\ref{app:uumc-sortability} compares population and finite-sample ordering with VarSort and $R^2$Sort across six Gaussian generator regimes.

\section{Real-Data Benchmarks} \label{sec:real-data} 

We evaluate whether FiCS ordering transfers beyond synthetic designs and supports reference-graph recovery. Table~\ref{tab:sachs-main} compares Danube~\citep{gnecco2021heavy}, Sachs~\citep{sachs2005}, and 30 CausalRivers subnetworks~\citep{stein2025causalrivers}, using one configuration per method across datasets and $q=2$ for FiCS. We compare the same five methods as in Section~\ref{sec:experiments}. Because the river data are temporally dependent, these experiments assess reference-graph agreement outside the i.i.d.\ ANM setting.

\begin{table}[H]
\centering\footnotesize
\caption{Reference-graph recovery on Danube ($n=500,p=12$), Sachs ($n=853,p=11$), and CausalRivers ($n=500,p=5$). Parentheses give SD over 30 subgraphs. Bold marks the best reported mean per metric; -- denotes no separate ordering stage.}
\label{tab:sachs-main}
\setlength{\tabcolsep}{2pt}
\begin{tabular*}{\linewidth}{@{\extracolsep{\fill}}lrrrrrrrrr@{}}
\toprule
 & \multicolumn{3}{c}{Danube} & \multicolumn{3}{c}{Sachs} & \multicolumn{3}{c}{CausalRivers}\\
\cmidrule(lr){2-4}\cmidrule(lr){5-7}\cmidrule(l){8-10}
Method & $A_{\mathrm{top}}\uparrow$ & F1$\uparrow$ & SHD$\downarrow$ & $A_{\mathrm{top}}\uparrow$ & F1$\uparrow$ & SHD$\downarrow$ & $A_{\mathrm{top}}\uparrow$ & F1$\uparrow$ & SHD$\downarrow$\\
\midrule
FiCS & \textbf{1.000} & \textbf{0.452} & 17 & \textbf{0.529} & \textbf{0.538} & \textbf{11} & \textbf{0.975} (0.076) & \textbf{0.563} (0.165) & 4.53 (1.63)\\ 
CaPS & 0.909 & 0.351 & 36 & \textbf{0.529} & 0.500 & \textbf{11} & 0.733 (0.314) & 0.408 (0.166) & 6.03 (1.43)\\
SCORE & \textbf{1.000} & 0.400 & 33 & \textbf{0.529} & 0.444 & 12 & 0.717 (0.292) & 0.401 (0.175) & 6.20 (1.40)\\ 
NoGAM & 0.727 & 0.225 & 52 & \textbf{0.529} & 0.452 & 14 & 0.742 (0.304) & 0.416 (0.188) & 6.83 (1.66)\\ 
DAGMA & -- & 0.000 & \textbf{11} & -- & 0.000 & 17 & -- & 0.286 (0.337) & \textbf{3.53} (1.25)\\
\bottomrule
\end{tabular*}
\end{table}

Table~\ref{tab:sachs-main} shows that FiCS achieves the highest F1 on Danube and Sachs and the best or tied-best ordering accuracy across all three benchmarks. On Danube, FiCS orders all 11 reference edges forward but selects seven true positives and 13 false positives, showing that correct ordering does not ensure accurate parent selection. DAGMA's lower SHD on Danube and CausalRivers accompanies empty or near-empty graphs with lower F1, so SHD should be interpreted alongside edge recovery.

Appendix~\ref{app:sachs} describes the real-data benchmarks, and Appendix~\ref{app:realdata-sensitivity} reports the sensitivity analysis. Overall, while local Fisher information is especially advantageous in sparse, high-dimensional settings, these results suggest that it also supports effective recovery on low-dimensional benchmarks.

\section{Discussion}

FiCS shows that sparse causal discovery can exploit distributional information without estimating full-dimensional scores. Fisher parent completion characterizes each parent set as the unique minimal Fisher completion, enabling one bounded-dimensional score to drive both source-first ordering and parent recovery. The theory and experiments further show that locality is not merely computational: it can improve the population ordering margin and finite-sample accuracy.

Its DAG recovery guarantee requires $q\ge d$; the method remains sensitive to coordinate scaling and relies on subset search and dense kernel solves. The estimator-agnostic recovery result suggests a clear path forward: combine screening that preserves minimal Fisher completions with faster local score estimators and data-driven, scale-aware tuning. More broadly, Fisher completions may provide a reusable principle for distribution-sensitive causal discovery beyond additive noise models.

\clearpage

\section*{Reproducibility Statement}
Synthetic tuning and test DAGs are generated independently, and all methods are evaluated on the same datasets within each setting. The appendices provide the assumptions and proofs; implementation and hyperparameter selection, experimental design and evaluation metrics, and additional results are detailed in Appendices~\ref{app:exp-details}, \ref{app:experimental-design}, and \ref{app:additional-experiments}, respectively.

\section*{AI Use Statement}
We used ChatGPT (OpenAI) to assist literature retrieval, manuscript drafting and editing, experiment implementation and analysis, and mathematical derivations and proofs. The authors are responsible for verifying the cited sources, code, results, and arguments, and for all claims and content in the final paper.

\clearpage
\bibliography{references}
\bibliographystyle{iclr2027_conference}
\clearpage
\appendix

\section{Proofs of the Main Results}
\label{app:proofs}

This section proves the population and statistical results in the order in which they appear in the main text. Appendix~\ref{app:stein-consistency} also proves the query-uniform concentration lemma used by both the main $L^2$ analysis and Appendix~\ref{app:rkhs-source-result}. Throughout, the notation agrees with Sections~\ref{sec:fisher_parent_completion}--\ref{sec:guarantees}; in particular, $g_q(S)$ and $\Gamma_i(T)$ denote the ordering margin and parent-omission margin, respectively.

\subsection{Proof of Theorem~\ref{thm:parent-completion}}
\label{app:strict-fisher-loss}
\label{app:regularity}

For each local query $(i,T)$, assume that $p(x_i\mid x_T)$ is twice continuously differentiable in $x_i$, its conditional score is square-integrable, and differentiation may pass through the conditional integrals below. Assume that the first and second derivatives are integrable where used and that the boundary terms in integration by parts vanish. Then $\E[\partial_{x_i}^2\log p(X_i\mid X_T)]=-\E[(\partial_{x_i}\log p(X_i\mid X_T))^2]$. These are the score-projection and integration-by-parts conditions used in Section~\ref{sec:fisher_parent_completion}.

\begin{proof}[Proof of Theorem~\ref{thm:parent-completion}]
Fix $i$ and $T\subseteq\nd(i)$, put $Z:=X_T$, $M:=f_i(X_{\pa(i)})$, and write $p_\eps$ and $s_\eps$ for the density and score of $\eps_i$. Since $T$ contains no descendant of $i$, $\eps_i$ is independent of $(M,Z)$. Let $\mu_z$ be a regular conditional distribution of $M$ given $Z=z$. Then
\[ p(x\mid z)=\int p_\eps(x-m)\,\mu_z(dm). \]
Differentiating under the integral and applying Bayes' formula gives, for almost every $(x,z)$,
\[ \partial_x\log p(x\mid z)=\int s_\eps(x-m)\,\pi_{x,z}(dm),\qquad \pi_{x,z}(dm):=\frac{p_\eps(x-m)\mu_z(dm)}{\int p_\eps(x-v)\mu_z(dv)}. \]
Because $p_\eps$ is everywhere positive, $\pi_{x,z}$ and $\mu_z$ are equivalent measures. Evaluating at $x=X_i=M+\eps_i$ yields
\[ \partial_{x_i}\log p(X_i\mid Z)=\E\{s_\eps(\eps_i)\mid X_i,Z\}. \]
The conditional variance decomposition therefore gives
\[ \Gamma_i(T)=I_i^\eps-I_i(T)=\E\!\left[\Var\{s_\eps(\eps_i)\mid X_i,Z\}\right]\ge0, \]
with equality if and only if $s_\eps(\eps_i)$ is $\sigma(X_i,Z)$-measurable.

We next characterize the equality case. If $M$ is $\sigma(Z)$-measurable, then $\eps_i=X_i-M$ is $\sigma(X_i,Z)$-measurable, so $\Gamma_i(T)=0$. Conversely, suppose that $\Gamma_i(T)=0$. For $P_Z$-almost every $z$, the conditional variance under $\pi_{x,z}$ is zero for $P_{X_i\mid Z=z}$-almost every $x$. The mixture density $p(x\mid z)$ is everywhere positive, so this holds for Lebesgue-almost every $x$. Equivalence of $\pi_{x,z}$ and $\mu_z$, followed by Fubini's theorem, gives
\[ s_\eps(x-m)=s_\eps(x-m') \]
for $\mu_z\otimes\mu_z$-almost every $(m,m')$ and Lebesgue-almost every $x$. If $\mu_z$ were not a point mass, two independent draws from $\mu_z$ would differ with positive probability; hence there would exist $m\neq m'$ for which the displayed equality holds for almost every $x$. With $t:=m-m'\neq0$, this implies $s_\eps(y+t)=s_\eps(y)$ almost everywhere. Since $p_\eps$ is positive and $C^1$, the function $y\mapsto\log p_\eps(y+t)-\log p_\eps(y)$ is continuously differentiable with derivative zero almost everywhere and hence everywhere. It is therefore constant, so $p_\eps(y+t)=C p_\eps(y)$ for some $C>0$. Integrating both sides over $y$ forces $C=1$, which would make the positive density $p_\eps$ periodic with nonzero period and therefore nonintegrable. This contradiction shows that $\mu_z$ is a point mass for $P_Z$-almost every $z$. Because $M$ is real-valued and regular conditional laws may be chosen as measurable probability kernels, this is equivalent to the existence of a measurable $m_0$ such that $M=m_0(Z)$ almost surely. Thus
\[ \Gamma_i(T)=0 \quad\Longleftrightarrow\quad M\text{ is }\sigma(Z)\text{-measurable}. \]

It remains to determine when the structural shift is measurable from $X_T$. If $\pa(i)\subseteq T$, then $M$ is directly determined by $X_T$. Conversely, suppose that $T$ omits a parent and write
\[ A:=\pa(i)\cap T,\qquad B:=\pa(i)\setminus T,\qquad C:=T\setminus\pa(i). \]
Then $B\neq\varnothing$, $X_{\pa(i)}=(X_A,X_B)$, and $X_T=(X_A,X_C)$. If $M$ were $\sigma(X_T)$-measurable, there would exist a measurable function $h$ such that
\[ f_i(X_A,X_B)=h(X_A,X_C)\quad\text{a.s.} \]
The positive noise densities imply that $(X_A,X_B,X_C)$ has an everywhere positive joint density, so this equality holds for Lebesgue-almost every $(a,b,c)$. By Fubini's theorem, for almost every $a$ and almost every $c$, the equality holds for almost every $b$. Fix such an $a$ and choose one $c$ from the corresponding full-measure set. Then $f_i(a,b)=h(a,c)$ for almost every $b$, and continuity in $b$ makes $f_i(a,b)$ constant for every $b$. The full-measure set of such $a$ is dense. Hence, for arbitrary $a$ and any $b,b'$, choose a sequence of good points $a_k\to a$; continuity gives $f_i(a,b)=\lim_k f_i(a_k,b)=\lim_k f_i(a_k,b')=f_i(a,b')$. Thus $f_i$ is independent of all coordinates in $B$, contradicting its nonconstant dependence on every parent coordinate. Hence
\[ \Gamma_i(T)=0 \quad\Longleftrightarrow\quad \pa(i)\subseteq T. \]
The Fisher completions are therefore precisely the nondescendant supersets of $\pa(i)$, whose unique inclusion-minimal and unique minimum-cardinality member is $\pa(i)$.
\end{proof}

\subsection{Proof of Theorem~\ref{thm:population-recovery} and Additional Corollary}
\label{app:population-recovery}

\begin{proof}[Proof of Theorem~\ref{thm:population-recovery}]
Fix an ancestral set $S$. Because every $T\subseteq S$ contains only nondescendants of an unselected node $i$, Theorem~\ref{thm:parent-completion} gives $J_i(T)=-I_i^\eps+\Gamma_i(T)$ and hence
\[ Q_i^{(q)}(S) = -I_i^\eps+ \min_{\substack{T\subseteq S,\,|T|=q_S}}\Gamma_i(T). \]
If $i\in\src(S)$, then $\pa(i)\subseteq S$ and $|\pa(i)|\le d\le q$. Hence $\pa(i)$ can be extended to a set $T\subseteq S$ with $|T|=q_S$, and Theorem~\ref{thm:parent-completion} gives $Q_i^{(q)}(S)=-I_i^\eps$. Conversely, if $Q_i^{(q)}(S)=-I_i^\eps$, some admissible $T\subseteq S$ satisfies $\Gamma_i(T)=0$, so Theorem~\ref{thm:parent-completion} gives $\pa(i)\subseteq T\subseteq S$. Therefore
\[ i\in\src(S) \quad\Longleftrightarrow\quad Q_i^{(q)}(S)=-I_i^\eps. \]
If a nonsource remains and $g_q(S)>0$, the definition of the ordering margin gives
\[ \min_{i\notin S\cup\src(S)}Q_i^{(q)}(S) > \min_{s\in\src(S)}Q_s^{(q)}(S), \]
so every minimizer over $i\notin S$ is a current source.

Now fix $i\in\src(S)$. The source characterization gives $Q_i^{(q)}(S)=-I_i^\eps$, while Theorem~\ref{thm:parent-completion} gives, for every $T\subseteq S$ with $|T|\le q$,
\[ J_i(T)=Q_i^{(q)}(S) \quad\Longleftrightarrow\quad \Gamma_i(T)=0 \quad\Longleftrightarrow\quad \pa(i)\subseteq T. \]
If $T=\pa(i)$, then $J_i(T)=Q_i^{(q)}(S)$, while every proper subset $U\subsetneq T$ omits a parent and satisfies $J_i(U)>Q_i^{(q)}(S)$. Conversely, suppose $J_i(T)=Q_i^{(q)}(S)$ and every proper subset $U\subsetneq T$ has $J_i(U)>Q_i^{(q)}(S)$. The displayed equivalence gives $\pa(i)\subseteq T$. If the inclusion were strict, $U=\pa(i)$ would be a proper subset with $J_i(U)=Q_i^{(q)}(S)$, a contradiction. Hence $T=\pa(i)$, which also proves the two uniqueness claims.

Begin the recursion with $S=\varnothing$, which is ancestral. At any stage, if a nonsource remains, the positive ordering margin forces every minimizer to be a current source. If no nonsource remains, every unselected node is already a current source, so any minimizer is valid. Selecting and adding a current source preserves ancestrality, and the recursion therefore produces a topological ordering. The parent characterization then recovers each selected node's true parent set from its predecessors. Thus the procedure recovers the true DAG, and every queried marginal contains at most one target node and $q$ conditioning variables.
\end{proof}

Theorem~\ref{thm:population-recovery} reduces population recovery to positivity of the ordering margin, but checking this margin over every ancestral set may be difficult. A transparent sufficient condition can instead be stated directly in terms of the noise Fisher information along causal paths. The following corollary shows that pathwise nonincrease, including equal noise Fisher information as a special case, guarantees a positive ordering margin.

\begin{corollary}[A sufficient ordering condition]
\label{cor:equal-fisher-recovery}
Under the conditions of Theorem~\ref{thm:parent-completion}, suppose $q\ge d$ and the noise Fisher information does not increase along any directed path. Then $g_q(S)>0$ at every ancestral set with a remaining nonsource, and population FiCS recovers the true DAG. Equal noise Fisher information is a special case.
\end{corollary}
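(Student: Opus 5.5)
The plan is to reduce the claim to Theorem~\ref{thm:population-recovery}. That theorem already gives population recovery once $g_q(S)>0$ at every ancestral set $S$ with a remaining nonsource. So it suffices to show that, for each such $S$ and each nonsource $i\notin S\cup\src(S)$, some current source $s$ satisfies $Q_s^{(q)}(S)<Q_i^{(q)}(S)$. Since $V$ is finite, taking the minimum over nonsources then gives $g_q(S)>0$.

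\textbf{Step 1: a strict lower bound for nonsources.} Fix such an $S$ and a nonsource $i$. By Theorem~\ref{thm:population-recovery}(a),
\[
Q_i^{(q)}(S)=-I_i^\eps+\min_{T\subseteq S,\,|T|=q_S}\Gamma_i(T).
\]
Because $i$ is a nonsource, $\pa(i)\not\subseteq S$, so every admissible $T$ omits a parent. Every such $T$ is a subset of the ancestral set $S$, hence a nondescendant set for $i$. Theorem~\ref{thm:parent-completion} therefore gives $\Gamma_i(T)>0$ for each admissible $T$. There are finitely many admissible $T$, so the minimum is strictly positive, and $Q_i^{(q)}(S)>-I_i^\eps$.

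\textbf{Step 2: locate a source ancestor.} Let $A_i$ be the set of unselected ancestors of $i$, together with $i$ itself. $A_i$ contains an unselected parent of $i$, so it is nonempty. Let $s$ be an element of $A_i$ that is earliest in some topological order. Every parent of $s$ is an ancestor of $i$ that precedes $s$ in that order. By minimality of $s$, no such parent lies in $A_i$, so all parents of $s$ lie in $S$. Hence $s\in\src(S)$. Also $s\neq i$, since $i$ is a nonsource, so there is a directed path from $s$ to $i$.

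\textbf{Step 3: compare the scores.} The pathwise nonincrease hypothesis along this path gives $I_s^\eps\ge I_i^\eps$. Combining this with Theorem~\ref{thm:population-recovery}(a), which gives $Q_s^{(q)}(S)=-I_s^\eps$ for sources, and with Step~1,
\[
\min_{s'\in\src(S)}Q_{s'}^{(q)}(S)\le -I_s^\eps\le -I_i^\eps<Q_i^{(q)}(S).
\]
Minimizing over the finitely many nonsources gives $g_q(S)>0$. Theorem~\ref{thm:population-recovery} then yields recovery of the true DAG.

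Equal noise Fisher information is trivially pathwise nonincreasing, so it is covered as a special case. The only delicate point is the strictness in Step~1. It relies on $T\subseteq S$ consisting of nondescendants of $i$, which holds because $S$ is ancestral and $i\notin S$, and on the finiteness of the collection of admissible $T$, which turns pointwise positivity of $\Gamma_i(T)$ into a positive minimum.
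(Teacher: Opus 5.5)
Your proof is correct and matches the paper's argument step for step. You bound each remaining nonsource by $Q_i^{(q)}(S)>-I_i^\eps$, using Theorem~\ref{thm:parent-completion} and finiteness of the admissible subsets; you find a current source $s$ that is an ancestor of $i$; and you close with the chain $Q_i^{(q)}(S)>-I_i^\eps\ge -I_s^\eps=Q_s^{(q)}(S)$. The only difference is cosmetic: you take $s$ to be a topologically earliest unselected ancestor, which is a tidier way of stating the paper's step of following incoming edges through unselected parents until a current source is reached.
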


\begin{proof}
Fix an ancestral set $S$. If no nonsource remains, every unselected node is a current source, and there is nothing to prove. Otherwise, fix a remaining nonsource $i$. Following incoming edges from $i$ through unselected parents must terminate at a node $s\in\src(S)$, and $s$ is an ancestor of $i$. The pathwise nonincrease assumption gives $I_s^\eps\ge I_i^\eps$. Every admissible subset $T\subseteq S$ for node $i$ omits at least one parent, so Theorem~\ref{thm:parent-completion} gives $\Gamma_i(T)>0$. Because the admissible subset collection is finite,
\[ Q_i^{(q)}(S)>-I_i^\eps\ge-I_s^\eps=Q_s^{(q)}(S)\ge\min_{t\in\src(S)}Q_t^{(q)}(S). \]
This holds for every remaining nonsource $i$. Taking the minimum over them yields $g_q(S)>0$. Theorem~\ref{thm:population-recovery} therefore gives population DAG recovery.
\end{proof}

\subsection{Proof of Proposition~\ref{prop:locality-benefit}}
\label{app:subset-population}

\begin{proof}[Proof of Proposition~\ref{prop:locality-benefit}]
Let $T\subseteq U\subseteq V\setminus\{i\}$ and define $s_{i,A}:=\partial_{x_i}\log p(X_i\mid X_A)$ for $A\in\{T,U\}$. Marginalizing the conditional distribution given $X_U$ over $X_{U\setminus T}$ and differentiating with respect to $x_i$ gives
\[ s_{i,T}=\E[s_{i,U}\mid X_i,X_T]. \]
The conditional variance decomposition and $J_i(A)=-\E[s_{i,A}^2]$ therefore yield
\[ J_i(T)-J_i(U) = \E[s_{i,U}^2]-\E[s_{i,T}^2] = \E[(s_{i,U}-s_{i,T})^2] \ge0. \]
Every subset of $S$ of size $\min\{q_1,|S|\}$ can be extended to a subset of size $\min\{q_2,|S|\}$. Applying the preceding monotonicity identity gives
\[ Q_i^{(q_1)}(S)\ge Q_i^{(q_2)}(S) \]
for every $i\notin S$. If $s\in\src(S)$, both subset sizes admit a set containing $\pa(s)$ because $q_1\ge d$. Theorem~\ref{thm:parent-completion} therefore gives
\[ Q_s^{(q_1)}(S)=Q_s^{(q_2)}(S)=-I_s^\eps. \]
Subtracting the common minimum source score from the minimum nonsource score gives $g_{q_1}(S)\ge g_{q_2}(S)$ whenever a nonsource remains.

\medskip
\noindent\emph{Strictness.}
The nonsource score $Q_i^{(q)}(S)$ can strictly decrease as $q$ increases. Consider
\[ X_1=\eps_1,\qquad X_2=\eps_2,\qquad X_3=\eps_3,\qquad X_4=3X_1+3X_2+\eps_4,\qquad X_5=\tfrac12X_4+2X_3+\eps_5, \]
where the errors are independent centered Gaussians with variances $(1,1,1,2,1/2)$. At the ancestral set $S=\{1,2,3\}$, node $4$ is the only current source and node $5$ is a nonsource. The corresponding population scores are
\begin{center}
\begin{tabular}{lccc}
\toprule
Subset size & $Q_4^{(q)}(S)$ & $Q_5^{(q)}(S)$ & $g_q(S)$\\
\midrule
$q=2$ & $-1/2$ & $-4/13$ & $5/26$\\
$q=3$ & $-1/2$ & $-1$ & $-1/2$\\
\bottomrule
\end{tabular}
\end{center}
Conditioning node $5$ on its best two roots leaves conditional variance $13/4$, whereas conditioning on all three roots leaves conditional variance one. Hence $g_2(S)>g_3(S)$, and increasing the subset size can reverse the population ordering decision.
\end{proof}

\subsection{Proof of Proposition~\ref{prop:uniform-nongaussian}}
\label{app:uniform-separation-proof}

For a scalar random variable $Y$ with differentiable density, write $I(Y):=\E[(\partial_y\log p_Y(Y))^2]$ for its location Fisher information. For each even $p$, consider the chain ANM $\mathcal M_p$ given by
\[ X_1=\eps_1,\qquad X_j=a_jX_{j-1}+\eps_j,\quad 2\le j\le p, \]
where the errors are mutually independent, $\eps_j$ follows the standard Student $t_3$ distribution for odd $j$, and $\eps_j\sim N(0,1)$ for even $j$. Choose the coefficients to satisfy
\[ \frac{13}{40}\le a_j^2\le\frac{33}{100}\quad(j\text{ even}),\qquad \frac14\le |a_j|\le\frac13\quad(j\ge3\text{ odd}). \]
The standard Student $t_3$ distribution has variance $3$ and location Fisher information $2/3$, whereas the standard Gaussian distribution has variance and location Fisher information one.

\begin{proof}
We verify lower bounds on the ordering and parent-omission margins and then the failures of the competing ordering rules. First, for every nonroot node $j$ and every parent-omitting set $T\subseteq\nd(j)$ satisfying $|T|\le1$, we claim that
\[ \Gamma_j(T)\ge\frac{2}{75}. \]
\medskip
\noindent\emph{Lower bound on the parent-omission margin.}
Since $\pa(j)=\{j-1\}$, every parent-omitting set $T\subseteq\nd(j)$ contains only variables among $X_1,\ldots,X_{j-2}$. For $j\ge3$, write
\[ X_j=a_ja_{j-1}X_{j-2}+V_j,\qquad V_j:=a_j\eps_{j-1}+\eps_j. \]
The random variable $V_j$ is independent of $(X_{j-2},X_T)$. Applying the conditional-score projection identity from the proof of Theorem~\ref{thm:parent-completion} to the structural shift $a_ja_{j-1}X_{j-2}$ and noise $V_j$ gives
\[ I_j(T)\le I(V_j). \]
For $j=2$, a parent-omitting set is empty and $X_2=V_2$, so the same inequality holds.

Suppose first that $j$ is odd. Then $\eps_{j-1}\sim N(0,1)$ and $I(\eps_j)=2/3$. Stam's inequality~\citep{stam1959information} gives
\[ \frac{1}{I(V_j)}\ge\frac{1}{I(a_j\eps_{j-1})}+\frac{1}{I(\eps_j)}=a_j^2+\frac32. \]
Since $|a_j|\ge1/4$,
\[ I(V_j)\le\left(a_j^2+\frac32\right)^{-1}\le\left(\frac1{16}+\frac32\right)^{-1}=\frac{16}{25}. \]
Consequently,
\[ \Gamma_j(T)=I_j^\eps-I_j(T)\ge\frac23-\frac{16}{25}=\frac{2}{75}. \]

Now suppose that $j$ is even. Write $\eps_{j-1}=\sqrt{3/U}\,G$, where $U\sim\chi_3^2$ and $G\sim N(0,1)$ are independent. Conditional on $U$, the variable $V_j$ is Gaussian with variance $1+c/U$, where $c:=3a_j^2\ge39/40$. The marginal score of $V_j$ is the conditional expectation of its Gaussian score given $V_j$. Conditional Jensen's inequality therefore yields
\[ I(V_j)\le\E\!\left[\frac{1}{1+c/U}\right]=\E\!\left[\frac{U}{U+c}\right]\le\E\!\left[\frac{U}{U+39/40}\right]. \]
We next verify a convenient rational upper bound for the last expectation. Let
\[ R(u):=25000-8000u+1050u^2-56u^3+u^4. \]
Cauchy--Schwarz gives
\[ \E\!\left[\frac{1}{U+39/40}\right]\ge\frac{\{\E[R(U)]\}^2}{\E[(U+39/40)R(U)^2]}. \]
Using $\E[U^k]=\prod_{\ell=0}^{k}(2\ell+1)$ for $U\sim\chi_3^2$ gives
\[ \E[R(U)]=11815,\qquad \E[(U+39/40)R(U)^2]=\frac{3252350883}{8}. \]
It follows that
\[ \E\!\left[\frac{U}{U+39/40}\right]=1-\frac{39}{40}\E\!\left[\frac{1}{U+39/40}\right]\le\frac{721171976}{1084116961}<\frac23-\frac1{1000}. \]
Therefore,
\[ \Gamma_j(T)=1-I_j(T)\ge1-I(V_j)>\frac13+\frac1{1000}>\frac{2}{75}. \]
The two parity cases establish $\Delta_{\rm par}=2/75$ as a lower bound on the parent-omission margin.

\medskip
\noindent\emph{Lower bound on the ordering margin.}
At every ancestral set $S$ for which a nonsource remains, we have
\[ g_1(S)\ge\frac1{1000}. \]
An ancestral set in a directed chain consists of an initial segment of the chain. Its unique current source $s$ has $Q_s^{(1)}(S)=-I_s^\eps$. For every remaining nonsource $j$, every admissible subset omits its parent, so the lower bounds on the parent-omission margin above give
\[ Q_j^{(1)}(S)-Q_s^{(1)}(S)=I_s^\eps-I_j^\eps+\min_{T\subseteq S,\;|T|=\min\{1,|S|\}}\Gamma_j(T). \]
If $j$ is odd, the right-hand side is at least $2/75$ because $I_s^\eps\ge2/3=I_j^\eps$. If $j$ is even, it is greater than $2/3-1+(1/3+1/1000)=1/1000$. Taking the minimum over nonsources establishes $\Delta_{\rm ord}=1/1000$.

\medskip
\noindent\emph{Failure of the competing ordering rules.}
For a nonleaf $j$, independence and the zero means of the noise scores give
\[ \E[(\partial_j\log p(X))^2] =I_j^\eps+a_{j+1}^2I_{j+1}^\eps. \]
Every odd nonleaf has joint Fisher information at most $2/3+33/100=299/300<1$, whereas the unique leaf $p$ has joint Fisher information one. Thus the first CaPS minimizer is a nonleaf. For LearnDAG, the unique initial source has variance $\Var(X_1)=3$, whereas
\[ \Var(X_2)=3a_2^2+1\le199/100<3. \]
Hence $X_1$ cannot be a first-step minimizer; because it is the unique source, the first selected node is necessarily a nonsource.
The two margin lower bounds are independent of $p$, so the family is uniformly Fisher-separated at $q=1$. Theorem~\ref{thm:population-recovery} gives population DAG recovery, and the preceding calculations give the two claimed first-step failures.
\end{proof}

\subsection{Proof of Theorem~\ref{thm:finite}}
\label{app:finite-recovery}

\begin{proof}[Proof of Theorem~\ref{thm:finite}]
On $\mathcal E(\epsilon)$, the uniform error
\[ U:=\max_{(i,T)\in\mathcal Q_{p,q}}|\widehat J_i(T)-J_i(T)| \]
satisfies $U\le\epsilon$. For an ancestral set $S$, define
\[ \widehat Q_i^{(q)}(S):= \min_{\substack{T\subseteq S,\,|T|=q_S}}\widehat J_i(T). \]
The uniform local score error bound implies
\[ |\widehat Q_i^{(q)}(S)-Q_i^{(q)}(S)|\le\epsilon \]
for every $i\notin S$. Hence the empirical difference between the best nonsource score and the best current-source score is at least $\Delta_{\rm ord}-2\epsilon>0$. The ordering rule therefore selects a current source. Adding it to $S$ preserves ancestrality, so Stage~1 recursively returns a topological ordering.

Fix a node $i$ and let $S$ be its predecessors in this order, so $\pa(i)\subseteq S\subseteq\nd(i)$. Every parent-containing set has population score $-I_i^\eps$. Thus, for any strict parent-containing superset $T\supsetneq\pa(i)$,
\[ \{\widehat J_i(T)+\lambda|T|\} - \{\widehat J_i(\pa(i))+\lambda|\pa(i)|\} \ge\lambda-2\epsilon>0. \]
If $T$ omits a parent, Theorem~\ref{thm:parent-completion} and the lower bound on the parent-omission margin give $J_i(T)-J_i(\pa(i))\ge\Delta_{\rm par}$. Since $|\pa(i)|\le d$, the largest possible reduction in the parent penalty is $d\lambda$, and hence
\[ \{\widehat J_i(T)+\lambda|T|\} - \{\widehat J_i(\pa(i))+\lambda|\pa(i)|\} \ge\Delta_{\rm par}-d\lambda-2\epsilon>0. \]
Therefore $\pa(i)$ is the unique penalized minimizer. The same argument applies to every node, so Stage~2 recovers all parent sets and Algorithm~\ref{alg:fics} returns the true DAG.
\end{proof}

For the covariance-estimation result in Appendix~\ref{app:gaussian-specialization}, write the recovery tolerance of Theorem~\ref{thm:finite} as
\begin{equation}
\epsilon_{\rm alg}:=\frac12\min\{\Delta_{\rm ord},\lambda,\Delta_{\rm par}-d\lambda\}.
\label{eq:estar}
\end{equation}
Uniform local score error strictly below $\epsilon_{\rm alg}>0$ ensures DAG recovery by Algorithm~\ref{alg:fics}.

\subsection{Detailed Operator Formulation of Assumption~\ref{ass:stein-main}}
\label{app:stein-assumptions}

We use the notation of Section~\ref{sec:guarantees} and introduce the operators needed for the proofs below.

Let $\phi_W(x):=k_W(x,\cdot)$, and let $S_W:\mathcal H_W\to L^2(P_W)$ be the canonical inclusion operator. Define
\[ L_W:=S_W^*S_W = \E\{\phi_W(X_W)\otimes\phi_W(X_W)\}, \]
where $\otimes$ denotes the rank-one operator, $\|\cdot\|_{\mathrm{HS}}$ denotes the Hilbert-Schmidt norm, and $\|\cdot\|_{\mathrm{op}}$ denotes the induced operator norm. The integral operator $\mathsf T_W$ in Equation~\eqref{eq:l2-approximation} admits the factorization $\mathsf T_W=S_WS_W^*$ because, for every $h\in L^2(P_W)$,
\[ (S_WS_W^*h)(x) = \E\{k_W(x,X_W)h(X_W)\} = (\mathsf T_Wh)(x). \]
Thus, the integral and factorized representations give the same operator $\mathsf T_W$ and the same value of $A(\rho)$. Notice that $L_W$ and $\mathsf T_W$ are distinct: the former acts on $\mathcal H_W$, whereas the latter acts on $L^2(P_W)$.

Define
\[ \zeta^s_{W,i}:=\E\{\partial_i k_W(X_W,\cdot)\}, \qquad \zeta^r_{W,i}:=\E\{\partial_i^2 k_W(X_W,\cdot)\}. \]
The first- and second-order Stein identities in Assumption~\ref{ass:stein-main} take the operator form
\begin{equation} S_W^*s_{W,i}=-\zeta^s_{W,i}, \qquad S_W^*r_{W,i}=\zeta^r_{W,i}. \label{eq:stein-operator-equations} \end{equation}

All regularity, target-norm, and approximation bounds are those of Assumption~\ref{ass:stein-main}, uniformly over the local query class.

\subsection{Proof of Theorem~\ref{thm:stein-consistency} and Corollary~\ref{cor:stein-dag-consistency}}
\label{app:stein-consistency}

Write $\rho:=\eta/n$, let $\sigma_s:=-1$ and $\sigma_r:=1$, and define
\[ \begin{aligned} \mu_W&:=\E\{\phi_W(X_W)\}, & \widehat\mu_W&:=\frac1n\sum_{a=1}^n\phi_W(X_W^{(a)}),\\ \widehat L_W&:=\frac1n\sum_{a=1}^n\phi_W(X_W^{(a)})\otimes\phi_W(X_W^{(a)}), & \widehat\zeta^s_{W,i}&:=\frac1n\sum_{a=1}^n\partial_i k_W(X_W^{(a)},\cdot),\\ && \widehat\zeta^r_{W,i}&:=\frac1n\sum_{a=1}^n\partial_i^2 k_W(X_W^{(a)},\cdot). \end{aligned} \]
For $u\in\{s,r\}$, define the corresponding operator estimator by
\[ \widehat u_{W,i}:= \sigma_u(\widehat L_W+\rho I)^{-1}\widehat\zeta^u_{W,i}. \]
The representer form of the empirical normal equations gives
\[ (K_W+\eta I_n)\widehat{\mathbf s}_{W,i}=-\mathbf D_{W,i}^{(1)}, \qquad (K_W+\eta I_n)\widehat{\mathbf r}_{W,i}=\mathbf D_{W,i}^{(2)}. \]
Hence the sample-value vectors of these operator estimators coincide with the matrix estimators in Equation~\eqref{eq:fics-estimator}.

\begin{lemma}[Uniform concentration of the empirical Stein quantities]
\label{lem:uniform-forcing}
Under Assumption~\ref{ass:stein-main}, with probability at least $1-\delta$,
\[ \max_{(i,T)\in\mathcal Q_{p,q}} \left\{ \|\widehat L_W-L_W\|_{\mathrm{HS}}, \|\widehat\mu_W-\mu_W\|_{\mathcal H_W}, \|\widehat\zeta^s_{W,i}-\zeta^s_{W,i}\|_{\mathcal H_W}, \|\widehat\zeta^r_{W,i}-\zeta^r_{W,i}\|_{\mathcal H_W} \right\} \le b_{\delta}. \]
\end{lemma}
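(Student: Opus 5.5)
The approach is to treat each of the four empirical quantities as an average of i.i.d.\ random elements of a separable Hilbert space with a deterministic almost-sure norm bound. For each one, apply a Hilbert-space Bernstein (Pinelis-type) inequality, then take a union bound over the query class $\mathcal Q_{p,q}$ and over the four quantities. This union bound is what produces the factor $16N_{p,q}$ inside $\ell_\delta$: four quantities, a two-sided or constant slack, and $N_{p,q}$ queries.

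\textbf{Step 1: almost-sure bounds.} Fix $(i,T)$ with $W=\{i\}\cup T$. By Assumption~\ref{ass:stein-main}, for every $x$ we have $\|\phi_W(x)\|_{\mathcal H_W}\le\kappa_0$, $\|\partial_i k_W(x,\cdot)\|_{\mathcal H_W}\le\kappa_1$, and $\|\partial_i^2k_W(x,\cdot)\|_{\mathcal H_W}\le\kappa_2$. The rank-one operator $\phi_W(x)\otimes\phi_W(x)$ has Hilbert--Schmidt norm $\|\phi_W(x)\|^2_{\mathcal H_W}\le\kappa_0^2$. The Hilbert--Schmidt operators on $\mathcal H_W$ form a separable Hilbert space, since $\mathcal H_W$ is separable for a continuous kernel on $\mathbb R^{|W|}$. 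So each summand $\xi_a$ is a bounded random element of a Hilbert space, with $\|\xi_a\|\le B:=\max\{\kappa_0^2,\kappa_0,\kappa_1,\kappa_2\}$. We also need these elements to be measurable and Bochner integrable, with expectations $L_W$, $\mu_W$, $\zeta^s_{W,i}$, and $\zeta^r_{W,i}$. Measurability follows from continuity of $x\mapsto\phi_W(x)$ and of its derivatives into $\mathcal H_W$, which holds because $k_W\in C^4$. Integrability follows from the uniform bounds.

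\textbf{Step 2: Bernstein in Hilbert space.} For centered i.i.d.\ $\xi_a-\E\xi_a$ with norm at most $2B$ and second moment at most $B^2$, the Pinelis--Sakhanenko inequality gives, with probability at least $1-\delta'$,
\[
\Bigl\|\frac1n\sum_{a}\xi_a-\E\xi_1\Bigr\|\le \frac{4B\log(2/\delta')}{n}+B\sqrt{\frac{2\log(2/\delta')}{n}} .
\]
I would apply this with $\delta'=\delta/(8N_{p,q})$, so that $\log(2/\delta')=\log(16N_{p,q}/\delta)=\ell_\delta$. The resulting deviation is at most $C_B\{\sqrt{\ell_\delta/n}+\ell_\delta/n\}=b_\delta$ with $C_B:=4B$, which depends only on the $\kappa_m$.

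\textbf{Step 3: union bound.} There are $4N_{p,q}$ events. Strictly, $L_W$ and $\mu_W$ depend only on $W$, so they need fewer, but overcounting is harmless. Each event fails with probability at most $\delta/(8N_{p,q})$, so the total failure probability is at most $\delta/2\le\delta$.

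The only delicate point is ensuring that the constant $C_B$ is genuinely uniform over the model sequence and over all queries. This holds because the kernel bounds $\kappa_m$ in Assumption~\ref{ass:stein-main} are uniform, and no dimension-dependent quantity such as effective dimension enters a plain Bernstein bound on the Hilbert--Schmidt norm.
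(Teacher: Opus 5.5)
Your proposal is correct and follows essentially the same route as the paper: bounded centered Hilbert-space-valued summands (using $\|\phi\otimes\phi\|_{\mathrm{HS}}=\|\phi\|_{\mathcal H_W}^2$), Pinelis's Bernstein inequality applied at level $\ell_\delta$, and a union bound over the four quantities and the $N_{p,q}$ queries, which gives failure probability $\delta/2\le\delta$. Your remarks on separability and on Bochner measurability via $k_W\in C^4$ supply details that the paper leaves implicit.
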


\begin{proof}
Fix a local query $(i,T)$ and abbreviate $\phi:=\phi_W(X_W)$. Define the centered Hilbert-valued variables
\[ Y^L:=\phi\otimes\phi-L_W,\qquad Y^\mu:=\phi-\mu_W, \]
\[ Y^s:=\partial_i k_W(X_W,\cdot)-\zeta^s_{W,i},\qquad Y^r:=\partial_i^2 k_W(X_W,\cdot)-\zeta^r_{W,i}. \]
The first variable takes values in the Hilbert space of Hilbert--Schmidt operators, and the remaining three take values in $\mathcal H_W$. All four variables are mean zero. Assumption~\ref{ass:stein-main}, in the detailed notation of Appendix~\ref{app:stein-assumptions}, gives
\[ \|Y^L\|_{\mathrm{HS}}\le2\kappa_0^2,\qquad \|Y^\mu\|_{\mathcal H_W}\le2\kappa_0,\qquad \|Y^s\|_{\mathcal H_W}\le2\kappa_1,\qquad \|Y^r\|_{\mathcal H_W}\le2\kappa_2 \]
almost surely, uniformly over all local queries. Indeed, $\|\phi\otimes\phi\|_{\mathrm{HS}}=\|\phi\|_{\mathcal H_W}^2$, while the norm of each expectation is bounded by the corresponding expected norm.

The Bernstein inequality for sums of independent, mean-zero Hilbert-space-valued random variables~\citep{pinelis1994optimum} therefore implies that, for each of the four empirical averages and every $x>0$,
\[ \Pp\!\left(\left\|\frac1n\sum_{a=1}^nY_a\right\|>C_0\left\{\sqrt{\frac{x}{n}}+\frac{x}{n}\right\}\right)\le2e^{-x}, \]
where $C_0$ depends only on $\kappa_0,\kappa_1,$ and $\kappa_2$. Apply this inequality with $x=\ell_\delta:=\log(16N_{p,q}/\delta)$. A union bound over the four quantities and at most $N_{p,q}$ local queries gives total failure probability at most $8N_{p,q}e^{-\ell_\delta}\le\delta$. Enlarging $C_B$ if necessary makes the common right-hand side equal to $b_\delta=C_B\{\sqrt{\ell_\delta/n}+\ell_\delta/n\}$, proving the lemma.
\end{proof}

\begin{proof}[Proof of Theorem~\ref{thm:stein-consistency}]
Work on the event in Lemma~\ref{lem:uniform-forcing}, and suppose that $b_{\delta}\le\rho\le1$.

\medskip
\noindent\emph{Population ridge approximation and empirical target error.}
Fix a local query and $u\in\{s,r\}$, write $g:=u_{W,i}$, and define the population-regularized target
\[ u_\rho:=(L_W+\rho I)^{-1}S_W^*g = \sigma_u(L_W+\rho I)^{-1}\zeta^u_{W,i}. \]
The identities $L_W=S_W^*S_W$ and $\mathsf T_W=S_WS_W^*$ give
\[ S_Wu_\rho = \mathsf T_W(\mathsf T_W+\rho I)^{-1}g, \qquad \|g-S_Wu_\rho\|_{L^2(P_W)} \le A(\rho). \]
Spectral calculus and the uniform target bound in Assumption~\ref{ass:stein-main} also give
\[ \|u_\rho\|_{\mathcal H_W} \le \frac{M}{2\sqrt\rho}, \]
because the singular values of $(L_W+\rho I)^{-1}S_W^*$ are $\sqrt t/(t+\rho)$ and $\sup_{t\ge0}\sqrt t/(t+\rho)=1/(2\sqrt\rho)$. Subtracting the empirical and population normal equations yields
\[ \widehat u_{W,i}-u_\rho = (\widehat L_W+\rho I)^{-1} \left[ \sigma_u(\widehat\zeta^u_{W,i}-\zeta^u_{W,i}) + (L_W-\widehat L_W)u_\rho \right]. \]
Put $b:=b_{\delta}$. Since $\|\widehat L_W-L_W\|_{\mathrm{op}}\le b\le\rho$, we have $L_W\preceq\widehat L_W+\rho I$ and hence
\[ \|S_W(\widehat L_W+\rho I)^{-1}\|_{\mathrm{op}} \le\rho^{-1/2}. \]
Indeed, with $A_W:=\widehat L_W+\rho I$, the operator inequality $L_W\preceq A_W$ gives $A_W^{-1}L_WA_W^{-1}\preceq A_W^{-1}\preceq\rho^{-1}I$, and the squared norm on the preceding display equals $\|A_W^{-1}L_WA_W^{-1}\|_{\mathrm{op}}$.
The concentration lemma and the bound on $\|u_\rho\|_{\mathcal H_W}$ therefore imply
\[ \|S_W(\widehat u_{W,i}-u_\rho)\|_{L^2(P_W)} \le \rho^{-1/2} \left(b+\frac{bM}{2\sqrt\rho}\right) \le C\frac b\rho. \]
Combining the last display with the population approximation bound gives, uniformly over all local queries and $u\in\{s,r\}$,
\[ \|S_W\widehat u_{W,i}-u_{W,i}\|_{L^2(P_W)} \le A(\rho)+C\frac{b_{\delta}}{\rho}. \]
Using $\|A_W^{-1}\|_{\mathrm{op}}\le1/\rho$ in the same inverse identity gives
\[ \|\widehat u_{W,i}-u_\rho\|_{\mathcal H_W}\le\frac b\rho+\frac{bM}{2\rho^{3/2}}\le1+\frac{M}{2\sqrt\rho}. \]
Together with the bound on $\|u_\rho\|_{\mathcal H_W}$ and $\rho\le1$, this proves
\[ \|\widehat u_{W,i}\|_{\mathcal H_W}\le\frac{C}{\sqrt\rho}. \]

\medskip
\noindent\emph{Propagation to the local Fisher scores.}
Fix a local query and abbreviate $s:=s_{W,i}$, $r:=r_{W,i}$, $\widehat s:=\widehat s_{W,i}$, and $\widehat r:=\widehat r_{W,i}$. The empirical and population scores satisfy
\[ \widehat J_i(T) = \langle\widehat\mu_W,\widehat r\rangle_{\mathcal H_W} - \langle\widehat s,\widehat L_W\widehat s\rangle_{\mathcal H_W}, \qquad J_i(T) = \E\{r_{W,i}(X_W)\}-\|s_{W,i}\|_{L^2(P_W)}^2. \]
Let $e_u:=\|S_W\widehat u-u\|_{L^2(P_W)}$ for $u\in\{s,r\}$. Adding and subtracting the corresponding population expectations gives
\[ |\widehat J_i(T)-J_i(T)| \le b_{\delta}\|\widehat r\|_{\mathcal H_W} + e_r + b_{\delta}\|\widehat s\|_{\mathcal H_W}^2 + e_s(2M+e_s). \]
Here the linear term follows from $|\langle\widehat\mu_W-\mu_W,\widehat r\rangle|\le b_\delta\|\widehat r\|$ and $|\E\{S_W\widehat r-r\}|\le e_r$. For the quadratic term,
\[ \left|\langle\widehat s,\widehat L_W\widehat s\rangle-\|s\|_{L^2(P_W)}^2\right|\le b_\delta\|\widehat s\|_{\mathcal H_W}^2+\left|\|S_W\widehat s\|_{L^2(P_W)}^2-\|s\|_{L^2(P_W)}^2\right|, \]
and the last absolute value is at most $e_s(2M+e_s)$ because $\|S_W\widehat s\|_{L^2(P_W)}\le M+e_s$.
The target bounds above imply $e_s,e_r\le Ce_{\delta}(\rho)$ and $\|\widehat s\|_{\mathcal H_W},\|\widehat r\|_{\mathcal H_W}\le C/\sqrt{\rho}$. Since $\rho\le1$, the empirical-evaluation terms are bounded by $Cb_{\delta}/\rho$, which is absorbed into $Ce_{\delta}(\rho)$ because $b_{\delta}/\rho\le e_{\delta}(\rho)$. Taking the maximum over all local queries therefore gives
\[
\max_{(i,T)\in\mathcal Q_{p,q}}
|\widehat J_i(T)-J_i(T)|
\le
C\left\{
e_{\delta}(\rho)+e_{\delta}(\rho)^2
\right\}.
\]
This is Equation~\eqref{eq:stein-uniform-rate}. Lemma~\ref{lem:uniform-forcing} holds with probability at least $1-\delta$, completing the proof.
\end{proof}

\begin{proof}[Proof of Corollary~\ref{cor:stein-dag-consistency}]
Assumption~\ref{ass:high-dimensional} gives $L=o(n)$ and hence $b\to0$. Fix any $\delta\in(0,1)$. Since $\log N_{p,q}=O(L)$ and $\log(1/\delta)$ is constant, $b_{\delta}\le C_\delta b$ for all sufficiently large $n$. Moreover, $b/\rho\to0$ implies $b_\delta\le\rho$ eventually, so Theorem~\ref{thm:stein-consistency} is applicable at this fixed confidence level. The definitions of $e$ and $R$ therefore give a constant $M_\delta<\infty$ such that $\Pp(U>M_\delta R)\le\delta$ eventually, where
\[ U:= \max_{(i,T)\in\mathcal Q_{p,q}} |\widehat J_i(T)-J_i(T)| = O_{\Pp}(R). \]
Because $\delta$ is arbitrary, this is precisely $U=O_{\Pp}(R)$. Since $R=o(m)$, Slutsky's theorem gives $U/m=(U/R)(R/m)=o_{\Pp}(1)$ and therefore $\Pp(2U<m)\to1$. On the event $2U<m$, Theorem~\ref{thm:finite} applies with $\epsilon=U$, so Algorithm~\ref{alg:fics} returns the true DAG. Hence $\Pp(\widehat G=G)\to1$.
\end{proof}

\section{RKHS Source Analysis for Regularized Kernel Stein Estimation}
\label{app:rkhs-source-result}

The main analysis requires only uniform $L^2$ Stein regularity and approximation, allowing the first- and second-order Stein targets to lie outside their kernel RKHSs. This appendix replaces the abstract approximation condition with the stronger requirement that both targets belong uniformly to source spaces of the RKHS covariance operators. In return, it yields convergence in RKHS norm and an explicit regularization-bias rate. These stronger conclusions are not needed for the main DAG recovery result.

\subsection{Uniform RKHS source regularity and strengthened rates}

For $u\in\{s,r\}$, let $\beta_u\in(0,1]$ denote its source exponent and define $\beta_*:=\min\{\beta_s,\beta_r\}$. For a positive self-adjoint operator $L_W$, let $L_W^{\beta_u}$ denote its spectral power.

\begin{assumption}[Uniform local RKHS source regularity]
\label{ass:stein-rkhs}
Retain the uniform kernel regularity, target-norm bounds, and Stein identities of Assumption~\ref{ass:stein-main}. For each $u\in\{s,r\}$, suppose that the $L^2(P_W)$ target $u_{W,i}$ admits an RKHS representative, denoted by the same symbol, and that there exist a fixed exponent $\beta_u\in(0,1]$ and an element $f^u_{W,i}\in\mathcal H_W$ such that
\[ u_{W,i}=L_W^{\beta_u}f^u_{W,i},\qquad \sup_{n\ge1}\max_{(i,T)\in\mathcal Q_{p,q}}\max_{u\in\{s,r\}}\|f^u_{W,i}\|_{\mathcal H_W}\le F \]
for some $F<\infty$.
\end{assumption}

The source condition implies the uniform approximation requirement $A(\rho)\to0$ in Assumption~\ref{ass:stein-main}. Indeed, the intertwining identity $(\mathsf T_W+\rho I)^{-1}S_W=S_W(L_W+\rho I)^{-1}$ implies $A(\rho)\le C\rho^{\beta_*}$, so the original approximation condition follows automatically. Define
\[
\omega_\delta(\rho)
:=
\rho^{\beta_*}+\frac{b_\delta}{\rho}.
\]

\begin{theorem}[Uniform RKHS error bounds and explicit source rate]
\label{thm:stein-rkhs}
Under Assumption~\ref{ass:stein-rkhs}, set the matrix ridge in Equation~\eqref{eq:fics-estimator} to $\eta_W=n\rho$ for every queried $W$. For every $\delta\in(0,1)$ and $b_{\delta}\le\rho\le1$, with probability at least $1-\delta$,
\[ \max_{(i,T)\in\mathcal Q_{p,q}}\|\widehat u_{W,i}-u_{W,i}\|_{\mathcal H_W}\le C\left\{\rho^{\beta_u}+\frac{b_{\delta}}{\rho}\right\},\qquad u\in\{s,r\}, \]
and

\begin{equation}
\max_{(i,T)\in\mathcal Q_{p,q}}
\left|\widehat J_i(T)-J_i(T)\right|
\le
C\left\{
\omega_{\delta}(\rho)+\omega_{\delta}(\rho)^2
\right\}.
\label{eq:rkhs-score-rate}
\end{equation}
Here $C$ is independent of $n,p,q,\delta$, and $\rho$.
\end{theorem}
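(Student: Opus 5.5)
The plan is to reuse the skeleton of the proof of Theorem~\ref{thm:stein-consistency}, working on the event of Lemma~\ref{lem:uniform-forcing}, and to replace the $L^2$ approximation step with a source-condition bias bound in RKHS norm. Fix a query $(i,T)$ and $u\in\{s,r\}$, and define the population-regularized target $u_\rho:=(L_W+\rho I)^{-1}S_W^*u_{W,i}$. Under the source representation, $S_W^*S_W u_{W,i}=L_Wu_{W,i}$, so $u_\rho=(L_W+\rho I)^{-1}L_W u_{W,i}$ and
\[
u_\rho-u_{W,i}=-\rho(L_W+\rho I)^{-1}L_W^{\beta_u}f^u_{W,i}.
\]
Spectral calculus gives $\sup_{t\ge0}\rho t^{\beta_u}/(t+\rho)\le\rho^{\beta_u}$ for $\beta_u\in(0,1]$. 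Hence $\|u_\rho-u_{W,i}\|_{\mathcal H_W}\le F\rho^{\beta_u}$, and this is the bias term.

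\textbf{Variance term in RKHS norm.} Next I would bound $\|\widehat u_{W,i}-u_\rho\|_{\mathcal H_W}$ using the same inverse identity as before,
\[
\widehat u-u_\rho=(\widehat L_W+\rho I)^{-1}\bigl[\sigma_u(\widehat\zeta^u-\zeta^u)+(L_W-\widehat L_W)u_\rho\bigr].
\]
With $\|(\widehat L_W+\rho I)^{-1}\|_{\mathrm{op}}\le\rho^{-1}$ and Lemma~\ref{lem:uniform-forcing}, this gives the bound $\rho^{-1}b_\delta(1+\|u_\rho\|_{\mathcal H_W})$. The essential improvement over the $L^2$ analysis is that the source condition bounds $\|u_\rho\|_{\mathcal H_W}$ uniformly rather than by $M/(2\sqrt\rho)$: since $\|(L_W+\rho I)^{-1}L_W\|_{\mathrm{op}}\le1$, we get $\|u_\rho\|\le\|u_{W,i}\|_{\mathcal H_W}\le\kappa_0^{2\beta_u}F$. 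The variance term is therefore $Cb_\delta/\rho$. Combining bias and variance and taking the maximum over queries gives the first display; the constants depend only on $\kappa_0$, $F$, and $M$. I expect this uniform bound on $\|u_\rho\|$ to be the main point to check. It needs the RKHS representative to agree with the $L^2$ target, so that $S_W^*u_{W,i}=L_Wu_{W,i}$ in the Stein identities~\eqref{eq:stein-operator-equations}; I would state this identification explicitly.

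\textbf{Propagation to the scores.} I would reuse the decomposition
\[
|\widehat J_i(T)-J_i(T)|\le b_\delta\|\widehat r\|_{\mathcal H_W}+e_r+b_\delta\|\widehat s\|_{\mathcal H_W}^2+e_s(2M+e_s)
\]
from the proof of Theorem~\ref{thm:stein-consistency}. Now $e_u\le\|S_W\|_{\mathrm{op}}\|\widehat u-u\|_{\mathcal H_W}\le\kappa_0C\omega_\delta(\rho)$. The RKHS norms satisfy $\|\widehat u\|_{\mathcal H_W}\le\|u\|_{\mathcal H_W}+C\omega_\delta(\rho)\le C(1+\omega_\delta)$, which is uniformly bounded. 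Consequently the terms $b_\delta\|\widehat r\|$ and $b_\delta\|\widehat s\|^2$ are at most $Cb_\delta(1+\omega_\delta)^2\le C(\omega_\delta+\omega_\delta^2)$, because $b_\delta\le b_\delta/\rho\le\omega_\delta$ and $\omega_\delta\cdot b_\delta\le\omega_\delta^2$. The terms $e_r$ and $e_s(2M+e_s)$ are at most $C(\omega_\delta+\omega_\delta^2)$. Taking the maximum over $\mathcal Q_{p,q}$ then yields~\eqref{eq:rkhs-score-rate} on the event of Lemma~\ref{lem:uniform-forcing}, which has probability at least $1-\delta$. Finally, $\rho^{\beta_u}\le\rho^{\beta_*}$ because $\rho\le1$, and this unifies the two exponents in $\omega_\delta$.
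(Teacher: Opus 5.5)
Your proposal is correct and follows the paper's proof essentially step for step. You use the same bias identity $u_\rho-u_{W,i}=-\rho(L_W+\rho I)^{-1}L_W^{\beta_u}f^u_{W,i}$, and the same inverse identity with the uniform bound $\|u_\rho\|_{\mathcal H_W}\le\|u_{W,i}\|_{\mathcal H_W}\le\kappa_0^{2\beta_u}F$; together these are exactly Lemma~\ref{lem:rkhs-source-stability}. For the propagation step, the paper bounds $\langle\widehat\mu_W,\widehat r\rangle_{\mathcal H_W}-\langle\widehat s,\widehat L_W\widehat s\rangle_{\mathcal H_W}$ against $\langle\mu_W,r\rangle_{\mathcal H_W}-\langle s,L_Ws\rangle_{\mathcal H_W}$ directly in $\mathcal H_W$, whereas you pass through the $L^2$ errors via $e_u\le\kappa_0\|\widehat u-u\|_{\mathcal H_W}$; the two routes are equivalent. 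The only unstated detail is that absorbing the $b_\delta\omega_\delta^2$ term also uses $b_\delta\le\rho\le1$.
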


When $b_{\delta}\to0$, balancing $\rho^{\beta_*}$ and $b_{\delta}/\rho$ gives $\rho\asymp b_{\delta}^{1/(1+\beta_*)}$ and the leading rate $b_{\delta}^{\beta_*/(1+\beta_*)}$. Thus, Theorem~\ref{thm:stein-rkhs} sharpens Theorem~\ref{thm:stein-consistency} when the abstract approximation condition is replaced by the stronger RKHS source condition. In the asymptotic notation of Corollary~\ref{cor:stein-dag-consistency}, put $R_{\rm src}:=b^{\beta_*/(1+\beta_*)}$. If $d\ge1$ and $dR_{\rm src}\to0$, the balanced ridge parameter $\rho\asymp b^{1/(1+\beta_*)}$ and, for example, $\lambda:=\sqrt{R_{\rm src}/d}$ satisfy $R_{\rm src}=o(\lambda)$ and $d\lambda\to0$. 
If $d=0$, taking $\lambda:=\sqrt{R_{\rm src}}$ gives $R_{\rm src}=o(\lambda)$ and $\lambda\to0$, so redundant nonempty parent sets are excluded.
Hence the source condition supplies compatible ridge-parameter and parent penalty sequences whenever $dR_{\rm src}\to0$.

\subsection{RKHS stability under the source condition}
\label{app:rkhs-stability-result}

For $u\in\{s,r\}$, define
\[ \widehat u_{W,i}:=\sigma_u(\widehat L_W+\rho I)^{-1}\widehat\zeta^u_{W,i},\qquad u_{W,i,\rho}:=(L_W+\rho I)^{-1}L_Wu_{W,i}, \]
where $\sigma_s:=-1$ and $\sigma_r:=1$.

\begin{lemma}[RKHS stability under a source condition]
\label{lem:rkhs-source-stability}
On the event of Lemma~\ref{lem:uniform-forcing}, Assumption~\ref{ass:stein-rkhs} implies, uniformly over $(i,T)\in\mathcal Q_{p,q}$ and $u\in\{s,r\}$,
\[ \|u_{W,i,\rho}-u_{W,i}\|_{\mathcal H_W}\le C\rho^{\beta_u},\qquad \|\widehat u_{W,i}-u_{W,i,\rho}\|_{\mathcal H_W}\le C\frac{b_{\delta}}{\rho}. \]
\end{lemma}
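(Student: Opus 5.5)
Both bounds follow from standard spectral calculus for the positive self-adjoint operator $L_W$ on $\mathcal H_W$, together with the concentration event of Lemma~\ref{lem:uniform-forcing}.

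\textbf{Bias term.} By Assumption~\ref{ass:stein-rkhs}, $u_{W,i}=L_W^{\beta_u}f^u_{W,i}$, so
\[
u_{W,i,\rho}-u_{W,i}=-\rho(L_W+\rho I)^{-1}u_{W,i}=-\rho(L_W+\rho I)^{-1}L_W^{\beta_u}f^u_{W,i}.
\]
The operator norm of $\rho(L_W+\rho I)^{-1}L_W^{\beta_u}$ equals $\sup_{t\ge0}\rho t^{\beta_u}/(t+\rho)$. For $\beta_u\in(0,1]$ this supremum is at most $\rho^{\beta_u}$, since $t^{\beta_u}\rho^{1-\beta_u}\le t+\rho$ by weighted AM--GM. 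Hence
\[
\|u_{W,i,\rho}-u_{W,i}\|_{\mathcal H_W}\le F\rho^{\beta_u},
\]
uniformly over all local queries.

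\textbf{Variance term.} I first identify $u_{W,i,\rho}$ with the population-regularized target of the main proof. The Stein identities \eqref{eq:stein-operator-equations} give $\sigma_u\zeta^u_{W,i}=S_W^*u_{W,i}$. Because $u_{W,i}$ now has an RKHS representative, $S_W^*S_Wu_{W,i}=L_Wu_{W,i}$. Therefore
\[
u_{W,i,\rho}=\sigma_u(L_W+\rho I)^{-1}\zeta^u_{W,i}.
\]
Subtracting the empirical and population normal equations, exactly as in the proof of Theorem~\ref{thm:stein-consistency}, gives
\[
\widehat u_{W,i}-u_{W,i,\rho}=(\widehat L_W+\rho I)^{-1}\bigl[\sigma_u(\widehat\zeta^u_{W,i}-\zeta^u_{W,i})+(L_W-\widehat L_W)u_{W,i,\rho}\bigr].
\]
Since $\widehat L_W\succeq0$, we have $\|(\widehat L_W+\rho I)^{-1}\|_{\mathrm{op}}\le\rho^{-1}$. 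On the concentration event, both the forcing difference and $\|L_W-\widehat L_W\|_{\mathrm{op}}$ are at most $b_\delta$, because the operator norm is bounded by the Hilbert--Schmidt norm. This yields
\[
\|\widehat u_{W,i}-u_{W,i,\rho}\|_{\mathcal H_W}\le\frac{b_\delta}{\rho}\bigl(1+\|u_{W,i,\rho}\|_{\mathcal H_W}\bigr).
\]

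\textbf{Main step: uniform bound on $u_{W,i,\rho}$.} The key point is that $\|u_{W,i,\rho}\|_{\mathcal H_W}$ is bounded uniformly, not merely by $M/(2\sqrt\rho)$ as in the $L^2$ analysis; otherwise the rate would degrade to $b_\delta/\rho^{3/2}$. The source condition supplies this. Since $(L_W+\rho I)^{-1}L_W$ has operator norm at most one,
\[
\|u_{W,i,\rho}\|_{\mathcal H_W}\le\|u_{W,i}\|_{\mathcal H_W}\le\|L_W^{\beta_u}\|_{\mathrm{op}}F\le\kappa_0^{2\beta_u}F.
\]
Here $\|L_W\|_{\mathrm{op}}\le\sup_x k_W(x,x)\le\kappa_0^2$. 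This is the step requiring the most care: one must check that the RKHS representative coincides $P_W$-a.e.\ with the $L^2$ Stein target, so that the Stein identity holds in the form $L_Wu_{W,i}=\sigma_u\zeta^u_{W,i}$.

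Substituting this bound gives $\|\widehat u_{W,i}-u_{W,i,\rho}\|_{\mathcal H_W}\le(1+\kappa_0^{2\beta_u}F)\,b_\delta/\rho$. Taking $C:=\max\{F,\,1+\kappa_0^2F\}$, with $\kappa_0^{2\beta_u}\le\max\{1,\kappa_0^2\}$, gives both claimed bounds uniformly over $(i,T)\in\mathcal Q_{p,q}$ and $u\in\{s,r\}$.
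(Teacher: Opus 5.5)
Your proposal is correct and follows the paper's proof essentially step for step. You bound the bias $-\rho(L_W+\rho I)^{-1}L_W^{\beta_u}f^u_{W,i}$ by spectral calculus, you obtain the variance term by subtracting the empirical and population regularized equations, and you use the source-condition bound $\|u_{W,i,\rho}\|_{\mathcal H_W}\le\|u_{W,i}\|_{\mathcal H_W}\le\kappa_0^{2\beta_u}F$ to avoid the $\rho^{-1/2}$ loss. The only slip is cosmetic: your explicit constant should read $\max\{F,\,1+\max\{1,\kappa_0^2\}F\}$, because $\kappa_0^{2\beta_u}$ can exceed $\kappa_0^2$ when $\kappa_0<1$.
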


This lemma isolates the only nontrivial operator-stability step needed for Theorem~\ref{thm:stein-rkhs}. Appendix~\ref{app:auxiliary-lemma-proof} gives its proof and integrates the remaining score-propagation argument directly into the proof of the theorem.

\subsection{Examples inside and outside the RKHS source class}
\label{app:rkhs-source-examples}

The following propositions consider nonlinear Cauchy and linear Gaussian families, respectively.

\begin{proposition}[A connected nonlinear family satisfying the RKHS source condition]
\label{prop:connected-admissible}
Fix $r_0\in\{1,2\}$ and set $q=r_0$. For any $p>r_0$, let the first $r_0$ variables be independent standard Cauchy roots and let
\begin{equation} X_k=\eps_k\quad(k\le r_0),\qquad X_i=\sum_{k=1}^{r_0} b_{ki}\tanh(X_k)+\eps_i\quad(i>r_0),\qquad b_{ki}\in\{-1,1\}, \label{eq:connected-cauchy} \end{equation}
where all errors are independent standard Cauchy variables. Arbitrary node-label permutations are allowed. This connected nonlinear family is uniformly Fisher-separated at $q=r_0$, with positive lower bounds on the ordering and parent-omission margins independent of $p$ and the coefficient signs. Moreover, there is a deterministic finite-dimensional kernel family satisfying Assumption~\ref{ass:stein-rkhs} uniformly, and hence also Assumption~\ref{ass:stein-main}, with $\beta_s=\beta_r=1$.
\end{proposition}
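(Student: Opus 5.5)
The plan has two parts: uniform Fisher separation, then the kernel construction. Both rest on one finiteness observation. Up to label permutation, the roots are the $r_0$ Cauchy sources. Every other node has parent set equal to all roots, so $d=r_0=q$.

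\textbf{Finiteness of configuration types.} For a local query $(i,T)$ with $|T|\le q$, the joint law of $(X_i,X_T)$ depends on the model only through a finite \emph{configuration type}. The type records which coordinates are roots, which roots each root-coordinate equals, how many nonroot coordinates appear, and their sign vectors $(b_{ki})_k\in\{-1,1\}^{r_0}$. Indeed, distinct nonroot nodes share only the roots and have i.i.d.\ Cauchy noises. Since $|T|\le 2$, $r_0\le2$ and the signs are $\pm1$, there are finitely many types. None depends on $p$ or on the labelling. Every bound below will therefore be a minimum or maximum over finitely many fixed laws.

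\textbf{Fisher separation.} Fix a nonroot $i$ and a parent-omitting $T\subseteq\nd(i)$ with $|T|\le q$. Here $T$ may contain other nonroot nodes, which act as noisy proxies for the omitted root. Each admissible type still satisfies the hypotheses of Theorem~\ref{thm:parent-completion}:
\begin{itemize}
\item the Cauchy density is positive and $C^1$ with $I^\eps=1/2$;
\item $x\mapsto\sum_k b_{ki}\tanh(x_k)$ is continuous and nonconstant in each root;
\item $T$ contains no descendant of $i$.
\end{itemize}
Hence $\Gamma_i(T)>0$ for each type. Taking the minimum over the finitely many types gives a constant $\Delta_{\rm par}>0$, independent of $p$ and of the signs.

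For the ordering margin, note that a nonsource remains only when $S$ misses some root. Then the current sources include the missing roots, each with score $-I^\eps=-1/2$ by Theorem~\ref{thm:population-recovery}(a). Every remaining nonroot has $Q_i^{(q)}(S)=-1/2+\min_T\Gamma_i(T)\ge-1/2+\Delta_{\rm par}$, because all noise Fisher informations are equal. So $g_q(S)\ge\Delta_{\rm par}$, and we may take $\Delta_{\rm ord}=\Delta_{\rm par}$.

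\textbf{Kernel family.} For each $W$, I would take $k_W(x,y)=\sum_{j}\varphi_j(x)\varphi_j(y)$, a finite sum. The features $\{\varphi_j\}$ are the Stein targets $s_{W,i},r_{W,i}$ over all configuration types compatible with $|W|$ and all target positions, after discarding linearly dependent ones. This set is finite and known in advance, so $k_W$ is deterministic and does not depend on the unknown labels or signs. The targets then lie in $\mathcal H_W$. On the finite-dimensional space $\mathcal H_W$, the operator $L_W$ is positive definite: the features are continuous and $P_W$ has a positive density, so they are linearly independent in $L^2(P_W)$.

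Consequently $u_{W,i}=L_W f^u_{W,i}$ with $f^u_{W,i}=L_W^{-1}u_{W,i}$, which is the source condition with $\beta_s=\beta_r=1$. The norm $\|f^u_{W,i}\|_{\mathcal H_W}$ is bounded by a maximum over finitely many types, giving $F<\infty$. The same finiteness gives uniform $\kappa_m$ and $M$.

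\textbf{Main obstacle: regularity of the features.} The step I expect to need the most care is checking that each feature is $C^4$ with the derivatives required for the $\kappa_m$ bounds bounded. It also remains to verify the first- and second-order integration-by-parts identities, where the Cauchy tails make boundary terms delicate. I would write $p_W$ as a smooth mixture of products of Cauchy densities over the roots. Then $s_{W,i}$ and $r_{W,i}$, and their $x_i$-derivatives up to order four, are posterior averages of $\partial^m p_\eps/p_\eps$. For the Cauchy density these ratios are uniformly bounded; for example, $|s_\eps(e)|=|2e/(1+e^2)|\le1$. Hence all features and the needed derivatives are bounded. For the boundary terms, $\varphi\,\partial^m p_W$ is bounded by a constant times $p_W$, which decays like $|x_i|^{-2}$, so the boundary terms vanish. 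Since each feature is bounded, the uniform $L^2$ target bound $M$ follows immediately.
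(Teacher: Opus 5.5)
Your proposal is correct and follows essentially the paper's argument: finiteness of the local configuration types, Theorem~\ref{thm:parent-completion} plus a minimum over finitely many types to get $\Delta_{\rm par}$, equal noise Fisher information $1/2$ to get $\Delta_{\rm ord}\ge\Delta_{\rm par}$, and a finite-feature kernel built from all type-wise Stein targets, for which $L_W$ is invertible on $\mathcal H_W$ and hence $\beta_s=\beta_r=1$. The differences are inessential. The paper also adds the constant function to the feature vector. For the separation step it checks directly, via positivity of the joint density and strict monotonicity of $\tanh$, that the omitted-root shift is not $\sigma(X_T)$-measurable, rather than re-verifying the hypotheses of Theorem~\ref{thm:parent-completion}.
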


\begin{proposition}[An $L^2$-admissible family outside the Gaussian-RBF source class]
\label{prop:gaussian-rbf-example}
Fix $r_0\in\{1,2\}$ and set $q=r_0$. Let $X_1,\ldots,X_{r_0}$ be independent standard Gaussian roots and, for every $i>r_0$, let
\[ X_i=\sum_{k=1}^{r_0} b_{ki}X_k+\eps_i,\qquad b_{ki}\in\{-1,1\},\qquad \eps_i\overset{\mathrm{ind}}\sim N(0,1). \]
Use a deterministic Gaussian RBF kernel with fixed positive bandwidth in each local dimension. This family satisfies Assumption~\ref{ass:stein-main} uniformly and is uniformly Fisher-separated at $q=r_0$. However, it does not satisfy Assumption~\ref{ass:stein-rkhs} for this fixed-bandwidth Gaussian RBF kernel family.
\end{proposition}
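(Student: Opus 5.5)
The plan is to exploit that every local marginal in this family is jointly Gaussian with only finitely many covariance patterns. For $W=\{i\}\cup T$ with $|T|\le r_0\le2$, the covariance of $X_W$ is determined by three things: which coordinates are roots, which are children, and the $\pm1$ signs $b_{ki}$. Roots have variance $1$, children have variance $1+r_0$, a child has covariance $b_{ki}$ with root $k$, and two children $i,j$ have covariance $\sum_k b_{ki}b_{kj}$. Up to relabeling there are finitely many such matrices, independent of $p$. Every uniform claim will therefore reduce to a maximum or minimum over a finite list of nondegenerate Gaussian laws in dimension at most three.

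\emph{Uniform Fisher separation.} In the Gaussian case I would use $J_i(T)=-1/\Var(X_i\mid X_T)$ and $I_i^\eps=1$ for every node.
\begin{itemize}
\item \textbf{Ordering margin.} Every child has all $r_0$ roots as parents, so $d=r_0=q$. An ancestral set either omits some root or contains all roots. If all roots are in $S$, every remaining child is a source. So whenever a nonsource remains, $S$ is a proper subset of the roots, the current sources are the missing roots with score $-1$, and every admissible $T\subseteq S$ for a child omits at least one root. This gives $\Var(X_i\mid X_T)\ge2$, hence a score of at least $-1/2$ and $g_{r_0}(S)\ge1/2$.
\item \textbf{Parent-omission margin.} Roots have no parents, so only children need checking. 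For a child $i$ and a parent-omitting $T\subseteq\nd(i)$ with $|T|\le r_0$, $T$ may mix roots and other children. I would enumerate the finitely many sign patterns and compute $\Var(X_i\mid X_T)$ by Schur complements. For example, with $r_0=1$ and $T=\{j\}$ a child, the variance is $3/2$, giving $\Gamma_i=1/3$. For each pattern I would show $\Var(X_i\mid X_T)>1$ strictly; this follows already from Theorem~\ref{thm:parent-completion}, since $f_i$ is nonconstant in each root. Finiteness of the list then makes the minimum of $\Gamma_i(T)$ a positive constant independent of $p$ and the signs.
\end{itemize}

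\emph{Assumption~\ref{ass:stein-main}.}
\begin{itemize}
\item \textbf{Kernel bounds.} The Gaussian RBF kernel with fixed bandwidth is $C^\infty$. The bounds $\sup_x\|\partial_i^mk_W(x,\cdot)\|_{\mathcal H_W}\le\kappa_m$ are explicit constants depending only on the bandwidth and dimension, which is at most $3$.
\item \textbf{Targets.} For Gaussian $p_W$, the target $s_{W,i}$ is a linear polynomial and $r_{W,i}=s_{W,i}^2+\partial_is_{W,i}$ is quadratic. Their $L^2(P_W)$ norms are bounded over the finite covariance list, giving the constant $M$.
\item \textbf{Stein identities.} Integration by parts holds because the Gaussian density has vanishing boundary terms and the kernel derivatives are bounded.
\item \textbf{Approximation.} The Gaussian RBF kernel is universal, so $\mathcal H_W$ is dense in $L^2(P_W)$ and $\mathsf T_W$ has trivial kernel. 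By spectral calculus, $\|\rho(\mathsf T_W+\rho I)^{-1}u\|\to0$ as $\rho\downarrow0$ for each fixed $u$. Taking the maximum over the finitely many (pattern, target) pairs gives uniform convergence $A(\rho)\to0$.
\end{itemize}

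\emph{Failure of the source condition.} Suppose Assumption~\ref{ass:stein-rkhs} held for some query with $\beta_s>0$. Then $s_{W,i}=L_W^{\beta_s}f$ with $f\in\mathcal H_W$ lies in $\mathcal H_W$. Its RKHS representative is continuous and agrees $P_W$-a.e. with the nonzero linear polynomial $s_{W,i}$. Because $P_W$ has positive density, they agree everywhere. But the Gaussian RBF RKHS contains no nonzero polynomial, by the standard characterization of Gaussian RKHS functions, e.g.\ via the Fourier or Taylor series description of Minh or Steinwart--Christmann. This contradiction shows the condition fails; the same argument applies to the quadratic target $r_{W,i}$.

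The main obstacle is the exhaustive $r_0=2$ casework for the parent-omission margin, where $T$ may contain two other children with varying inner products $\sum_kb_{ki}b_{kj}\in\{-2,0,2\}$. I would handle it by reducing all cases, via sign symmetry, to a few Schur-complement computations.
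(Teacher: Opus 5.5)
Your proposal is correct. Its skeleton matches the paper's proof: reduce every clause to finitely many nondegenerate Gaussian local laws of dimension at most three. Several sub-steps, however, take different routes.

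\textbf{Parent-omission margin.} You argue qualitatively: Theorem~\ref{thm:parent-completion} gives $\Gamma_i(T)>0$ for each local type, and finiteness of the type list gives a uniform bound independent of $p$ and the signs. This already makes the $r_0=2$ Schur-complement casework you flag as the main obstacle unnecessary. The paper instead obtains explicit constants without enumeration. After conditioning on the included roots, the leaves in $T$ supply at most $r_{\rm miss}$ noisy measurements $BR+\xi$ of the missing roots, so
\[
\Var(a^\top R\mid BR+\xi)=a^\top(I+B^\top B)^{-1}a\ge \frac{r_{\rm miss}}{1+r_{\rm miss}^2},
\]
which yields $\Delta_{\rm par}=1/3$ or $2/7$.

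\textbf{Ordering margin.} You use the fact that a remaining nonsource forces $S$ to be a proper set of roots, and get $g_{r_0}(S)\ge 1/2$ directly. This is sharper than the paper's bound $g_q(S)\ge\Delta_{\rm par}$, which it inherits from equal noise Fisher information.

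\textbf{Approximation condition $A(\rho)\to0$.} You cite universality, whereas the paper proves $\ker\mathsf T_W=\{0\}$ by Fourier inversion and the strictly positive Gaussian spectral density. If you cite, the property you need is density of $\mathcal H_W$ in $L^2(P_W)$ on all of $\mathbb R^{|W|}$, for example via $c_0$-universality of the Gaussian kernel. Universality on compact domains does not by itself give this when $P_W$ has unbounded support.

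\textbf{Failure of the source condition.} You invoke the deeper fact that the Gaussian RKHS contains no nonzero polynomial. The paper uses the elementary bound $|f(x)|\le\|f\|_{\mathcal H_W}\sqrt{k_W(x,x)}$: every RKHS function is bounded, while $s(x)=-x$ and $r(x)=x^2-1$ are not. That argument also removes the need for your a.e.-to-everywhere continuity step.
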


The constructed kernel in Proposition~\ref{prop:connected-admissible} is not the kernel used in the experiments. Proofs of both propositions are given in Appendix~\ref{app:auxiliary-lemma-proof}.

\section{Linear Gaussian Specialization}
\label{app:gaussian-specialization}

This appendix specializes the estimator-agnostic recovery result in Theorem~\ref{thm:finite} to a Gaussian covariance plug-in estimator; it does not repeat the regularized kernel Stein analysis of Theorem~\ref{thm:stein-consistency}. For a Gaussian marginal on $W=\{i\}\cup T$, with covariance matrix $\Sigma_W:=\operatorname{Cov}(X_W)$,
\[ J_i(T)=-\frac{1}{\Var(X_i\mid X_T)}=-(\Sigma_W^{-1})_{ii}. \]
Because $v\mapsto-1/v$ is increasing, FiCS then agrees with the bounded-subset conditional-variance criterion of \citet[Eq.~(5)]{chen2019equal}. Here $\|B\|_{\mathrm{op}}:=\sup_{\|v\|_2=1}\|Bv\|_2$ is the induced matrix operator norm, consistent with the operator-norm notation used above, and $\|B\|_{\mathrm F}$ is the Frobenius norm.

\begin{corollary}[Finite-sample Gaussian DAG recovery]
\label{cor:gaussian}
Consider a linear Gaussian ANM satisfying the conditions of Theorem~\ref{thm:parent-completion}, with $\max\{1,d\}\le q\le p-1$ and positive lower bounds $\Delta_{\rm ord},\Delta_{\rm par}$ on the ordering and parent-omission margins. Use the parent selection rule with $\lambda>0$ and $d\lambda<\Delta_{\rm par}$, and let $\epsilon_{\rm alg}>0$ be the recovery tolerance in Equation~\eqref{eq:estar}. For every queried $W$, define
\[ \bar X_W:=\frac1n\sum_{a=1}^nX_W^{(a)},\qquad \widehat\Sigma_W:=\frac1n\sum_{a=1}^n(X_W^{(a)}-\bar X_W)(X_W^{(a)}-\bar X_W)^\top,\qquad \widehat J_i^{\rm G}(T):=-(\widehat\Sigma_W^{-1})_{ii}. \]
Here the superscript distinguishes the Gaussian covariance plug-in estimator from the regularized kernel Stein estimator in Equation~\eqref{eq:fics-estimator}; both estimate the same population score $J_i(T)$. Suppose that every $\Sigma_W$ has eigenvalues in $[c_{\min},c_{\max}]$, where $0<c_{\min}\le c_{\max}<\infty$. Then there is a universal constant $C$ such that, for every $\delta\in(0,1)$, FiCS using the scores $\widehat J_i^{\rm G}(T)$ recovers the true DAG with probability at least $1-\delta$ whenever
\begin{equation} n\ge C\left[q\left\{1+\log\!\left(\frac{p}{q}\right)\right\}+\log\!\left(\frac p\delta\right)\right]\left(\frac{c_{\max}}{c_{\min}}\right)^2\left(1+\frac{1}{c_{\min}^2\epsilon_{\rm alg}^2}\right). \label{eq:gaussian-rate} \end{equation}
\end{corollary}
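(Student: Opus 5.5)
The plan is to reduce Corollary~\ref{cor:gaussian} to Theorem~\ref{thm:finite}. It suffices to show that, with probability at least $1-\delta$, the plug-in scores satisfy
\[ \max_{(i,T)\in\mathcal Q_{p,q}}\bigl|\widehat J_i^{\rm G}(T)-J_i(T)\bigr|<\epsilon_{\rm alg}. \]
Theorem~\ref{thm:finite} is estimator-agnostic, so on this event it returns the true DAG. In the Gaussian case $J_i(T)=-(\Sigma_W^{-1})_{ii}$, so everything reduces to uniform control of inverse-covariance entries over at most $N_{p,q}$ submatrices of dimension $|W|\le q+1$.

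\textbf{Step 1 (perturbation).} Fix $W$ and write $E:=\widehat\Sigma_W-\Sigma_W$. Suppose $\|E\|_{\mathrm{op}}\le c_{\min}/2$. Then $\widehat\Sigma_W$ has smallest eigenvalue at least $c_{\min}/2$, and the identity $\widehat\Sigma_W^{-1}-\Sigma_W^{-1}=-\widehat\Sigma_W^{-1}E\Sigma_W^{-1}$ gives
\[ |(\widehat\Sigma_W^{-1})_{ii}-(\Sigma_W^{-1})_{ii}|\le\|\widehat\Sigma_W^{-1}-\Sigma_W^{-1}\|_{\mathrm{op}}\le 2\|E\|_{\mathrm{op}}/c_{\min}^2. \]
So it suffices to have, for every $W$,
\[ \|E\|_{\mathrm{op}}\le t:=\min\{c_{\min}/2,\;c_{\min}^2\epsilon_{\rm alg}/4\}. \]
Relative to $c_{\max}$, this is a relative accuracy of order $(c_{\min}/c_{\max})\min\{1,c_{\min}\epsilon_{\rm alg}\}$.

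\textbf{Step 2 (concentration for one $W$).} I would use the standard Gaussian sample-covariance bound. Write $X_W=\Sigma_W^{1/2}Z$ with $Z$ standard Gaussian in dimension $k=|W|\le q+1$. A $1/4$-net argument on the sphere plus Bernstein for $\chi^2$-type sums gives, with probability $1-2e^{-x}$,
\[ \|E\|_{\mathrm{op}}\le C c_{\max}\Bigl(\sqrt{(k+x)/n}+(k+x)/n\Bigr). \]
The centering by $\bar X_W$ adds a rank-one term $\bar X_W\bar X_W^\top$, whose operator norm is bounded by $c_{\max}(k+x)/n$ on the same type of event.

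\textbf{Step 3 (union and calibration).} The number of distinct $W$ is at most $\sum_{k\le q+1}\binom pk$, whose log is $O(q\{1+\log(p/q)\}+\log p)$. Taking $x=\log(\#W/\delta)$ and union-bounding gives the failure probability $\delta$. The resulting requirement is
\[ C c_{\max}\sqrt{(q+x)/n}\le t \]
(the linear term is then also controlled once $(q+x)/n\le1$). This is
\[ n\gtrsim (q+x)\,c_{\max}^2\max\{4/c_{\min}^2,\;16/(c_{\min}^4\epsilon_{\rm alg}^2)\}. \]
Since $\max\{a,b\}\le a+b$, this equals $(c_{\max}/c_{\min})^2(1+1/(c_{\min}^2\epsilon_{\rm alg}^2))$ times the complexity term, up to constants. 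That matches \eqref{eq:gaussian-rate}, since $q+x\lesssim q\{1+\log(p/q)\}+\log(p/\delta)$.

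The main obstacle is bookkeeping. The constants must stay universal, which requires handling the mean-centering term and the linear $(k+x)/n$ term of the Bernstein bound. The latter is absorbed because the required $n$ already forces $(q+x)/n\le 1$, given $c_{\max}/c_{\min}\ge1$. Everything else is routine.
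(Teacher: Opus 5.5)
Your proposal is correct and follows essentially the same route as the paper. Both arguments use a per-marginal Gaussian sample-covariance bound of order $c_{\max}\{\sqrt{(k+x)/n}+(k+x)/n\}$ and a union bound over the local marginals. Both then apply Weyl's inequality and the inverse-perturbation identity to get score error at most $2c_{\min}^{-2}\|\widehat\Sigma_W-\Sigma_W\|_{\mathrm{op}}$, and calibrate $n$ so that this is at most $\epsilon_{\rm alg}/2$ before invoking Theorem~\ref{thm:finite}. Your extra bookkeeping (unioning over distinct $W$ rather than queries, and handling the mean-centering and linear Bernstein terms explicitly) only spells out what the paper absorbs into its universal constant.
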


Equation~\eqref{eq:gaussian-rate} separates the combinatorial term from the condition-number and recovery-tolerance factors. The tolerance $\epsilon_{\rm alg}$ also depends on the indegree through $d\lambda$; the displayed combinatorial term alone therefore does not establish a minimax rate for growing $d$. Appendix~\ref{app:gaussian-proof} proves the bound.

\section{Proofs of Auxiliary Results}
\label{app:auxiliary-lemma-proof}

This appendix proves the strengthened RKHS results and examples in Appendix~\ref{app:rkhs-source-result}, followed by the Gaussian corollary in Appendix~\ref{app:gaussian-specialization}.

\subsection{Proof of Lemma~\ref{lem:rkhs-source-stability}}

\begin{proof}
Fix a local query, suppress $(W,i)$, and write $u=L^{\beta_u}f^u$. Spectral calculus gives
\[ u_\rho-u=-\rho(L+\rho I)^{-1}L^{\beta_u}f^u. \]
Since $\sup_{t\ge0}\rho t^{\beta_u}/(t+\rho)\le C\rho^{\beta_u}$ for $\beta_u\in(0,1]$, the uniform bound on $\|f^u\|_{\mathcal H}$ proves the first claim. It also gives $\|u_\rho\|_{\mathcal H}\le\|u\|_{\mathcal H}\le C$ because $\|L\|_{\mathrm{op}}\le\kappa_0^2$.

The Stein identity gives $Lu=\sigma_u\zeta^u$. Subtracting the empirical and population regularized equations yields
\[ \widehat u-u_\rho=(\widehat L+\rho I)^{-1}\left[\sigma_u(\widehat\zeta^u-\zeta^u)+(L-\widehat L)u_\rho\right]. \]
On the event of Lemma~\ref{lem:uniform-forcing}, both empirical deviations are at most $b_{\delta}$, while $\|(\widehat L+\rho I)^{-1}\|_{\mathrm{op}}\le1/\rho$. The uniform bound on $\|u_\rho\|_{\mathcal H}$ therefore proves the second claim.
\end{proof}

\subsection{Proof of Theorem~\ref{thm:stein-rkhs}}
\label{app:rkhs-source-proof}

\begin{proof}
Lemma~\ref{lem:uniform-forcing} provides a common event of probability at least $1-\delta$. On this event, Lemma~\ref{lem:rkhs-source-stability} and the triangle inequality give
\[ d_u:=\max_{(i,T)\in\mathcal Q_{p,q}}\|\widehat u_{W,i}-u_{W,i}\|_{\mathcal H_W}\le C\left\{\rho^{\beta_u}+\frac{b_{\delta}}{\rho}\right\},\qquad u\in\{s,r\}. \]

It remains to propagate these target errors to the local Fisher scores. Fix a local query and abbreviate the corresponding targets by $s,r,\widehat s,\widehat r$. The empirical and population scores satisfy
\[ \widehat J_i(T)=\langle\widehat\mu_W,\widehat r\rangle_{\mathcal H_W}-\langle\widehat s,\widehat L_W\widehat s\rangle_{\mathcal H_W},\qquad J_i(T)=\langle\mu_W,r\rangle_{\mathcal H_W}-\langle s,L_Ws\rangle_{\mathcal H_W}. \]
The first difference is bounded by
\[ b_{\delta}\|\widehat r\|_{\mathcal H_W}+\|\mu_W\|_{\mathcal H_W}\|\widehat r-r\|_{\mathcal H_W}, \]
and $\|\mu_W\|_{\mathcal H_W}\le\kappa_0$. For the quadratic term, adding and subtracting $\langle\widehat s,L_W\widehat s\rangle_{\mathcal H_W}$ gives
\[ b_{\delta}\|\widehat s\|_{\mathcal H_W}^2+\|L_W\|_{\mathrm{op}}\|\widehat s-s\|_{\mathcal H_W}\left(\|\widehat s\|_{\mathcal H_W}+\|s\|_{\mathcal H_W}\right). \]
Assumption~\ref{ass:stein-rkhs} and $\|L_W\|_{\mathrm{op}}\le\kappa_0^2$ give $\|u_{W,i}\|_{\mathcal H_W}\le\kappa_0^{2\beta_u}F$, so both $\|s\|_{\mathcal H_W}$ and $\|r\|_{\mathcal H_W}$ are uniformly bounded. Moreover, $\|\widehat u\|_{\mathcal H_W}\le\|u\|_{\mathcal H_W}+d_u$. Hence
\[ \max_{(i,T)\in\mathcal Q_{p,q}}|\widehat J_i(T)-J_i(T)|\le C\left\{d_s+d_r+d_s^2+b_{\delta}(1+d_s+d_r)^2\right\}. \]

Since $b_{\delta}\le\rho\le1$, we have $\omega_{\delta}(\rho)\le2$, $d_s+d_r\le C\omega_{\delta}(\rho)$, and $b_{\delta}\le\omega_{\delta}(\rho)$. Substitution into the preceding display bounds its right-hand side by $C\{\omega_{\delta}(\rho)+\omega_{\delta}(\rho)^2\}$, proving Equation~\eqref{eq:rkhs-score-rate}.
\end{proof}

\subsection{Proof of Proposition~\ref{prop:connected-admissible}}

\begin{proof}
All errors have Fisher information $1/2$. We first verify uniform Fisher separation. Let $i>r_0$ be a leaf and let $T\subseteq\nd(i)$, $|T|\le r_0$, omit at least one parent. The joint density of the roots and any finite collection of leaves is everywhere positive because it is a product of positive Cauchy error densities under the triangular structural transformation. Consequently, conditional on $X_T$, every omitted root has support $\mathbb R$ almost surely. Holding the other roots fixed and varying an omitted root changes $M_i=\sum_{k=1}^{r_0}b_{ki}\tanh(X_k)$ because $b_{ki}\neq0$ and $\tanh$ is strictly monotone. Hence $M_i$ is not $\sigma(X_T)$-measurable. The equality characterization in Theorem~\ref{thm:parent-completion} gives $\Gamma_i(T)>0$.

For fixed $r_0$, a local query contains at most $r_0+1$ variables. Up to coordinate permutations, its law is determined by finitely many root--leaf roles and coefficient-sign patterns, independently of $p$. The resulting parent-omission margins range over a finite collection of positive numbers, whose minimum gives a common lower bound $\Delta_{\rm par}>0$ on the parent-omission margin. At any ancestral set with a remaining nonsource, every current source is an unselected root and every nonsource is a leaf missing at least one root parent. Since the noise Fisher information is $1/2$ at every node, Theorem~\ref{thm:population-recovery} then gives $g_q(S)\ge\Delta_{\rm par}$. Thus the family is uniformly Fisher-separated.

We next construct an RKHS family and verify all parts of Assumption~\ref{ass:stein-rkhs}. For each local dimension $k\le r_0+1$, let $\mathcal D_k$ be the finite collection of marginal densities arising from the local types above. The standard Cauchy density is positive and $C^\infty$, and each fixed-order derivative divided by the density is bounded. All derivatives of $\tanh$ are also bounded. Repeated differentiation of the joint density therefore produces the joint density multiplied by a bounded function. Dominated differentiation under the integral over unobserved roots shows that every $f\in\mathcal D_k$ is positive and $C^\infty$ and that the first- and second-order targets $\partial_j\log f$ and $(\partial_j^2f)/f$, together with all derivatives needed below, are bounded. The same domination shows that the marginal density and its relevant derivatives vanish at infinity, so both Stein integration-by-parts identities hold.

Form a finite feature vector $\psi_k$ from the constant function and all distinct target functions $\partial_j\log f$ and $(\partial_j^2f)/f$ with $f\in\mathcal D_k$ and $j\in[k]$, removing pointwise linear redundancies, and set $k_k(x,y):=\langle\psi_k(x),\psi_k(y)\rangle$. For every queried $W$, use $k_W:=k_{|W|}$ and $\mathcal H_W:=\operatorname{span}(\psi_{|W|})$. This kernel family is deterministic, positive semidefinite, and $C^4$. Bounded derivatives of the finite feature vectors give constants $\kappa_0,\kappa_1,\kappa_2$ that are common to all local types. Every local target belongs to $\mathcal H_W$ and has a uniformly bounded $L^2(P_W)$ norm.

For any local law $P_W$ and nonzero $h\in\mathcal H_W$, removal of pointwise redundancies makes $h$ a nonzero continuous function. Since $P_W$ has an everywhere positive density, $\langle h,L_Wh\rangle_{\mathcal H_W}=\E\{h(X_W)^2\}>0$. Thus $L_W$ is positive definite and invertible on $\mathcal H_W$. There are only finitely many local types, so $\sup_W\|L_W^{-1}\|_{\mathrm{op}}<\infty$. Taking $f^u_{W,i}:=L_W^{-1}u_{W,i}$ gives $u_{W,i}=L_Wf^u_{W,i}$ with a common bound on $\|f^u_{W,i}\|_{\mathcal H_W}$. Hence Assumption~\ref{ass:stein-rkhs} holds with $\beta_s=\beta_r=1$.
\end{proof}

\subsection{Proof of Proposition~\ref{prop:gaussian-rbf-example}}

\begin{proof}
Every queried marginal has dimension at most $r_0+1$. Up to coordinate permutation, its covariance is determined by finitely many root--leaf roles and coefficient-sign patterns. Each such covariance is positive definite because every observed leaf contains an independent unit-variance error. Finiteness of the type collection therefore gives common positive lower and finite upper eigenvalue bounds. Consequently, the Gaussian first-order targets are linear, the second-order targets are quadratic, and their $L^2(P_W)$ norms are uniformly bounded. Gaussian tails justify the first- and second-order Stein identities. For a fixed positive bandwidth in each of the finitely many local dimensions, the Gaussian RBF kernel is deterministic and $C^\infty$, and its feature and first two derivative-feature norms are uniformly bounded by constants $\kappa_0,\kappa_1,$ and $\kappa_2$.

It remains to verify the uniform approximation condition. Fix a local law $P_W$, denote its RBF bandwidth by $h_W>0$, and let $v\in L^2(P_W)$. The finite signed measure $\nu(dx):=v(x)P_W(dx)$ has finite total variation. For the translation-invariant Gaussian kernel, Fourier inversion gives
\[ \langle v,\mathsf T_Wv\rangle_{L^2(P_W)}=c_W\int_{\mathbb R^{|W|}}|\widehat\nu(\omega)|^2\exp\{-h_W^2\|\omega\|^2/2\}\,d\omega, \]
where $c_W>0$ depends only on the bandwidth and dimension. The Gaussian spectral density is strictly positive, so $\mathsf T_Wv=0$ implies $\widehat\nu=0$ and hence $v=0$ in $L^2(P_W)$. 
Thus $\ker(\mathsf T_W)=\{0\}$ and, because $\mathsf T_W$ is self-adjoint, $\overline{\operatorname{Ran}(\mathsf T_W)}=L^2(P_W)$.
Spectral calculus now gives $\|\rho(\mathsf T_W+\rho I)^{-1}u_{W,i}\|_{L^2(P_W)}\to0$ for both targets. Only finitely many local distribution types occur, so taking their maximum preserves convergence and makes all constants uniform. This verifies Assumption~\ref{ass:stein-main}.

The stronger RKHS source condition fails for this kernel family. The Gaussian RBF kernel has bounded diagonal, so every $f$ in its RKHS is bounded by $|f(x)|\le\|f\|_{\mathcal H_W}\sqrt{k_W(x,x)}$. Already for a singleton standard Gaussian root, the targets are $s(x)=-x$ and $r(x)=x^2-1$, both unbounded. They therefore do not belong to this Gaussian RBF RKHS and cannot equal $L_W^{\beta_u}f^u_{W,i}$ for any $\beta_u>0$ and $f^u_{W,i}\in\mathcal H_W$. Thus Assumption~\ref{ass:stein-rkhs} fails specifically for the stated fixed-bandwidth Gaussian RBF kernel family.

Finally, we verify uniform Fisher separation. Suppose a leaf omits $r_{\rm miss}\ge1$ root parents. Conditional on any included roots, the remaining roots form a standard Gaussian vector $R\in\mathbb R^{r_{\rm miss}}$. Because $|T|\le r_0$, the other variables in $T$ provide at most $r_{\rm miss}$ noisy linear measurements $BR+\xi$, where every row of $B$ has entries in $\{-1,1\}$ and $\xi$ has identity covariance. The conditional covariance of $R$ is $(I+B^\top B)^{-1}$. If $a\in\{-1,1\}^{r_{\rm miss}}$ is the target leaf's loading vector, then
\[ \Var(a^\top R\mid BR+\xi)=a^\top(I+B^\top B)^{-1}a\ge\frac{\|a\|_2^2}{1+\|B\|_{\mathrm{op}}^2}\ge\frac{r_{\rm miss}}{1+r_{\rm miss}^2}, \]
because $B$ has at most $r_{\rm miss}$ rows and $\|B\|_{\mathrm{op}}^2\le\|B\|_{\mathrm F}^2\le r_{\rm miss}^2$. The conditional variance of the target leaf is therefore at least $1+r_{\rm miss}/(1+r_{\rm miss}^2)$. Since its noise Fisher information is one, the parent-omission margin is at least $1/3$ for $r_0=1$ and $2/7$ for $r_0=2$. Equal noise Fisher information across nodes then supplies the same positive lower bound on the ordering margin.
\end{proof}

\subsection{Proof of Corollary~\ref{cor:gaussian}}
\label{app:gaussian-proof}

\begin{proof}
Let $k:=|W|\le q+1$. Standard Gaussian concentration for the centered sample covariance implies that, for every $x>0$, with probability at least $1-2e^{-x}$,
\begin{equation} \|\widehat\Sigma_W-\Sigma_W\|_{\mathrm{op}}\le C_0c_{\max}\left\{\sqrt{\frac{k+x}{n}}+\frac{k+x}{n}\right\}. \label{eq:wishart-local} \end{equation}
Apply Equation~\eqref{eq:wishart-local} with $x=\log(2N_{p,q}/\delta)$ and take a union bound over all local queries. With $\ell_\delta:=\log(16N_{p,q}/\delta)$, there is a universal constant $C_2$ such that, with probability at least $1-\delta$,
\begin{equation} \max_{(i,T)\in\mathcal Q_{p,q}}\|\widehat\Sigma_W-\Sigma_W\|_{\mathrm{op}}\le C_2c_{\max}\left\{\sqrt{\frac{\ell_{\delta}}{n}}+\frac{\ell_{\delta}}{n}\right\}. \label{eq:wishart-uniform} \end{equation}

Since $\ell_{\delta}\le C_3[q\{1+\log(p/q)\}+\log(p/\delta)]$, the sample-size condition in Equation~\eqref{eq:gaussian-rate} implies $n\gtrsim\ell_{\delta}(c_{\max}/c_{\min})^2$. After increasing the universal constant if necessary, the right-hand side of Equation~\eqref{eq:wishart-uniform} is at most $c_{\min}/2$. Weyl's inequality then gives $\lambda_{\min}(\widehat\Sigma_W)\ge c_{\min}/2$ simultaneously over all local queries, so every $\widehat\Sigma_W$ is invertible. The inverse perturbation identity yields
\[ \|\widehat\Sigma_W^{-1}-\Sigma_W^{-1}\|_{\mathrm{op}}\le\|\widehat\Sigma_W^{-1}\|_{\mathrm{op}}\|\widehat\Sigma_W-\Sigma_W\|_{\mathrm{op}}\|\Sigma_W^{-1}\|_{\mathrm{op}}\le2c_{\min}^{-2}\|\widehat\Sigma_W-\Sigma_W\|_{\mathrm{op}}. \]
Consequently,
\[ \max_{(i,T)\in\mathcal Q_{p,q}}|\widehat J_i^{\rm G}(T)-J_i(T)|\le2c_{\min}^{-2}\max_{(i,T)\in\mathcal Q_{p,q}}\|\widehat\Sigma_W-\Sigma_W\|_{\mathrm{op}}. \]
The remaining factor in Equation~\eqref{eq:gaussian-rate} makes the last display at most $\epsilon_{\rm alg}/2$ after a further universal enlargement of $C$. Equation~\eqref{eq:estar} and Theorem~\ref{thm:finite} therefore give DAG recovery with probability at least $1-\delta$.
\end{proof}

\FloatBarrier
\section{Implementation Details}\label{app:exp-details}

\subsection{FiCS Implementation}\label{sec:implementation}
\paragraph{Score estimation and numerical solution.} For $W$ with $k$ variables and observations $z_a=X_W^{(a)}$, Equation~\eqref{eq:fics-estimator} uses
\[
\begin{gathered}
k_W(z,z')=h_W^{-1}\exp\{-\|z-z'\|^2/(2h_W^2)\},\qquad (K_W)_{ab}=k_W(z_a,z_b),\\
\eta_W=\max\left\{10^{-10},\frac{r\operatorname{tr}(K_W)}{nk}\right\}.
\end{gathered}
\]
Here $h_W$ is $u>0$ times the median of all $n^2$ pairwise Euclidean distances, including the diagonal, and $r>0$ is the ridge multiplier. One Cholesky factorization serves both derivative systems and all target coordinates of $W$. A nonpositive bandwidth or failed factorization raises an error. The regularized kernel Stein estimator uses all selected observations; synthetic and real-data preprocessing are specified in Appendices~\ref{app:synthetic-generation} and~\ref{app:sachs}, respectively. Section~5 does not directly analyze this empirical bandwidth-and-ridge rule, but gives the local-score accuracy required for exact recovery.

\paragraph{Search and tie breaking.} Ordering uses subsets of size $\min(q,|S|)$; parent selection uses all predecessor subsets of size at most $q$. Parent-objective ties prefer fewer parents, followed by lexicographic order. Parent selection uses parent penalty $\lambda=c\sqrt{\log(p)/n}$, with the parent penalty multiplier $c$ specified in Appendix~\ref{app:configurations}.

\paragraph{Incremental score updates.} Once $|S|\ge q$, retain each remaining candidate's minimum score. After adding node $j$, evaluate only new subsets $U\cup\{j\}$ with $U\subseteq S$ and $|U|=q-1$. Together with the old subsets, these exhaust the $q$-subsets of $S\cup\{j\}$, so updating the minimum preserves the original criterion and deterministic tie breaking. Before $|S|=q$, use the entire selected set. For $A_{p,q}=\sum_{t=0}^{q-1}(p-t)$ and $1\le q<p$, this reduces ordering-score lookups from $A_{p,q}+\binom{p+1}{q+2}$ to $A_{p,q}+\binom{p}{q+1}$, or from $O(p^{q+2})$ to $O(p^{q+1})$ for fixed $q$. It changes neither the distinct kernel factorizations nor parent selection.

\subsection{Baseline Implementations}
\label{app:baseline-implementations}

\paragraph{CaPS and SCORE.}
CaPS~\citep{xu2024caps} and SCORE~\citep{rolland2022score} use their implemented graph-estimation procedures. For the real-data benchmarks, CaPS uses the authors' \texttt{train\_test} function with native R-CAM pruning, and SCORE uses the authors' implementation with CAM pruning. Hyperparameter settings are given in Appendix~\ref{app:configurations}.

\paragraph{NoGAM.}
For the synthetic experiments, NoGAM~\citep{montagna2023nogam} uses five-fold kernel-ridge regression followed by CAM pruning. Its adapter passes the declared RBF parameter $\gamma$ to the regressor; the public implementation mistakenly passes ridge $\alpha$ in that position. The correction is applied to both tuning and test runs. For the real-data benchmarks, NoGAM uses the public DoDiscover \texttt{NoGAM()} implementation with native PyGAM pruning. Numerical routines in these real-data runs are unchanged; only data and path interfaces are adapted. The corresponding settings, including the effective KRR RBF coefficient, are given in Appendix~\ref{app:configurations}.

\paragraph{DAGMA.}
Nonlinear DAGMA uses an MLP with architecture $[p,10,1]$, weight decay $\lambda_2=0.005$, and double precision. It starts with $T=4$ optimization stages; cyclic outputs trigger fresh fits at $T=6,7,8$, retaining the first acyclic output. The same nonlinear implementation and retry rule are used for the synthetic and real-data benchmarks. %

\subsection{Hyperparameter Selection and Configurations}
\label{app:configurations}

\paragraph{Synthetic experiments.}
Tuning uses ten independent DAGs at each of $p=20,50$ and $n=500,1000$, giving 40 inputs disjoint from the test data. Each method has four predetermined configurations in Table~\ref{tab:exp-tuning}. Candidates are ranked by successful-fit fraction, F1 on inputs completed by all tied candidates, SHD$/p$, and finally candidate order. The selected configuration is fixed across all synthetic test dimensions, sample sizes, and robustness conditions. Reference edges are used only to evaluate tuning candidates.

\begin{table}[!htbp]
\centering
\captionsetup{skip=2pt}
\renewcommand{\arraystretch}{0.96}
\setlength{\tabcolsep}{4pt}
\setlength{\aboverulesep}{1.4pt}
\setlength{\belowrulesep}{1.4pt}
\caption{Hyperparameter candidates evaluated on the independent synthetic tuning datasets.}
\label{tab:exp-tuning}
\footnotesize
\begin{tabular}{@{}lllll@{}}
\toprule
Method & Candidate 1 & Candidate 2 & Candidate 3 & Candidate 4\\
\midrule
FiCS $(u,r,c)$ & $(1.5,.5,.2)$ & $(.75,.5,.2)$ & $(1.5,.1,.2)$ & $(1.5,.5,.8)$\\
CaPS $(\lambda_1,\lambda_2)$ & $(50,50)$ & $(2,50)$ & $(2,25)$ & $(5,50)$\\
SCORE $(\eta,\xi)$ & $(.001,.001)$ & $(.001,.05)$ & $(.01,.001)$ & $(.01,.05)$\\
NoGAM $(\alpha,\gamma)$ & $(.01,.1)$ & $(.001,.1)$ & $(.1,.1)$ & $(.01,1)$\\
DAGMA $(\lambda_1,t)$ & $(.02,.3)$ & $(.02,.1)$ & $(.005,.3)$ & $(.005,.1)$\\
\bottomrule
\end{tabular}
\end{table}

The selected configurations are FiCS $(u,r,c)=(1.5,0.5,0.8)$, CaPS $(\lambda_1,\lambda_2)=(50,50)$, SCORE $(\eta,\xi)=(0.001,0.001)$, NoGAM $(\alpha,\gamma)=(0.001,0.1)$, and DAGMA $(\lambda_1,t)=(0.02,0.3)$. Here, $\xi$ is the pruning level and $t$ is the DAGMA edge threshold. CaPS uses $\lambda_1$ for initial parent-score pruning and $\lambda_2$ for subsequent edge addition; SCORE sets $\eta_G=\eta_H=\eta$; and NoGAM fixes its Stein regularization and pruning level at $0.001$.

\paragraph{Real-data experiments.}
FiCS uses bandwidth multiplier $u=0.5$, ridge multiplier $r=1.0$, parent penalty multiplier $c=0.0003$, and $q=\min\{2,p-1\}$ in both stages. The kernel, bandwidth, and matrix ridge are defined in Appendix~\ref{sec:implementation}; parent selection minimizes $\widehat J_i(T)+\lambda|T|$ with $\lambda=c\sqrt{\log(p)/n}$. The rule gives $q=2$ on all three datasets; each search is also restricted to the available predecessors. Appendix~\ref{app:realdata-sensitivity} reports sensitivity to this common configuration on the same benchmark inputs.

CaPS uses $\lambda_1=\lambda_2=50$, pre-pruning and edge addition enabled, Stein regularization $0.001$, and native R-CAM pruning at $0.001$. SCORE uses $\eta_G=\eta_H=0.001$, CAM cutoff $0.001$, variance normalization disabled, and variance as the dispersion statistic. NoGAM uses the public DoDiscover \texttt{NoGAM()} defaults: five folds, KRR regularization $0.01$, $\eta_G=\eta_H=0.001$, and native PyGAM pruning at $\alpha=0.05$ with ten cubic splines. Preliminary neighborhood selection is disabled. The effective KRR RBF coefficient in this implementation is $0.01$. DAGMA uses the synthetic configuration $(\lambda_1,t)=(0.02,0.3)$ and the architecture and retry rule in Appendix~\ref{app:baseline-implementations}.

Each method uses the same configuration and execution rule across the three real-data benchmarks.

\FloatBarrier
\section{Experimental Design}
\label{app:experimental-design}
\label{app:exp-protocol}

\subsection{Synthetic Data Generation}
\label{app:synthetic-generation}
We generate ER3 DAGs by sampling a random topological order and including each admissible forward edge independently with probability $3/(p-1)$. Thus the expected mean total degree is three before imposing the indegree cap of two. When necessary, two parents are retained using independent random priorities. Each nonroot mechanism is linear with probability $0.5$, with a random coefficient sign and magnitude sampled uniformly from $[0.5,1]$, and otherwise is sampled from a zero-mean Gaussian process on the full parent vector with kernel $k(u,v)=\exp(-\|u-v\|^2/2)$. The structural errors are independent $\mathcal N(0,1)$. 

\subsection{Evaluation Metrics}
\label{app:exp-metrics}
For reference edges $E$ and estimated ordering $\widehat\pi$, $A_{\mathrm{top}}=|\{(i,j)\in E:\widehat\pi(i)<\widehat\pi(j)\}|/|E|$ is the fraction of reference edges ordered forward. F1 compares directed edges; a reversal counts as one false positive and one false negative. SHD counts a reversal as one edit. Means and sample standard deviations are computed over valid DAG estimates; a missing graph is not assigned F1 zero. The instances included in these means can differ between methods. 

\subsection{Main Synthetic Benchmark}
\label{app:main-synthetic-benchmark}

Figure~\ref{fig:exp-main-synthetic} evaluates 30 independently generated DAGs at each $p\in\{50,100,200\}$, using the generator in Appendix~\ref{app:synthetic-generation} and nested sample sizes $n\in\{50,100,200,500,1000\}$. We compare FiCS with CaPS, SCORE, NoGAM, and nonlinear DAGMA on the same datasets within each setting. FiCS uses subset size $q=2$ in both ordering estimation and parent selection. Each method retains the configuration selected on independent tuning DAGs in Appendix~\ref{app:configurations} across all test dimensions and sample sizes. Figure~\ref{fig:exp-main-synthetic} reports mean directed-edge F1 and one sample standard deviation using the aggregation rule in Appendix~\ref{app:exp-metrics}.

\subsection{Effect of the Conditioning Size}
\label{app:ordering-subset-limit}

Figure~\ref{fig:exp-conditioning} uses the same 30 DAGs and observations per dimension, with $p\in\{50,100,150,200\}$ and $n=100$. FiCS varies only its conditioning size for ordering over $q=1,2,3$ and the entire selected set; the parent-selection subset size remains two and the parent penalty is $0.8\sqrt{\log(p)/n}$, while the regularized kernel Stein estimator and its tuning remain fixed. CaPS provides an external reference with its native pruning and edge supplementation. For CaPS, $A_{\mathrm{top}}$ uses the order before edge supplementation, whereas F1 evaluates its final DAG.

\subsection{CaPS Numerical Failures}
\label{app:exp-computation}

CaPS returns no DAG when its native GAM pruning encounters a nonfinite covariance matrix and the eigendecomposition fails. Table~\ref{tab:caps-numerical-failures} lists all 79 such numerical failures in Figure~\ref{fig:exp-main-synthetic}.

\begin{table}[!htbp]
\centering\small
\setlength{\tabcolsep}{10pt}
\caption{CaPS numerical failures among 30 attempts per setting in Figure~\ref{fig:exp-main-synthetic}. Only settings with a recorded numerical error are listed.}
\label{tab:caps-numerical-failures}
\begin{tabular}{rrr}
\toprule
$p$ & $n$ & Numerical failures\\
\midrule
50 & 50 & 1\\
100 & 50 & 22\\
100 & 100 & 5\\
200 & 50 & 28\\
200 & 100 & 19\\
200 & 200 & 4\\
\bottomrule
\end{tabular}
\end{table}

In Figure~\ref{fig:exp-conditioning}, the same numerical error occurs in 0, 5, 14, and 19 of 30 CaPS runs at $p=50,100,150,200$, respectively.

\subsection{Runtime Measurements}
\label{app:exp-runtime}

For each setting, runtime is measured by refitting three of the 30 graph--sample inputs used in Figure~\ref{fig:exp-main-synthetic}, with the same inputs and fixed configurations for all methods. Reported wall-clock time covers fitting, including graph construction and pruning, on the same server; data generation and calibration are excluded.

\begin{table}[!htbp]
\centering\small
\setlength{\tabcolsep}{4pt}
\caption{Wall-clock fitting time (minutes) on three paired graph--sample inputs per setting. Entries are medians of three completed fits, except CaPS at $(p,n)=(200,200)$. Its marked value is the time of the single completed fit (1/3), not a median.}
\label{tab:synthetic-runtime}
\begin{tabular*}{\linewidth}{@{\extracolsep{\fill}}rrrrrrr@{}}
\toprule
$p$ & $n$ & FiCS & CaPS & SCORE & NoGAM & DAGMA\\
\midrule
50 & 200 & 0.45 & 0.37 & 0.30 & 0.64 & 5.61\\
50 & 500 & 2.84 & 1.90 & 0.51 & 1.39 & 13.48\\
100 & 200 & 3.65 & 16.72 & 1.15 & 2.72 & 14.52\\
100 & 500 & 23.18 & 8.77 & 1.91 & 5.55 & 42.07\\
200 & 200 & 58.75 & 1025.06$^{\dagger}$ & 11.85 & 23.27 & 173.34\\
200 & 500 & 307.32 & 630.60 & 20.36 & 50.09 & 239.59\\
\bottomrule
\end{tabular*}
\end{table}

Table~\ref{tab:synthetic-runtime} provides an empirical check of the complexity analysis in Section~\ref{sec:fics-computation}. With $q=2$, the number of distinct marginal factorizations is bounded by $\sum_{k=1}^{3}\binom pk=(p^3+5p)/6$, which increases approximately eightfold when $p$ doubles. Correspondingly, increasing $p$ from $50$ to $100$ increases FiCS runtime by factors of $8.1$ and $8.2$ at $n=200$ and $n=500$, respectively, closely matching this prediction. The increase from $p=100$ to $200$ is steeper, at $13.3$--$16.1$ times, so the agreement is strongest over the smaller dimensions. Increasing $n$ from $200$ to $500$ raises runtime by $5.2$--$6.4$ times, below the $15.6$-fold increase associated with cubic factorization cost; thus, the asymptotic $O(n^3)$ cost does not manifest as cubic wall-clock scaling over this sample-size range.

\subsection{Real-Data Benchmarks}
\label{app:sachs}
\label{app:rivers}

All five methods in Table~\ref{tab:sachs-main} receive the same observations in their original measurement units, with no external logarithm or variance standardization. We retain each implementation's internal transformations, including FiCS's mean-centering for score estimation. F1 and SHD are computed against the directed references below; SHD counts a reversal once (Appendix~\ref{app:exp-metrics}).

\paragraph{Data and reference graphs.}
For Danube, we use the 12-station selection and 11-edge spatial reference from the official code of \citet{gnecco2021heavy}. Intersecting dates over its prescribed 32-station pool and retaining June--August before 2010 gives $N=4{,}600$ complete daily observations from 1960--2009. We take 500 equally spaced row indices, $\lfloor\operatorname{linspace}(0,N-1,500)\rfloor$, preserving chronological order. For Sachs, we use all 853 observations of the 11-protein benchmark matrix and its 17-edge reference DAG distributed with CaPS~\citep{sachs2005,xu2024caps}.

For CausalRivers, we use the official East Germany release~\citep{stein2025causalrivers}. Its sampler yields 7,521 connected five-station induced subnetworks. Candidates are ranked by SHA-256 hashes of ordered station identifiers with the prefix \texttt{FiCS-rivers-20260925:}. We apply the official six-hour rounding and mean aggregation, followed by linear interpolation, and take 500 equally spaced jointly finite rows per subnetwork. Eligibility requires at least 500 jointly finite rows and a nonconstant value in every coordinate. We use the first 30 eligible candidates: one constant-input candidate is excluded for every method and replaced by the next ranked candidate. Each reference has four edges. We transpose the release's effect-row labels to the cause-row adjacency convention used throughout this paper.

\paragraph{Scope of evaluation.}
FiCS uses $q=2$ on all three datasets, with the settings in Appendix~\ref{app:configurations}. This is a computational cap, not an estimate of the true maximum indegree: the references have maximum indegree five on Danube, three on Sachs, and one to four across CausalRivers subnetworks. Thus, the cap can exclude reference parent sets.

These experiments assess agreement with biological or spatial reference graphs; they do not verify the theorem's i.i.d. ANM or recovery assumptions. River flows exhibit temporal dependence and may share rainfall drivers or unobserved upstream stations. The 30 CausalRivers subnetworks overlap in stations and observation times. Their reported SD describes variation across subnetworks, not uncertainty from 30 independent datasets.

\FloatBarrier

\section{Additional Experiments}
\label{app:additional-experiments}

\subsection{Sensitivity to Graph Structure, Mechanisms, and Noise}
\label{app:exp-graph-results}

These tests examine whether the main random-DAG results depend on a particular graph family, nonlinear mechanism, or error distribution. Figure~\ref{fig:exp-robustness} changes one factor at a time at $p=50,n=200$, using 20 DAGs per condition and the fixed synthetic configurations. The scale-free (SF) graph family begins with an undirected preferential-attachment graph with two new edges per arriving node, orients its edges according to a random order, and caps the indegree at two. The additive alternative replaces nonlinear GP mechanisms by $\sum_{i\in\pa(j)}b_{ij}\tanh(X_i)$ while retaining the $0.5$ probability of a linear mechanism. Student-$t_5$ errors are scaled by $\sqrt{3/5}$ to have unit variance, and unequal Gaussian scales use noise standard deviation $\exp(U_j)$ with $U_j\sim\operatorname{Uniform}[-\log 2,\log 2]$. These DAGs are not screened for the sufficient recovery conditions.

\begin{figure}[H]
\centering
\includegraphics[width=0.92\linewidth]{method_legend.pdf}\par
\begin{subfigure}[t]{0.325\linewidth}
\centering\includegraphics[width=\linewidth]{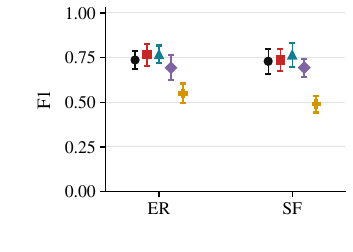}
\caption{Graph family}
\end{subfigure}\hfill
\begin{subfigure}[t]{0.325\linewidth}
\centering\includegraphics[width=\linewidth]{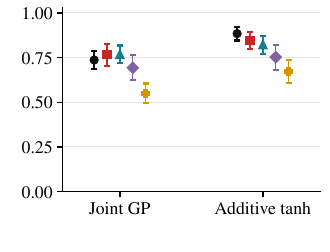}
\caption{Nonlinear mechanism}
\end{subfigure}\hfill
\begin{subfigure}[t]{0.325\linewidth}
\centering\includegraphics[width=\linewidth]{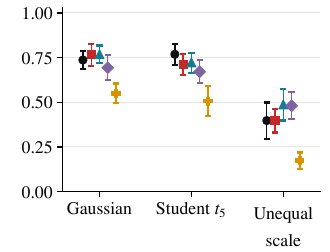}
\caption{Error distribution}
\end{subfigure}
\caption{\textbf{Sensitivity to the data-generating model.} Mean F1 and one-SD error bars over 20 DAGs per condition, at $p=50,n=200$. Each panel changes one factor from the ER3, joint-GP, unit-Gaussian reference condition. Horizontal offsets separate methods within each category.}
\label{fig:exp-robustness}
\end{figure}

FiCS's mean F1 is similar on ER3 and SF graphs ($0.736$ and $0.729$), and increases to $0.884$ for additive nonlinear mechanisms. The unequal-noise-scale condition reduces it to $0.397$, compared with $0.768$ for unit-variance Student-$t_5$ errors. Thus, the method tolerates these graph and mechanism changes better than heterogeneity in Gaussian noise scales; the results do not establish robustness to arbitrary error distributions.

\subsection{Effect of the Linear-Mechanism Proportion}
\label{app:linear-fraction}

The main generator mixes linear and nonlinear mechanisms equally. To determine how this choice affects the comparison, we vary the probability of a linear nonroot mechanism over $\alpha\in\{0,0.25,0.5,0.75,1\}$. The endpoints are all joint-parent GP mechanisms and all linear mechanisms. We retain $p=50,n=200$, maximum indegree two, unit Gaussian noise, and the fixed configurations of all five main methods, with $q=2$ for FiCS. Graphs, coefficient draws, node-type uniforms, and primitive noise are paired across $\alpha$; GP realizations are drawn separately across mechanism conditions. All 20 replicates per method and condition return DAGs (500 fits in total).

\begin{figure}[H]
\centering
\begin{subfigure}[c]{0.375\linewidth}
\centering\includegraphics[width=\linewidth]{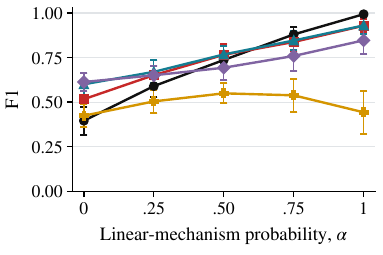}
\caption{F1}
\end{subfigure}\hfill
\begin{subfigure}[c]{0.375\linewidth}
\centering\includegraphics[width=\linewidth]{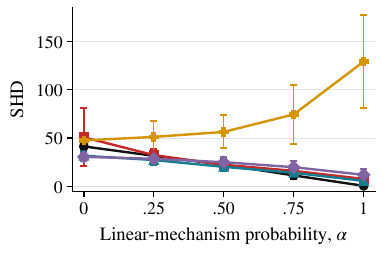}
\caption{SHD}
\end{subfigure}\hfill
\begin{minipage}[c]{0.22\linewidth}
\centering\includegraphics[width=\linewidth]{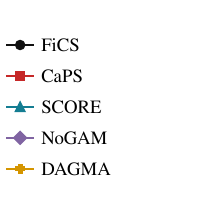}
\end{minipage}
\caption{\textbf{Effect of the linear-mechanism proportion.} Mean and one-SD error bars over all 20 DAGs per condition, at $p=50,n=200$. Increasing $\alpha$ replaces nonlinear GP mechanisms with linear ones; $\alpha=0.5$ matches the main generator. All methods retain their fixed synthetic settings.}
\label{fig:linear-fraction}
\end{figure}

FiCS F1 increases from $0.396$ at $\alpha=0$ to $0.993$ at $\alpha=1$, while SHD decreases from $41.4$ to $0.6$. NoGAM and SCORE perform better at $\alpha=0$ (F1 $0.613$ and $0.600$); FiCS leads at $\alpha=0.75$ and $1$. Thus, performance depends on mechanism composition, and the main mixed-mechanism results should not be read as uniform superiority on purely nonlinear models.

\FloatBarrier

\subsection{Real-Data Sensitivity Analysis}
\label{app:realdata-sensitivity}

We vary one parameter at a time around $(u,r,c)=(0.5,1.0,0.0003)$ and $q_0=\min\{2,p-1\}$, retaining all other parameters and the same 32 inputs. Table~\ref{tab:realdata-sensitivity} varies the score-estimation parameters and subset size. 
Table~\ref{tab:realdata-parent-multiplier} varies only the parent penalty multiplier.

\paragraph{Bandwidth multiplier, ridge multiplier, and subset size.}
We evaluate $u,r\in\{0.25,0.5,1\}$ and $q\in\{q_0,\min(3,p-1),\min(5,p-1),p-1\}$ with $c=0.0003$. Changing $q$ changes both ordering estimation and parent selection; it is not an isolated parent-selection comparison. For CausalRivers, $p=5$, so $\min\{5,p-1\}$ and $p-1$ are the same setting. Changes in the bandwidth and ridge multipliers give mean CausalRivers F1 between $0.520$ and $0.563$. On Sachs, changing $u$ from $0.5$ to $0.25$ or $1.0$ reduces F1 from $0.538$ to $0.429$ or $0.417$. Larger $q$ improves recall on Danube and CausalRivers but adds false positives; F1 is nonmonotone and SHD increases relative to $q_0$.

\begin{table}[!htbp]
\centering\small
\caption{Sensitivity to bandwidth multiplier, ridge multiplier, and subset size on all three datasets. The common row uses $(u,r,c)=(0.5,1.0,0.0003)$ and $q=\min\{2,p-1\}$. Each other row changes only its named parameter. CausalRivers values are means over the same 30 subnetworks.}
\label{tab:realdata-sensitivity}
\label{tab:sachs-sensitivity}
\setlength{\tabcolsep}{4pt}
\begin{tabular*}{\linewidth}{@{\extracolsep{\fill}}llrrrrrr@{}}
\toprule
 & & \multicolumn{2}{c}{Danube} & \multicolumn{2}{c}{Sachs} & \multicolumn{2}{c}{CausalRivers}\\
\cmidrule(lr){3-4}\cmidrule(lr){5-6}\cmidrule(l){7-8}
Parameter & Value & F1$\uparrow$ & SHD$\downarrow$ & F1$\uparrow$ & SHD$\downarrow$ & F1$\uparrow$ & SHD$\downarrow$\\
\midrule
Common configuration & -- & 0.452 & 17 & 0.538 & 11 & 0.563 & 4.53\\
\midrule
Bandwidth multiplier $u$ & 0.25 & 0.467 & 16 & 0.429 & 12 & 0.548 & 4.77\\
 & 1.0 & 0.467 & 16 & 0.417 & 13 & 0.520 & 4.93\\
\midrule
Ridge multiplier $r$ & 0.25 & 0.500 & 16 & 0.444 & 14 & 0.554 & 4.63\\
 & 0.5 & 0.452 & 17 & 0.500 & 13 & 0.551 & 4.67\\
\midrule
Subset size $q$ & $\min\{3,p-1\}$ & 0.474 & 20 & 0.500 & 13 & 0.579 & 5.03\\
 & $\min\{5,p-1\}$ & 0.383 & 29 & 0.500 & 13 & 0.562 & 5.63\\
 & $p-1$ & 0.353 & 33 & 0.500 & 13 & 0.562 & 5.63\\
\bottomrule
\end{tabular*}
\end{table}

\paragraph{Parent penalty multiplier.}
We vary $c\in\{0.0001,0.0003,0.0005,0.001,0.002,0.003\}$ in $\lambda=c\sqrt{\log(p)/n}$, fixing $u=0.5$, $r=1.0$, and $q=q_0$. Changing $c$ affects parent selection but leaves the ordering and $A_{\mathrm{top}}$ unchanged.

\begin{table}[!htbp]
\centering\small
\caption{Sensitivity to the parent penalty multiplier $c$ in $\lambda=c\sqrt{\log(p)/n}$. Bandwidth and ridge multipliers remain $0.5$ and $1.0$, respectively, and $q=\min\{2,p-1\}$ in both stages. Bold $c$ identifies the main-table setting. Edges is the number of selected edges; CausalRivers entries are means over the same 30 subnetworks.}
\label{tab:realdata-parent-multiplier}
\setlength{\tabcolsep}{3pt}
\begin{tabular*}{\linewidth}{@{\extracolsep{\fill}}lrrrrrrrrr@{}}
\toprule
 & \multicolumn{3}{c}{Danube} & \multicolumn{3}{c}{Sachs} & \multicolumn{3}{c}{CausalRivers}\\
\cmidrule(lr){2-4}\cmidrule(lr){5-7}\cmidrule(l){8-10}
$c$ & F1$\uparrow$ & SHD$\downarrow$ & Edges & F1$\uparrow$ & SHD$\downarrow$ & Edges & F1$\uparrow$ & SHD$\downarrow$ & Edges\\
\midrule
0.0001 & 0.438 & 18 & 21 & 0.500 & 11 & 11 & 0.567 & 4.57 & 6.80\\
\textbf{0.0003} & 0.452 & 17 & 20 & 0.538 & 11 & 9 & 0.563 & 4.53 & 6.63\\
0.0005 & 0.452 & 17 & 20 & 0.538 & 11 & 9 & 0.534 & 4.67 & 6.37\\
0.001 & 0.308 & 18 & 15 & 0.538 & 11 & 9 & 0.514 & 4.73 & 6.17\\
0.002 & 0.308 & 18 & 15 & 0.560 & 10 & 8 & 0.510 & 4.77 & 6.13\\
0.003 & 0.240 & 19 & 14 & 0.560 & 10 & 8 & 0.512 & 4.73 & 6.10\\
\bottomrule
\end{tabular*}
\end{table}

Increasing $c$ produces sparser graphs but has different effects across datasets (Table~\ref{tab:realdata-parent-multiplier}). From $c=0.0001$ to $0.003$, Danube F1 decreases from $0.438$ to $0.240$, whereas Sachs F1 increases from $0.500$ to $0.560$; CausalRivers F1 ranges from $0.510$ to $0.567$ across the grid. The common setting $c=0.0003$ therefore does not maximize F1 on every dataset.

\FloatBarrier

\subsection{Gaussian Generator Regimes and Scope of Ordering Guarantees}
\label{app:uumc-sortability}

To examine the scope of the population ordering guarantee beyond the main generator, we use the six Gaussian coefficient and noise-scale regimes of UUMCdata~\citep{herman2025uumc}, version 0.0. Each regime contains 30 Erd\H{o}s--R\'enyi DAGs with $p=20$ and edge probability $0.1$. We do not screen graphs for the theorem's recovery conditions. FiCS uses subset size $q=5$, the maximum indegree in this cohort; this subset size makes every true parent set admissible but does not ensure a positive ordering margin.

Table~\ref{tab:uumc-population-atop} compares analytic population ordering with ordering at $n=100$. FiCS uses $q=5$ and its fixed synthetic estimator settings for the finite-sample results; VarSort and $R^2$Sort use empirical variance and linear-$R^2$ scores. Only ordering accuracy is compared. In 50--50 and iSCM, sample-size-dependent normalization changes the generated SCM, so the two columns are separate regime comparisons rather than a learning curve for a fixed SCM.

At the population level, FiCS has the highest $A_{\mathrm{top}}$ in four of the six regimes; $R^2$Sort leads in DaO and UUMC. FiCS orders all true edges correctly under unit-variance noise but does not do so in the other five regimes. Thus a subset size at least the maximum indegree does not imply the positive ordering margin required by Theorem~\ref{thm:population-recovery}.

\begin{table}[H]
\centering\small
\setlength{\tabcolsep}{3.8pt}
\caption{Mean ordering accuracy $A_{\mathrm{top}}$ across six linear Gaussian generator regimes ($p=20$; 30 graphs per regime and setting). Population criteria use exact SCM covariances; finite-sample criteria use $n=100$. Higher is better; bold marks the best method within each regime and setting.}
\label{tab:uumc-population-atop}
\begin{tabular*}{\linewidth}{@{\extracolsep{\fill}}lrrrrrr@{}}
\toprule
& \multicolumn{3}{c}{Population $A_{\mathrm{top}}$} & \multicolumn{3}{c}{$n=100$ $A_{\mathrm{top}}$} \\
\cmidrule(lr){2-4}\cmidrule(l){5-7}
Generator & FiCS & VarSort & $R^2$Sort & FiCS & VarSort & $R^2$Sort \\
\midrule
50--50 & \textbf{0.482} & 0.282 & 0.324 & \textbf{0.565} & 0.520 & 0.359 \\
DaO & 0.320 & 0.114 & \textbf{0.535} & 0.353 & 0.293 & \textbf{0.543} \\
IPA & \textbf{0.478} & 0.457 & 0.475 & \textbf{0.510} & 0.476 & 0.481 \\
UUMC & 0.465 & 0.468 & \textbf{0.492} & \textbf{0.493} & 0.471 & 0.482 \\
iSCM & \textbf{0.499} & 0.487 & 0.468 & \textbf{0.526} & 0.510 & 0.482 \\
Unit-variance noise & \textbf{1.000} & 0.944 & 0.611 & \textbf{0.986} & 0.945 & 0.609 \\
\bottomrule
\end{tabular*}
\end{table}

\FloatBarrier

\end{document}